\documentclass[twoside]{article}

\usepackage[accepted]{aistats2026}
\usepackage[round]{natbib}

\usepackage{amsmath}
\usepackage{amssymb}
\usepackage{amsthm}
\usepackage{mathtools}
\usepackage{graphicx}
\newtheorem{theorem}{Theorem} 
\newtheorem{lemma}[theorem]{Lemma}
\newtheorem{proposition}[theorem]{Proposition}
\newtheorem{assumption}[theorem]{Assumption}
\newtheorem{remark}{Remark} 
\usepackage{braket} 
\usepackage{algorithm}
\usepackage{algorithmic}
\usepackage{hyperref}       
\usepackage{url}    
\usepackage{color}

\newcommand{\R}{\mathbb{R}}
\newcommand{\E}{\mathbb{E}}
\newcommand{\N}{\mathbb{N}}
\newcommand{\T}{^\mathsf{T}}

\newcommand{\e}{\mathrm{e}}
\newcommand*\abs[1]{\left\lvert#1\right\rvert}
\newcommand*\norm[1]{\left\|#1\right\|}

\newcommand{\iid}{\overset{\mathrm{i.i.d.}}{\sim}}

\newcommand{\wh}{\widehat}
\newcommand{\wt}{\widetilde}

\newtheorem{corollary}[theorem]{Corollary}

\newtheorem{notes}{Notes}[section]

\DeclareMathOperator\dif{d\!}

\DeclareMathOperator\var{\mathrm{var}}
\DeclareMathOperator\cov{\mathrm{cov}}
\DeclareMathOperator\Cov{\mathrm{Cov}}
\DeclareMathOperator\MSE{\mathrm{MSE}}

\DeclareMathOperator*{\argmin}{arg\,min}

\begin{document}

%

%
\runningauthor{Ziyang Wei, Wanrong Zhu, Jingyang Lyu, Wei Biao Wu}
\twocolumn[

\aistatstitle{Refining Covariance Matrix Estimation in Stochastic Gradient Descent Through Bias Reduction}

\aistatsauthor{ Ziyang Wei \And Wanrong Zhu \And Jingyang Lyu }

\aistatsaddress{ Department of Statistics\\
  University of Chicago\\
  Chicago, IL 60637, USA \\
  \texttt{ziyangw@uchicago.edu} 
  \And 
  Department of Statistics \\ 
  University of California, Irvine\\ 
  Irvine, CA 92617, USA\\
  \texttt{wanronz1@uci.edu} 
  \And 
  Department of Statistics\\
  University of Wisconsin–Madison \\
  Madison, WI 53706, USA\\
  \texttt{jlyu55@wisc.edu} } 

\aistatsauthor{ Wei Biao Wu }

\aistatsaddress{ Department of Statistics\\
  University of Chicago\\
  Chicago, IL 60637, USA \\
  \texttt{wbwu@uchicago.edu} } 
]

\begin{abstract}

  We study online inference and asymptotic covariance estimation for the stochastic gradient descent (SGD) algorithm. While classical methods—such as plug-in and batch-means estimators—are available, they either require inaccessible second-order (Hessian) information or suffer from slow convergence. To address these challenges, we propose a novel, fully online de-biased covariance estimator that eliminates the need for second-order derivatives while significantly improving estimation accuracy. Our method employs a bias-reduction technique to achieve a convergence rate of $n^{(\alpha-1)/2} \sqrt{\log n}$, outperforming existing Hessian-free alternatives. 
\end{abstract}

\section{INTRODUCTION}
Stochastic gradient descent (SGD), also known as the Robbins–Monro algorithm \citep{robbins1951stochastic}, has become a cornerstone of large-scale machine learning due to its simplicity and notable practical performance \citep{bottou2018optimization}. 
Consider the classic model-parameter estimation setting, where the true model parameter $x^{*}\in \R^{d}$ is characterized as the minimizer of a convex objective function $F(x)$ from $\R^{d}$ to $\R$, i.e., 
\begin{equation}\label{eq:obj}
x^{*} = \underset{x\in \R^{d}}{\arg\min} F(x).
\end{equation}
The objective function $F(x)$ is defined as the expectation of a random loss function $f(x, \xi)$, that is, $F(x) = \E_{\xi \sim \Pi}{f(x, \xi)}$, where $\xi$ is a random variable representing data drawn from the distribution $\Pi$. 
With initial point $x_{0}$, the $i$-th iteration of the SGD algorithm takes the following form
\begin{equation}\label{eq:SGDite}
x_{i} = x_{i-1} - \eta_{i}\nabla{f(x_{i-1}, \xi_{i})},\ i\ge 1,
\end{equation}
where $\{\xi_{i}\}_{i\ge 1}$ is a sequence of \emph{i.i.d} samples from the distribution $\Pi$, $\nabla{f}$ is the gradient of $f(x, \xi)$ with respect to the first argument $x$, and $\eta_{i}$ is the step size at the $i$-th step. 

As the SGD algorithm progresses and converges, its iterates often resemble noisy approximations of the true optimum. Therefore, it is important not only to evaluate convergence but also to assess the reliability of these estimates by understanding the asymptotic distribution and variability of the iterates. In the foundational work of \citet{polyak1992acceleration}, it is shown that, when the step size decays polynomially, i.e., $\eta_{i} = \eta i^{-\alpha}$ with $\eta>0$ and $\alpha\in(0.5, 1)$,  the average of all past SGD iterates, $\bar{x}_{n} = n^{-1}\sum_{i=1}^{n}x_{i}$, exhibits asymptotic normality under suitable conditions:
\begin{equation}\label{eq:asym_norm}
\sqrt{n}(\bar{x}_{n} - x^{*}) \Rightarrow  N(0, \Sigma),
\end{equation}
where $$\Sigma =\nabla^{2}F(x^{*})^{-1}\E\left([\nabla f(x^{*}, \xi)][\nabla f(x^{*}, \xi)]\T \right)\nabla^{2}F(x^{*})^{-1}.$$
If the model is well-specified, this sandwich form limiting covariance matrix achieves the Cramér-Rao lower bound, with $\Sigma^{-1}$ corresponding to the Fisher information matrix, as discussed in \citet{chen2020statistical}. Similar results have been established for alternative variants of SGD—such as those using different weighting strategies or second-order methods—with appropriately adapted limiting covariance matrices \citep{li2022root,na2022asymptotic,wei2023weighted}. 

In the references above, the analytical form of the limiting covariance is studied. However, in practice, the limiting covariance is unknown, as it depends on the underlying data and noise distribution, which are typically not known. Estimating this covariance is therefore essential for quantifying uncertainty in SGD-based estimates and enabling principled statistical inference, such as constructing confidence intervals. Moreover, to stay aligned with the spirit of SGD—namely, computational and memory efficiency, and online updates—it is especially important that the covariance estimation procedure also adheres to these principles. This makes the task of estimating the covariance matrix particularly challenging.

In this paper, we focus on the most fundamental setting: averaged SGD with a polynomially decaying learning rate, $\eta_i = \eta i^{-\alpha}$, where $\eta > 0$ and $\alpha \in (0.5, 1)$. Our goal is to estimate the limiting covariance matrix of $\sqrt{n} \bar{x}_n$, denoted by $\Sigma$ in \eqref{eq:asym_norm}, in a fully online fashion—using only the SGD iterates and without requiring additional computations such as Hessian evaluations. 
Existing methods for online, Hessian-free covariance estimation—which we will discuss later in Section \ref{sec:related}—achieve a best-known convergence rate of $n^{(\alpha-1)/4}$, which is relatively slow compared to the convergence rates of SGD: $\mathcal{O}(1/n)$ for strongly-convex and $\mathcal{O}(1/\sqrt{n})$ for convex problems \citep{nemirovski2009robust, lacoste2012simpler}. A natural question arises: can we improve this result while relying solely on the information provided by a single-pass SGD sequence?

\textbf{Our contributions:}  We give an affirmative response in this paper by presenting the de-biased estimator, a novel covariance estimator refined by the bias-reduction technique. It significantly enhances both the theoretical and practical convergence of the estimation error. Specifically, we show that the error rate of the de-biased estimator is  $n^{(\alpha-1)/2}\sqrt{\log{n}}$, which represents the best known convergence rate to date. Moreover, we propose a single-pass algorithm that updates the estimator using only the SGD iterates, with an update cost of $\mathcal{O} (d^2)$---the minimal computational requirement for estimating a $d\times d$ matrix.

There are alternative inference methods that rely on asymptotic pivotal statistics \citep{lee2022fast, su2023higrad, luo2022covariance, zhu2024high}. Although these methods can sometimes perform better in terms of constructing confidence intervals, they do not yield consistent covariance estimators. Therefore, we do not discuss them in detail here. Beyond statistical inference for stochastic approximation, the estimation of covariance and spectral properties in time series is an independent and well-established topic in the literature \citep{liu2010asymptotics, Flegal2010, xiao2011single, Xiao2012, Chen2013Cov, zhang2017asymptotic}. The novel bias-reduction estimator proposed in our work offers fresh insights into this area, as it can estimate the dependence structure of time-inhomogeneous processes and is amenable to on-the-fly computation.

Throughout the paper, we use the following notation. For a vector $a = (a_1, \ldots, a_d)^\top$, let the norm $\norm{a}_p = (\sum_{i=1}^d a_i^p )^{1/p}$. For a matrix $A \in \R^{d\times d}$, we use $\| A\|_2$ or $\| A\|$ to denote its operator norm and $\| A\|_F$ to denote its Frobenius norm. For $t\in \mathbb{R}$, $\lfloor t \rfloor = \max\{i \in \mathbb Z: i \le t\}$ and $\lceil t \rceil = \min\{i \in \mathbb Z: i \ge t\}$. For positive sequences $\{a_{n}\}$ and $\{b_{n}\}$, $n \in \N$, we write $a_{n}\lesssim b_{n}$ if there exists a positive constant $C$ such that $a_{n} \leq Cb_{n}$ for all $n \in \N$. We write $a_{n}\gtrsim b_{n}$ if $b_{n}\lesssim a_{n}$, and $a_{n}\asymp b_{n}$ if both $a_{n}\lesssim b_{n}$ and $a_{n}\gtrsim b_{n}$. For a finite set $B$, we use $|B|$ to denote its cardinality.

The remainder of the paper is organized as follows. Section \ref{sec:related} reviews existing work on covariance matrix estimation in different settings. In Section \ref{sec:method}, we present the formulation and algorithm for our recursive de-biased estimator.  In Section \ref{sec:theory}, we establish theoretical guarantees for the consistency of our estimator. Section \ref{sec:simulation} demonstrates the superior finite-sample performance of the de-biased method through numerical experiments. Finally, Section \ref{sec:discussion} concludes the paper and outlines directions for future research.


\section{BACKGROUND AND RELATED WORK}\label{sec:related}
We begin by reviewing existing methods for online covariance matrix estimation related to stochastic gradient descent (SGD).
Recall that the limiting covariance matrix $\Sigma$ of the averaged SGD in \eqref{eq:asym_norm} takes the so-called sandwich form: $\Sigma = A^{-1}SA^{-1}$, where
\begin{equation}\label{sandwich}
A = \nabla^{2}F(x^{*}), \ S = \E\left([\nabla f(x^{*}, \xi)][\nabla f(x^{*}, \xi)]\T \right).
\end{equation} 
Three primary approaches for estimating the covariance structure have been developed in the stochastic approximation literature.

The first method approximates the distribution via bootstrap resampling, as explored in \citep{fang2018online, li2018statistical,zhong2023online}. To obtain a consistent covariance estimate or achieve stable confidence intervals, a large number of bootstrap sequences are required, each of which modifies the original SGD sequence by adding perturbations and recomputing the gradients at every iteration. Consequently, the total cost becomes a substantial multiple of the original SGD run, which makes this approach computationally expensive and less suitable for SGD-related tasks. In this work, we aim to develop a method that operates entirely on a single SGD trajectory.

The second approach is the plug-in estimator introduced by \cite{chen2020statistical}, which separately estimates $A$ and $S$ in the sandwich form of the asymptotic covariance. The key idea is to approximate these matrices using empirical averages. Specifically, the estimators are given by
$\widehat{A}_n=\frac{1}{n}\sum_{i=1}^n \nabla^{2}f(x_{i-1},\xi_i),$
    $\widehat{S}_n =\frac{1}{n}\sum_{i=1}^n [\nabla f(x_{i-1}, \xi_i)][\nabla f(x_{i-1}, \xi_i)]\T $. The resulting covariance estimator $\widehat{A}_n^{-1}\widehat{S}_n\widehat{A}_n^{-1}$ is consistent, with a convergence rate of $n^{-\alpha/2}$, and can be computed in an online manner. Similar ideas have been applied in various settings, including constrained stochastic optimization, online decision-making, and non-differentiable problems \cite{Na2025Statistical, chen2021First, chen2021statistical}. However, the practical implementation of this estimator presents challenges: it requires access to the stochastic Hessian, which is often unavailable, and involves matrix computations with a computational complexity of $\mathcal{O}(d^3)$, making it inefficient for high-dimensional scenarios.

The third approach is the batch-means estimator, a Hessian-free method that relies solely on SGD iterates. Unlike the plug-in estimator, which is based on the sandwich formula, the batch-means method directly estimates the variance by analyzing the variability in the SGD sequence itself. The origins of batch-means methods with fixed batch size can be traced back to long-run variance estimation for time-homogeneous Markov chains \citep{glynn1990simulation, glynn1991estimating, damerdji1991strong, geyer1992practical}. \cite{chen2020statistical} introduced a batch-means method with increasing batch sizes to account for the complex correlation structure inherent in SGD. However, the batch construction in their approach cannot be updated recursively, as it depends on the total number of iterations. This means one would need to store all past iterates and, when new data arrive, reconstruct the batches using all previous iterations and then compute the estimator. To address this limitation, \cite{zhu2023online} developed a fully online batch-means covariance estimator that can be updated on the fly.  While the batch-means approach is  computationally efficient and practically feasible, it suffers from a relatively slow convergence rate of
$n^{(\alpha-1)/4}$, for both online and offline methods.

The same batching idea has been extended to broader contexts. For instance, \cite{Jiang2025Online} demonstrated that the online batch-means covariance estimator remains valid in nonsmooth and potentially non-monotone (including certain nonconvex) settings, preserving the same convergence rate of $n^{(\alpha-1)/4}$. Meanwhile, \cite{kuang2025online} proposed a weighted sample covariance estimator for sketched Newton iterates and provided theoretical guarantees within the framework of second-order optimization.

In this paper, we modify the Hessian-free batch-means estimator and provide a de-biased version that retains its computational advantages while offering improved convergence.


\section{METHOD}\label{sec:method}
\subsection{Motivation for Bias-Reduction}\label{sec:motivation}
We now take a closer look at the batch-means estimator and provide some intuition for our proposed de-biasing idea. For simplicity, we consider a general one-dimensional sequence $\{X_i\}_{i \geq 1}$.

As discussed in the previous section, rather than relying on the sandwich form of $\Sigma$, the batch-means method aims to estimate the variability of the average $\bar{X}_n = n^{-1} \sum_{i=1}^n X_i$ directly from the sequence $\{X_i\}_{i \geq 1}$. Specifically, it targets the quantity
$$\sigma_{n} = n\textnormal{Var}\bar{X}_{n},$$ which captures the dispersion of the running average and converges to the asymptotic covariance matrix $\Sigma$ in the SGD setting. 
In the independent and identically distributed (\emph{i.i.d.}) setting, a natural choice for estimating the variance of the sample mean is the sample covariance:  
$$\hat\sigma_{n} = \frac{\sum_{i=1}^{n}\left(X_{i} - \bar{X}_{n}\right)^2}{n}.$$ However,  in the case of  SGD, the iterates ${X_i}$ are highly correlated, which leads to substantial underestimation when using this simple form. To account for this correlation, the online batch-means estimator \cite{zhu2023online} modifies the sample covariance by incorporating local batching:
\[\hat\sigma_{n, bm} =  \frac{\sum_{i=1}^{n}\left(\sum_{j = i-l_{i}+1}^{i}X_{j} - l_{i}\bar{X}_{n}\right)^2}{\sum_{i=1}^{n}l_{i}},\]
where $1 \leq l_i \leq i$ denotes the size of the $i$-th batch in their paper. The choice of batch sizes is guided by the correlation structure of the SGD iterates. While this estimator is computationally efficient and Hessian-free, its convergence can be relatively slow due to bias introduced in its formulation. That is, the batch-means estimator $\hat\sigma_{n, bm}$ still builds on the structure of the biased sample variance, albeit with adjusted weights. This motivates our pursuit of a more principled, de-biased approach.

Instead of adjusting the biased sample covariance, we propose constructing an estimator that more directly reflects the theoretical definition of $\sigma_{n}$. 
In particular, observe the identity:
	   \begin{equation}\label{varexpansion}
	       \begin{split} 
           \sigma_{n} =   \frac{1}{n}\sum_{i=1}^{n}\mathbb{E}\Bigg[2(X_{i}-\E(X_{i}) )\sum_{k=1}^{i}(X_{k}-\E(X_k))\\ - (X_{i}-\E(X_{i}))^2\Bigg],
           \end{split}
	   \end{equation}
    which offers an alternative way of characterizing the variance of the average. This motivates the following empirical estimator $\hat\sigma_{n, db}$: 
\begin{equation}
	       \begin{split} 	
           \frac1n  \sum_{i=1}^n \Bigg[    2(X_i - \bar X_n)\bigg(\sum_{k=i-\ell_{i}+1}^{i} X_k - \ell_i\bar X_n \bigg)  \\ 
           - (X_i - \bar X_n)^2    \Bigg],
            \end{split}
	   \end{equation}
where $\ell_{i}$ again denotes the size of the $i$-th batch in this paper. Building on this intuition, the following subsection formally defines the de-biased estimator for the limiting covariance matrix $\Sigma$ of averaged SGD. Our proposed estimator is constructed to mirror the expansion \eqref{varexpansion} as closely as possible, combined with a principled choice of the batch size $\ell_{i}$ to guarantee the theoretical convergence rate.

\subsection{De-biased Estimator }
Recall that  the sequence of SGD iterates $\{x_{i}\}_{i\ge 1}$ is generated by the recursion in \eqref{eq:SGDite}  at the $i$-th iteration, we define the batch 
$$B_{i} = \{i-\ell_{i}+1, \dots, i\}, \quad |B_{i}| = \ell_{i},$$ and propose the de-biased estimator: $\widehat{\Sigma}_{n}$ as:  
        \begin{equation}\label{eqn:est}
		\begin{split} 
			\widehat{\Sigma}_{n}  =  	\frac1n  \sum_{i=1}^n \Bigg[  (x_i - \bar x_n) \bigg(\sum_{k=i-\ell_{i}+1}^{i} x_k - \ell_i\bar x_n \bigg)\T \\
			 +   \bigg(\sum_{k=i-\ell_{i}+1}^{i} x_k - \ell_i\bar x_n \bigg) (x_i - \bar x_n)\T \\
			 - (x_i - \bar x_n)(x_i - \bar x_n)\T  \Bigg],
		\end{split}
	\end{equation}
    where $\bar{x}_{n} = n^{-1}\sum_{i=1}^{n}x_i$.
Our theoretical analysis in Section~\ref{sec:theory} establishes that this estimator is consistent for $\Sigma$ when the batch sizes satisfy $\ell_i = \mathcal{O}(i^\alpha \log i)$ with $\alpha \in (0.5, 1)$.  In practice, general choices of $\ell_i$ often prevent online updates due to overlapping batches and variable batch sizes. To address this, we introduce a block-based batching scheme that ensures each batch has size $|B_i| = \mathcal{O}(i^{\alpha} \log i)$ while remaining fully compatible with online implementation.

\begin{remark}
For stationary time series, a conceptually related bias-reduction method was proposed by \cite{xiao2011single} to estimate the long-run covariance. Inspired by their work, we develop the de-biased estimator in the substantially more intricate non-stationary and time-inhomogeneous regime. The motivation in \cite{xiao2011single} relied on the relationship between auto-covariance functions and the long-run covariance, which is intrinsically limited to stationary processes. In contrast, the intuition of our estimator comes from the non-asymptotic covariance of the sample average that applies to much broader settings.

Our bias-reduction approach and selection of batch size also differ from those in \cite{xiao2011single}, where the authors used an increasing threshold sequence to determine which batches are retained in the estimator (see their Equation (12)). Our method is free of this additional threshold parameter. Instead, we analytically design the batch size $\ell_i$ to directly balance the bias-variance tradeoff.
\end{remark}

\subsection{Practical Implementation}

{\textbf{Block-based Batching Scheme.}}
	\begin{figure} 
			\centering \includegraphics[width=0.45\textwidth]{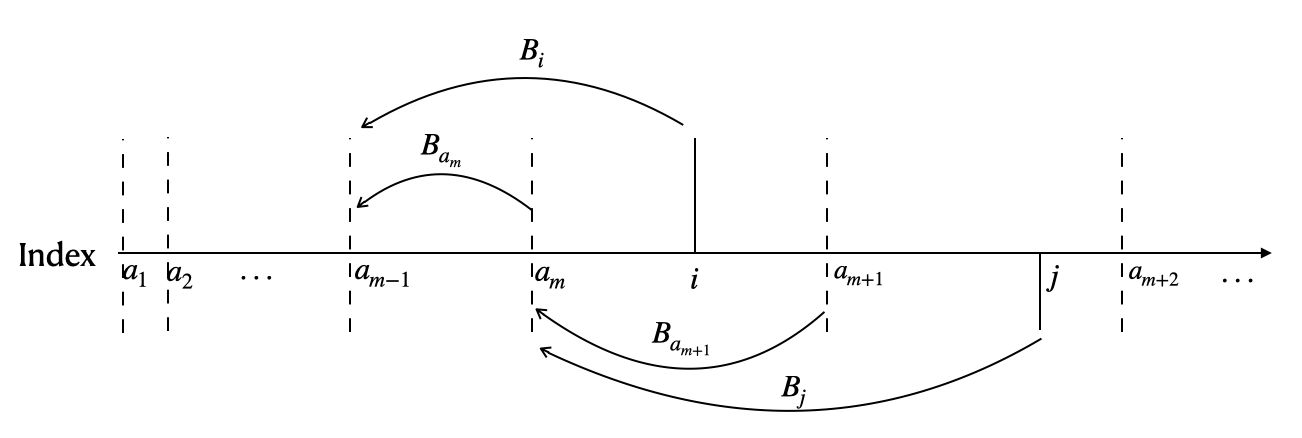}	
            \caption{Illustration for Block-based Batching Scheme}
			\label{fig:example} 
		\end{figure}     
We define a sequence of indices $\{a_{m}\}_{m\ge 1}$ with $a_{1} = 1$, and subsequent values satisfy the condition:
\begin{equation}\label{eq:condition_am}
			a_{m} - a_{m-1} + 1= \lfloor a_{m}^{\alpha}\log(a_{m})\rfloor, m\ge 2.
		\end{equation} 
That is to say, we start a new block when the current block length reaches $\lfloor i^{\alpha}\log(i)\rfloor$. At each iteration $i$, find the unique index $m$ such that $a_{m}\le i< a_{m+1}$, and define the batch 
\[B_{i} = \{a_{m-1}, a_{m-1}+1, ..., a_{m}, ..., i\},\] 
which corresponds to $\ell_i = |B_i| = i - a_{m-1}+1$. This batching strategy always incorporates the two most recent blocks and satisfies the desired scaling for $\ell_i$; see figure \ref{fig:example} for an illustration. 

\begin{proposition}\label{onlinebatch}
    Let the batch $B_i$  be constructed using the Block-based Batching Scheme with indices sequence $\{a_{m}\}_{m\ge 1}$ as described above in \eqref{eq:condition_am}. Then the batch size satisfies 
    \[\lfloor i^{\alpha}\log(i)\rfloor\le |B_{i}|\le 2\lfloor i^{\alpha}\log(i)\rfloor\]
    and hence $|B_i| = \mathcal{O}(i^{\alpha}\log i)$. 
\end{proposition}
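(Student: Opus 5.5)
The plan is to use only the defining relation \eqref{eq:condition_am} and the monotonicity of $g(t)=\lfloor t^{\alpha}\log t\rfloor$. Since $t\mapsto t^{\alpha}\log t$ is increasing for $t\ge 1$, $g$ is nondecreasing. Fix $i$ and let $m$ be the unique index with $a_m\le i<a_{m+1}$. Then $|B_i| = i-a_{m-1}+1$. I would split this as
\[
|B_i| = (a_m-a_{m-1}+1) + (i-a_m) = g(a_m) + (i-a_m).
\]
This decomposes the batch into the full previous block $\{a_{m-1},\dots,a_m\}$, whose length is fixed by \eqref{eq:condition_am}, plus the partial current block $\{a_m+1,\dots,i\}$.

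\textbf{Upper bound.} The first term satisfies $g(a_m)\le g(i)$ because $a_m\le i$. For the partial current block, $i<a_{m+1}$ gives $i-a_m\le a_{m+1}-a_m-1 = g(a_{m+1})-2$. This bound involves $a_{m+1}>i$, so monotonicity alone is not enough. Instead I would use the scheme's description: a new block starts as soon as the current block length reaches $\lfloor i^{\alpha}\log i\rfloor$. Consequently, at any $i<a_{m+1}$ the current block length $i-a_m+1$ has not yet reached $g(i)$, so $i-a_m+1\le g(i)$. Summing the two pieces gives $|B_i|\le g(i)+g(i)-1\le 2g(i)$.

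\textbf{Lower bound.} The aim is $|B_i|\ge g(i)$. If $i=a_m$, then $|B_i|=g(a_m)=g(i)$ directly. For $i>a_m$, I would write $|B_i| = i-a_{m-1}+1$ and compare with $g(i)$ through the increment $g(i)-g(a_m)$. Each unit step in $t$ increases $t^{\alpha}\log t$ by about $\alpha t^{\alpha-1}\log t + t^{\alpha-1}$, which is at most $1$ for $t$ large. Hence $g(i)-g(a_m)\le i-a_m$ up to a rounding of at most one unit, and
\[
|B_i| = g(a_m) + (i-a_m) \ge g(i) - O(1).
\]
I would absorb this $O(1)$ slack using the $+1$ in $a_m-a_{m-1}+1$, or else state the bound for $i$ beyond a threshold where the derivative of $t^{\alpha}\log t$ is below one. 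Either way the conclusion $|B_i|=\mathcal{O}(i^{\alpha}\log i)$ follows immediately.

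\textbf{Main obstacle.} The difficulty lies in the discrete and implicit nature of \eqref{eq:condition_am}. The relation defines $a_m$ implicitly, so existence of a solution may need to be read as "the first $a$ such that the block length reaches $g(a)$". Floors and small-$i$ cases, where $\log i$ is tiny and $g$ may be $0$ or $1$, also need care. I would handle small $i$ separately, or interpret the scheme operationally as above. The asymptotic statement $\mathcal{O}(i^{\alpha}\log i)$ is robust to these constant-level issues.
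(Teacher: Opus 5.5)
Your decomposition $|B_i| = g(a_m) + (i-a_m)$, with $g(t)=\lfloor t^{\alpha}\log t\rfloor$, is the same as the paper's. The lower bound, however, has a gap as written. You only reach $|B_i|\ge g(i)-O(1)$. Your proposed repair, absorbing the slack ``using the $+1$ in $a_m-a_{m-1}+1$'', does not work: that $+1$ is already used up in the identity $a_m-a_{m-1}+1=g(a_m)$, so there is no spare unit.

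The missing observation is that there is no rounding loss at all. Suppose $t^{\alpha}\log t$ has derivative at most $1$ on $[a_m,i]$. The mean value theorem then gives $i^{\alpha}\log i\le a_m^{\alpha}\log a_m+(i-a_m)$. Since $i-a_m$ is an integer, it passes through the floor:
\[
g(i)\le \bigl\lfloor a_m^{\alpha}\log a_m+(i-a_m)\bigr\rfloor = g(a_m)+(i-a_m)=|B_i|.
\]
Equivalently, as the paper does, you can telescope $g(k+1)-g(k)\le 1$ from $k=a_m$ to $i-1$. The derivative condition needs $a_m$ to lie beyond your threshold. This holds for large $i$, because your own upper-bound step gives $a_m\ge i+1-g(i)$ and $g(i)=o(i)$. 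The result is the exact inequality, not only the $\mathcal{O}$ statement.

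Your upper bound takes a different route from the paper's. The paper writes $i-a_m=(a_{m+1}-a_m)-(a_{m+1}-i)$ and again uses unit increments of $g$ between $i$ and $a_{m+1}$. You instead use the first-passage property $i-a_m+1<g(i)$ for $a_m<i<a_{m+1}$, which needs only that $g$ is nondecreasing. This also settles your worry about the implicit definition. The quantity $i-a_m+1-g(i)$ starts at $1-g(a_m)\le -1$ and rises by at most one per step. Hence the first solution of \eqref{eq:condition_am} is exactly the first index at which the block length reaches $g(i)$.

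Finally, your restriction to large $i$ is necessary, not cosmetic. The paper's claim that $g(k+1)-g(k)\le 1$ for all $k\ge 1$ holds only eventually: with the natural logarithm and $\alpha=3/4$, $g(13)=17$ and $g(14)=19$. The lower bound itself also fails at moderate $i$, since $|B_{100}|\le 100$ while $g(100)=145$ for $\alpha=3/4$. So the statement should be read for all sufficiently large $i$. That is what your argument delivers once the rounding point is fixed.
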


  
\textbf{Online Update.} In practice, we can fix $a_{1} = 1$, and then recursively determine each subsequent  $a_{m}$ by incrementing its value until the condition in  \eqref{eq:condition_am} is satisfied. This enables the batch structure to be constructed in a fully online fashion, and thus supports the recursive update of the de-biased estimator.
Specifically, we update the batch sum and batch length as follows. Recall that at the $i$-th step, the batch $B_i$ contains of two parts: the previous block $\{a_{m-1}, \dots, a_{m}-1\}$, and the current block $\{a_{m}, \dots, i\}$. 
Let $S_0$ denote the sum of previous block, and $S_1$ the sum of current block.  Record  the starting index of the current block $a_{m}$ as a, 
\begin{itemize}
    \item If $i-a+1< \lfloor i^{\alpha}\log(i)\rfloor$, we simply add the new iterate $x_i$ to the current batch and all we need to update is  $S_1 = S_1 + x_{i}$ 
    \item If $i-a+1\ge \lfloor i^{\alpha}\log(i)\rfloor$, we reset the batch by discarding the previous block. In this case, we update: $S_0 = S_1$ and $S_1 = x_{i}$, $a = i$.
\end{itemize}  
Then the batch sum $\sum_{k=i-\ell_{i}+1}^{i}x_k = S_0+S_1$, and the batch size $\ell_{i} = i - a + 1$,  can both be updated recursively. 
If we expand the de-biased estimator in \eqref{eqn:est},  we obtain:
\begin{equation*}
    \begin{split}
        n\widehat{\Sigma}_{n}
         =& \sum_{i=1}^n x_i \left(\sum_{k=i-\ell_{i}+1}^{i} x_k\right)\T  + \sum_{i=1}^n \left(\sum_{k=i-\ell_{i}+1}^{i} x_k\right) x_i \T \\
        &- \sum_{i=1}^n\left(\ell_i x_i + \sum_{k=i-\ell_{i}+1}^{i} x_k \right) \bar x_n\T  \\
       & - \bar x_n\sum_{i=1}^n\left(\ell_i x_i + \sum_{k=i-\ell_{i}+1}^{i} x_k \right) \T  \\
         &+ \sum_{i=1}^n(2\ell_{i}+1)\bar x_n\bar x_n\T - \sum_{i=1}^{n}x_i x_i\T .
    \end{split}
\end{equation*}
We observe that the estimator can be computed recursively, provided we maintain recursive updates of the averaged SGD, batch sum, and batch size. We summarize the details in Algorithm \ref{alg:general}, which demonstrates that the covariance matrix estimator can indeed be computed recursively.   This implies that, as the SGD algorithm progresses from the $i$-th to $(i+1)$-th iterate, the covariance matrix can be updated simultaneously with minimal computation. The update requires only the new SGD iterate and a few quantities—such as the batch sum and batch size—carried over from the previous step. The computational cost per iteration is $\mathcal{O}(d^2)$, which is minimal, given that we are estimating a $d\times d$ matrix.

\begin{algorithm}[tb]
\caption{Recursive update for the de-biased covariance estimator}\label{alg:general}
\textbf{Input}: Step sizes $\{\eta_{i}\}_{i\ge 1}$, initialization $x_{0}, a = 1, S_{0} = S_{1} = \bar{x} = P = W = Q  = q = 0$\\ 
\begin{algorithmic}[1] 
\FOR{ $i = 1, 2, \dots$}
\STATE Sample $\xi_{i}\sim \Pi$
\STATE $x_{i} = x_{i-1} - \eta_{i}\nabla{f(x_{i-1}, \xi_{i})}$
 \STATE  $\bar{x} = ((i-1)\bar{x} + x_{i})/i$
 \IF {$i - a + 1 >= \lfloor i^{\alpha}\log(i)\rfloor$}
\STATE   $a = i$	
\STATE $S_{0} = S_{1}, S_{1} = x_{i}$
\ELSE
\STATE  $S_{1} = S_{1} + x_{i}$
\ENDIF
\STATE  $l = i - a + 1$
\STATE  $S =  S_{0} + S_{1}$\
\STATE  $P = P +  x_{i}S\T ; \  W = W + l x_{i} + S; \ Q = Q+x_{i} x_{i} \T ; \ q = q+2l+1$
\STATE  $V =P + P\T  - W \bar{x}\T  - \bar{x}W\T +  q\bar{x}\bar{x}\T   - Q$
\STATE  \textbf{Output (if necessary)}:
			$\widehat\Sigma_{i} =V/i$ 
            \ENDFOR
\end{algorithmic}
 \end{algorithm}

		\section{THEORETICAL GUARANTEE}\label{sec:theory}


       The batch size $\ell_i$ plays a critical role in constructing an accurate estimator. As mentioned in Section~\ref{sec:method}, an effective choice is to let $\ell_i$ grow at the rate $\mathcal{O}(i^{\alpha} \log i)$, and we propose a practical, fully online batching scheme that satisfies this condition. In this section, we first build intuition for this choice by analyzing a simple mean estimation model as a motivating example in Section \ref{sec:mean est}. Then, in Section~\ref{sec:general_theory}, we establish that this batch size leads to a consistent estimator in the general setting. Specifically, we provide an upper bound on the mean squared error (MSE) $\mathbb{E} \|\widehat{\Sigma}_{n} - \Sigma\|^2$, and show that the bound is tight.

  
		\subsection{Intuition Behind the Batch Size Choice}\label{sec:mean est}

        Let $\{\xi_{i}\}_{i\in\N}$ be a sequence of \emph{i.i.d.} random variables from the model
		$\xi = x^{*} + e,$
		where $x^{*}\in\R$ is the true population mean of interest, and $e$ is a Gaussian random error with $\E [e] = 0$ and $\E[e^2] = \sigma < \infty$.  Consider the squared loss function $f(x, \xi) = (\xi - x)^2/2$, and generate the $i$-th SGD iterate by
\begin{equation}\label{eq:mean_est_SGD}
x_{i} = x_{i-1} - \eta_{i}(x_{i-1} - \xi_{i}),
\end{equation}
where $\eta_i = \eta i^{-\alpha}$ with $\eta > 0$ and $\alpha \in (0.5, 1)$.  
For simplicity of illustration, we assume $x^*=x_0=0$ in this subsection, and consider an oracle estimator 
\begin{equation}\label{oracle}
\hat{\sigma}_n = \frac{1}{n}\sum_{i=1}^n[2x_i(\sum_{k=i-\ell_i+1}^{i}x_k) -x_i^2   ].
\end{equation}
Unlike the construction in \eqref{eqn:est}, we did not subtract the sample mean $\bar{x}_n$ in the estimator $\hat{\sigma}_n$ because $\E x_i =0$ for all $i$.
Due to the Gaussianity and linearity of the model, we can derive closed-form expressions for the oracle de-biased estimator $\hat{\sigma}_n$. This tractability enables an explicit analysis of the batch size $\ell_i$ and the precise order of the resulting estimation error.


We begin by analyzing how the error between $\hat{\sigma}_n$ and the finite-sample variance $\sigma_{n} = \textnormal{Var}(\sqrt{n}(\bar{x}_{n})$   depends on the choice of $\ell_i$, as summarized in the following proposition. In practice, since we work with finite samples, a bound on the error between $\hat{\sigma}_n$ and $\sigma_n$ not only implies convergence to the limiting variance $\Sigma$, but also provides a more practical assessment of the estimator’s performance in finite-sample settings. 

\begin{proposition}\label{bias}
Consider the SGD iterates $\{x_{i}\}_{i=1}^{n}$ defined by \eqref{eq:mean_est_SGD}, with step sizes $\eta_i = \eta i^{-\alpha}$ for some $\eta > 0$ and $\alpha \in (0.5, 1)$. The proposed oracle de-biased estimator $\hat{\sigma}_n$ defined in \eqref{oracle} converges to the 
true finite-sample variance  $\sigma_{n} = \textnormal{Var}(\sqrt{n}\bar{x}_{n})$  
with the following bound:
$$ |\E (\hat{\sigma}_n-\sigma_n) |\lesssim \frac{1}{n}\sum_{i=1}^n \exp\{ -\eta i^{-\alpha}\ell_i\}.$$ 
Moreover, if the batch size satisfies $\ell_i \lesssim i^\alpha$, this convergence rate is tight, i.e.,
$$ |\E (\hat{\sigma}_n-\sigma_n)| \asymp \frac{1}{n}\sum_{i=1}^n \exp\{ -\eta i^{-\alpha}\ell_i\}.$$ 
\end{proposition}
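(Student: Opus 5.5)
Since $x^*=x_0=0$, the recursion \eqref{eq:mean_est_SGD} is linear with independent Gaussian noise. Writing $e_j=\xi_j$, we have
\[
x_i=\sum_{j=1}^i \eta_j\Big(\prod_{m=j+1}^{i}(1-\eta_m)\Big)e_j .
\]
Hence every covariance is explicit. For $k\le i$,
\[
\E[x_i x_k]=\pi_{k+1}^{i}\,\E x_k^2,\qquad \pi_{k+1}^{i}:=\prod_{m=k+1}^{i}(1-\eta_m).
\]
The plan is to compare $\E\hat\sigma_n$ and $\sigma_n$ term by term through the identity \eqref{varexpansion}. Since $\E x_i=0$, that identity gives
\[
\sigma_n=\frac1n\sum_{i=1}^n\Big[2\sum_{k=1}^{i}\E(x_ix_k)-\E x_i^2\Big],
\]
while $\E\hat\sigma_n$ is the same expression with the inner sum restricted to $k\in B_i=\{i-\ell_i+1,\dots,i\}$. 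Therefore
\[
\E(\hat\sigma_n-\sigma_n)=-\frac{2}{n}\sum_{i=1}^n\sum_{k=1}^{i-\ell_i}\pi_{k+1}^{i}\,\E x_k^2 .
\]
Every summand is nonnegative for large indices, since $1-\eta_m>0$ once $\eta_m<1$. This sign structure matters because it gives matching upper and lower bounds with no cancellation.

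For the upper bound I will use two facts.
\begin{itemize}
\item The standard estimate $\E x_k^2\asymp \eta_k\asymp k^{-\alpha}$, obtained from the recursion $v_k=(1-\eta_k)^2v_{k-1}+\eta_k^2\sigma$.
\item The product bound $\pi_{k+1}^i\le\exp\{-\sum_{m=k+1}^i\eta_m\}$.
\end{itemize}
Set $k=i-\ell_i-r$ with $r\ge 0$. Then $\sum_{m=k+1}^i\eta_m\ge \eta i^{-\alpha}(\ell_i+r)$, so
\[
\sum_{k\le i-\ell_i}\pi_{k+1}^i\,\E x_k^2\;\lesssim\; e^{-\eta i^{-\alpha}\ell_i}\sum_{r\ge0}e^{-\eta i^{-\alpha}r}\,(i-\ell_i-r)^{-\alpha}.
\]
For the factor $(i-\ell_i-r)^{-\alpha}$ there are two regimes. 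When $k\ge i/2$, it is $\asymp i^{-\alpha}$, and the geometric sum over $r$ is $\lesssim i^{\alpha}$, giving an $O(1)$ contribution times $e^{-\eta i^{-\alpha}\ell_i}$. When $k<i/2$, we have $\sum_{m>k}\eta_m\gtrsim i^{1-\alpha}$, so the tail is bounded by $i\,e^{-ci^{1-\alpha}}$. This is negligible, and can be absorbed into the main term because $\ell_i\le i$ gives $e^{-\eta i^{-\alpha}\ell_i}\ge e^{-\eta i^{1-\alpha}}$. Absorption needs $c>\eta$; I would handle this by splitting at $k<\epsilon i$ with a suitable constant, or else at $k$ with $\sum_{m=k+1}^{i}\eta_m\ge 2\eta i^{1-\alpha}$. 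Averaging over $i$ then yields the claimed upper bound.

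For tightness, assume $\ell_i\le Ci^\alpha$. I keep only the terms $k\in[i-\ell_i-i^{\alpha},\,i-\ell_i]$. For these $k$ we have $k\asymp i$, since $\ell_i+i^\alpha=o(i)$. The product satisfies $\pi_{k+1}^i\ge \exp\{-(1+o(1))\sum_{m=k+1}^{i}\eta_m\}$, using $\log(1-x)\ge -x-x^2$. Moreover
\[
\sum_{m=k+1}^{i}\eta_m\le \eta (i-k)\,k^{-\alpha}=\eta i^{-\alpha}(\ell_i+r)\,(1+o(1)).
\]
Because $(\ell_i+r)i^{-\alpha}$ is bounded, the $(1+o(1))$ corrections change the exponent by only $o(1)$. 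Each retained term is therefore $\gtrsim e^{-\eta i^{-\alpha}\ell_i}\,i^{-\alpha}$, and there are about $i^\alpha$ of them. This gives a lower bound $\gtrsim e^{-\eta i^{-\alpha}\ell_i}$ for every large $i$; the finitely many small $i$ only shift constants. Summing over $i$ gives the matching lower bound.

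The main obstacle is making these exponent comparisons precise. The sum $\sum_{m=k+1}^{i}\eta_m$ differs from $\eta i^{-\alpha}(i-k)$ by a factor that is not $1+o(1)$ when $i-k$ is comparable to $i$. This is exactly why the upper bound needs the separate far-tail argument, and why the lower bound needs $\ell_i\lesssim i^\alpha$: without it, the multiplicative error in the exponent could not be absorbed into a constant.
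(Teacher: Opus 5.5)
Your proof is correct. It starts from the same identity as the paper: the bias equals $-\frac{2}{n}\sum_i\sum_{k\le i-\ell_i}\cov(x_i,x_k)$, a sum of eventually nonnegative terms. Where you differ is in how you estimate the inner tail sum.

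\textbf{The paper's route.} It evaluates the tail sum sharply. From $\cov(x_i,x_k)\asymp \exp\{\frac{\eta}{1-\alpha}(k^{1-\alpha}-i^{1-\alpha})\}k^{-\alpha}$ and the exact antiderivative $\int e^{\beta u^{1-\alpha}}u^{-\alpha}\,du=\frac{1}{\beta(1-\alpha)}e^{\beta u^{1-\alpha}}$, it obtains $\sum_{k\le j}\cov(x_i,x_k)\asymp \exp\{\frac{\eta}{1-\alpha}(j^{1-\alpha}-i^{1-\alpha})\}$ uniformly in $j$ (Lemma~\ref{indep-X}). Concavity of $u\mapsto u^{1-\alpha}$ then gives the bound $e^{-\eta i^{-\alpha}\ell_i}$. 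For tightness, the paper notes that the Taylor gap $\frac12\eta\alpha u^{-\alpha-1}\ell_i^2$ is $O(1)$ when $\ell_i\lesssim i^\alpha$. Because this estimate covers every $k\le i-\ell_i$ at once, no near/far split is needed.

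\textbf{Your route.} You use the cruder bound $\sum_{m=k+1}^i\eta_m\ge\eta i^{-\alpha}(i-k)$ together with a geometric series. This loses too much when $i-k\asymp i$, which is why you need the extra absorption step. That step does work, and the constant you left open can be fixed. Since $\sum_{\epsilon i<m\le i}\eta_m\approx\frac{\eta}{1-\alpha}(1-\epsilon^{1-\alpha})\,i^{1-\alpha}$, this exceeds $\eta i^{1-\alpha}$ by a constant factor whenever $\epsilon<\alpha^{1/(1-\alpha)}$. The far part is then $\lesssim i\,e^{-c'i^{1-\alpha}}$ with $c'>\eta$, and it is absorbed because $\ell_i\le i$.

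\textbf{Lower bound.} Your window argument directly replaces the paper's ``$\asymp$'' step and is fine. It needs one remark: the finitely many $k$ for which some factor $1-\eta_m$ is negative contribute only $O(e^{-ci^{1-\alpha}})$. This is negligible against $e^{-\eta i^{-\alpha}\ell_i}\ge e^{-C\eta}$. The same finitely many factors are why your product bound should read $\lesssim$ rather than $\le$. The paper sidesteps this sign issue by simply asserting $\cov(x_i,x_k)\ge0$.

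\textbf{A correction to your closing remark.} It overstates the role of $\ell_i\lesssim i^\alpha$. The exponent error is $O(\ell_i^2i^{-1-\alpha}+\ell_i i^{-2\alpha})$, which stays bounded up to $\ell_i\lesssim i^{(1+\alpha)/2}$. So the hypothesis is sufficient for tightness, not necessary.
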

\begin{remark}[Choice of batch size $\ell_{i}$]
Proposition \ref{bias} provides two important insights regarding the batch size. 
\begin{itemize}
    \item If the batch size grows too slowly, specifically if  $\ell_i \lesssim i^{\alpha}$, the estimator incurs a non-negligible bias:
    $$ | \E (\hat{\sigma}_n-\sigma_n)| \gtrsim \frac{1}{n}\sum_{i=1}^n \exp( -\eta ) = \exp(-\eta),$$
    which does not vanish as $n \to \infty$.
    \item As long as the batch size $\ell_i$ grow slightly faster than $i^{\alpha}$, for example, 
\begin{equation}\label{batchc}
    \ell_i \ge   C_{\eta}i^{\alpha}\log i
\end{equation} 
for some universal constant $ C_{\eta}$,  the estimator becomes asymptotically unbiased since
$$ |\E (\hat{\sigma}_n-\sigma_n)| \lesssim \frac{1}{n}\sum_{i=1}^n \exp( -\eta  C_{\eta}\log i ) \rightarrow 0.$$
\end{itemize}  
This justifies the recommended batch size scaling of order $\mathcal{O}(i^{\alpha} \log i)$.
\end{remark}


 \begin{figure}
\centering
{\includegraphics[width=0.45\textwidth]{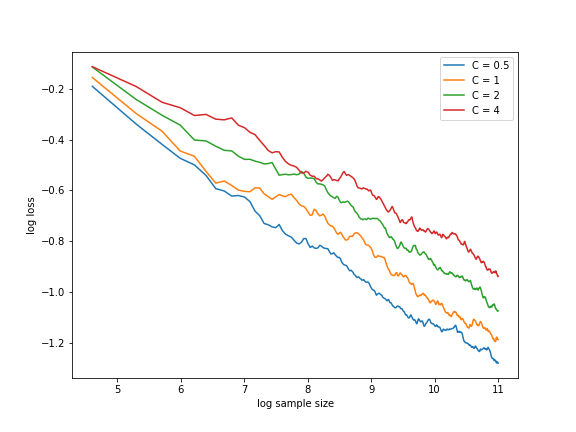}}
\caption{Log-log Plots of Estimation Errors for Different Values of $C$ in the Batch Sequence $\{a_{k}\}$, under the Linear Model with $d=1$.}
\label{compareC} 
\end{figure}

\begin{figure}[h]
\centering
 \includegraphics[width=0.23\textwidth]{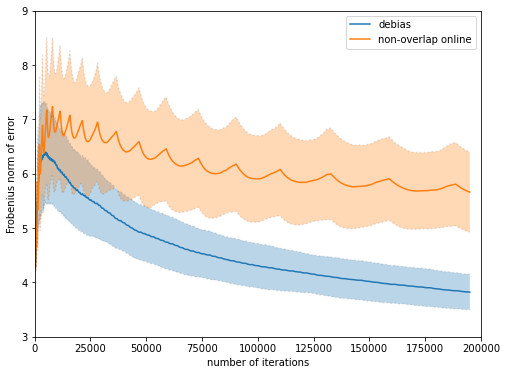}   
  \includegraphics[width=0.23\textwidth]{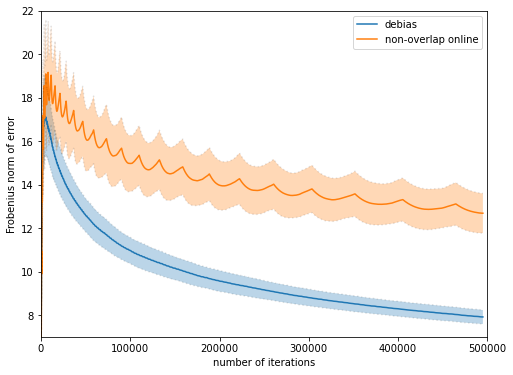} 
  \includegraphics[width=0.23\textwidth]{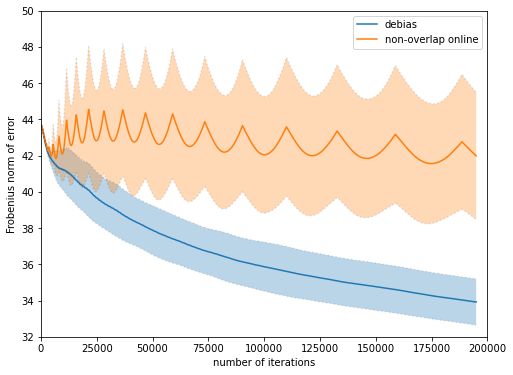} 
  \includegraphics[width=0.23\textwidth]{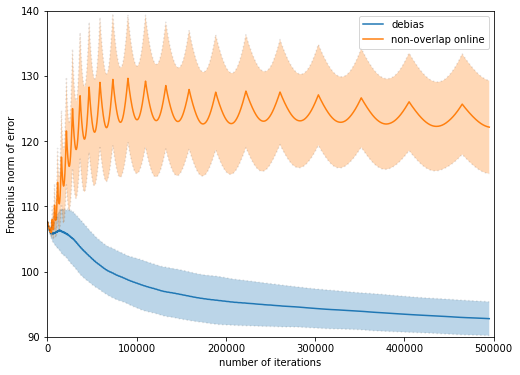}
  \includegraphics[width=0.23\textwidth]{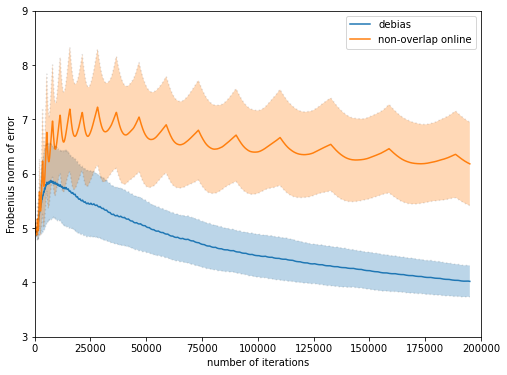}   
  \includegraphics[width=0.23\textwidth]{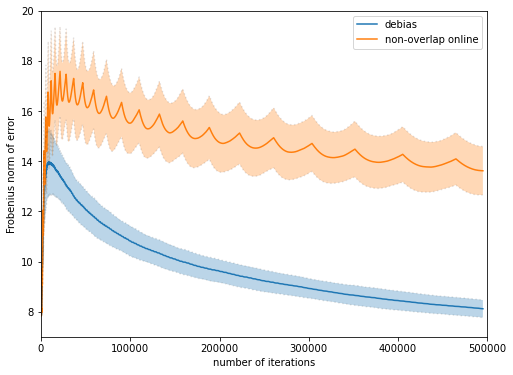} 
\caption{The Estimation Error at Each Iteration. Left: $d=20$. Right: $d=50$. Top: Linear Regression Model. Middle: Logistic Regression Model. Bottom: Expectile Regression Model. The Error Band Represents One Standard Deviation. }
\label{fig:MSEd20} 
\end{figure}

To align with the broader theoretical literature, we establish the consistency by bounding the error between $\hat{\sigma}_n$ and the limiting variance $\Sigma$ in \eqref{eq:asym_norm}. By elementary calculations, one can show that the limiting variance equals $\sigma$, the variance of the noise, in the context of \eqref{eq:mean_est_SGD}. 

 \begin{theorem}\label{thm:mean} Consider the SGD iterates $\{x_{i}\}_{i=1}^{n}$ defined by \eqref{eq:mean_est_SGD}, with step sizes $\eta_i = \eta i^{-\alpha}$ for some $\eta > 0$ and $\alpha \in (0.5, 1)$.   Let $\hat{\sigma}_n$  be the proposed oracle de-biased estimator in \eqref{oracle}  with $\ell_i = C_\eta i^{\alpha}\log i$ for some constant $C_{\eta}>\eta^{-1}$. Then the convergence rate of MSE satisfies
	   	$$\E(\hat{\sigma}_n - \sigma)^2 \asymp n^{-1+\alpha}\log{n}.$$
\end{theorem}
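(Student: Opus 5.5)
We use the bias--variance decomposition
\[
\E(\hat\sigma_n-\sigma)^2=\bigl(\E\hat\sigma_n-\sigma\bigr)^2+\var(\hat\sigma_n),
\qquad
\E\hat\sigma_n-\sigma=\E(\hat\sigma_n-\sigma_n)+(\sigma_n-\sigma).
\]
The plan is to show that both bias pieces are $o\bigl(n^{(\alpha-1)/2}\sqrt{\log n}\bigr)$ and that the variance is exactly of order $n^{\alpha-1}\log n$.

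\textbf{Bias terms.} By Proposition~\ref{bias} with $\ell_i=C_\eta i^\alpha\log i$, the first piece satisfies
\[
|\E(\hat\sigma_n-\sigma_n)|\lesssim \frac1n\sum_{i=1}^n i^{-\eta C_\eta}.
\]
Since $\eta C_\eta>1$, this is $O(1/n)$. For $\sigma_n-\sigma$ we use the explicit linear representation
\[
x_i=\sum_{j\le i}\eta_j\Bigl(\prod_{k=j+1}^{i}(1-\eta_k)\Bigr)e_j .
\]
Then $\sqrt n\,\bar x_n=n^{-1/2}\sum_j w_{j,n}e_j$, with $w_{j,n}=\eta_j\sum_{i=j}^n\prod_{k=j+1}^i(1-\eta_k)$. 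The standard estimate $w_{j,n}=1+O(j^{\alpha-1})$ holds for $j\le n-n^\alpha\log n$ and gives $|\sigma_n-\sigma|\lesssim n^{\alpha-1}+n^{-1}\sum_j j^{\alpha-1}\lesssim n^{\alpha-1}$. The squared bias is therefore $O(n^{2\alpha-2})=o(n^{\alpha-1}\log n)$.

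\textbf{Variance.} Write $\hat\sigma_n=n^{-1}\sum_i Y_i$ with $Y_i=2x_iT_i-x_i^2$ and $T_i=\sum_{k\in B_i}x_k$, so that
\[
\var(\hat\sigma_n)=n^{-2}\sum_{i,j}\cov(Y_i,Y_j).
\]
Everything is jointly Gaussian, so by Isserlis' formula
\[
\cov(x_iT_i,x_jT_j)=\E x_ix_j\,\E T_iT_j+\E x_iT_j\,\E T_ix_j ,
\]
and similarly for the remaining terms. We need the covariance kernel: for $k\le i$,
\[
\E x_kx_i=\prod_{m=k+1}^i(1-\eta_m)\,\E x_k^2\approx \frac{\eta}{2}k^{-\alpha}\exp\Bigl(-\eta\sum_{m=k+1}^i m^{-\alpha}\Bigr),
\]
so correlations decay on the scale $i^\alpha$. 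Consequently $\E x_iT_i\approx \sigma/2$, $\E T_i^2\asymp \ell_i\cdot i^{-\alpha}\cdot i^{\alpha}\asymp \ell_i$ (the batch sum behaves like a partial sum with long-run variance $\sigma$), and $\E x_i^2\asymp i^{-\alpha}$.

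The dominant contribution comes from the term $\E x_ix_j\,\E T_iT_j$. For fixed $i$, the factor $\E x_ix_j$ is non-negligible only for $|i-j|\lesssim i^\alpha$, where it has size $i^{-\alpha}$; summing over $j$ gives $O(1)$. On that range $\E T_iT_j\asymp\ell_i$, because the batches overlap almost entirely when $\ell_i\gg i^\alpha$. Hence
\[
\var(\hat\sigma_n)\asymp n^{-2}\sum_{i\le n}\ell_i\asymp n^{-2}\,n^{1+\alpha}\log n=n^{\alpha-1}\log n .
\]
The cross terms $\E x_iT_j\,\E T_ix_j$ are $O(1)$ per pair over a band of width about $\ell_i$. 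They contribute at most the same order, so for the lower bound we must check that they do not cancel the main term. Because all the covariances $\E x_kx_m$ are positive (for $\eta_k<1$, which holds for large $k$), every Isserlis product is nonnegative. The contribution of $x_i^2$ is lower order, of size $n^{-2}\sum_i O(1)=O(1/n)$ after accounting for its covariances, so with positivity the $\asymp$ holds.

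\textbf{Main obstacle.} The hard step is the two-sided variance computation. For the upper bound we need uniform bounds on $\prod(1-\eta_m)$ across the dyadic-like window $j\in[i-\ell_i,i]$, where $\sum_{m}m^{-\alpha}$ over the window is about $\ell_i i^{-\alpha}\asymp\log i$, together with control of the tails where batches only partially overlap. For the lower bound, rather than bounding every Isserlis term, we restrict to the pairs $i\in[n/2,n]$, $|i-j|\le i^\alpha$, for which $\E x_ix_j\gtrsim i^{-\alpha}$ and $\E T_iT_j\gtrsim \ell_i$. We drop all other terms using positivity; handling the finitely many early indices with $\eta_k\ge1$ requires a separate bounded-contribution argument.

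Combining, the upper bound is $O(n^{\alpha-1}\log n)+O(n^{2\alpha-2})$ and the lower bound is $\gtrsim n^{\alpha-1}\log n$, which gives $\E(\hat\sigma_n-\sigma)^2\asymp n^{\alpha-1}\log n$.
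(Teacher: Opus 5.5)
Your proposal is correct and follows essentially the same route as the paper. It uses the bias--variance decomposition, handles the bias with Proposition~\ref{bias} plus the estimate $|\sigma_n-\sigma|\lesssim n^{\alpha-1}$, and applies Isserlis' formula with the $\E x_ix_j\,\E T_iT_j$ term dominating the upper bound. For the lower bound it uses positivity of the covariances on a band of late, heavily overlapping batches, with the $\sum_i x_i^2$ part, whose variance is $O(n)$, absorbed via the triangle inequality for standard deviations.

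Your remark about the finitely many early indices with $\eta_k\ge 1$, where $\prod(1-\eta_m)$ can change sign, is something the paper's blanket claim $\cov(X_i,X_j)\ge 0$ silently glosses over. It is a worthwhile refinement, not a discrepancy.
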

In Theorem \ref{thm:mean}, the order of MSE is $ n^{-1+\alpha}\log{n}$ as long as $\eta C_\eta >1$. Since $\eta$ and $C_{\eta}$ are free to choose in practice, this condition is easily achievable. We will assume this holds in the rest of our paper. For values of \( \alpha \) close to \( 1/2 \), the MSE rate approaches \( n^{-1/2} \log n \), which is significantly sharper than the rates of existing online covariance estimators. For instance, the non-overlapping online estimator in \cite{zhu2023online} and the batch-means estimator in \cite{chen2021statistical} achieve rates of \( n^{-1/3} \) and \( n^{-1/4} \), respectively.


\begin{table*}[h] 
\centering
\caption{Comparison of De-biased and Online BM Estimators for Different Models with $d=5$. Standard Deviations are Reported in Parentheses.}
\label{tb1}
\begin{tabular}{l|c|c|c|r} 
& & $n=15000$  &  $n=30000$  & $n=60000$ \\
\hline
\textbf{Linear} & De-biased & 1.55 (0.36) & 1.39 (0.35) & 1.21 (0.26) \\ 
               & Online BM& 1.79 (0.45) & 1.76 (0.48) & 1.65 (0.42) \\
\hline
\textbf{Logistic} & De-biased & 10.62 (0.92) & 9.78 (0.89) & 8.99 (0.92) \\ 
                  & Online BM & 11.08 (1.07) & 10.72 (1.30) & 10.21 (1.64) \\
\hline
\textbf{Expectile} & De-biased & 1.87 (0.34) & 1.68 (0.28) & 1.51 (0.24) \\ 
                   & Online BM & 2.19 (0.40) & 2.20 (0.49) & 2.15 (0.49) \\
\end{tabular}
\end{table*}

\begin{table*}[h]
\centering
\caption{Comparison of De-biased and Online BM Estimators for Different Models with $d=20$. Standard Deviations are Reported in Parentheses.}
\label{tb2}
\begin{tabular}{l|l|c|c|c|}
  &   & $n=50000$ & $n=100000$ & $n=200000$ \\
\hline
\textbf{Linear}
    & De-biased             & 4.96 (0.52) & 4.34 (0.41) & 3.82 (0.33) \\
    & Online BM    & 6.51 (0.95) & 5.95 (0.82) & 5.66 (0.74) \\
\hline
\textbf{Logistic}
    & De-biased             & 38.19 (1.33) & 36.01 (1.33) & 33.92 (1.28) \\
    & Online BM    & 43.50 (3.16) & 42.57 (3.34) & 41.98 (3.51) \\
\hline
\textbf{Expectile}
    & De-biased             & 5.10 (0.49) & 4.53 (0.39) & 4.02 (0.29) \\
    & Online BM    & 6.89 (0.93) & 6.45 (0.80) & 6.18 (0.77) \\
\end{tabular}
\end{table*}

\begin{table*}[h]
\centering
\caption{Comparison of De-biased and Online BM Estimators for Different Models with $d=50$. Standard Deviations are Reported in Parentheses.}
\label{tb3}
\begin{tabular}{l|l|c|c|c|}
  &   & $n=125000$ & $n=250000$ & $n=500000$ \\
\hline
\textbf{Linear}
    & De-biased             & 9.96 (0.52) & 8.63 (0.37) & 7.91 (0.32) \\
    & Online BM    & 14.37 (1.22) & 13.31 (0.98) & 12.68 (0.91) \\
\hline
\textbf{Logistic}
    & De-biased             & 96.07 (3.69) &94.09 (2.90) & 92.78(2.59) \\
    & Online BM    & 124.77 (8.48) & 122.90 (6.96) & 122.14 (7.12) \\
\hline
\textbf{Expectile}
    & De-biased             & 9.94 (0.49) & 8.76 (0.38) & 8.12 (0.35) \\
    & Online BM    & 15.10 (1.22) & 14.15 (1.02) & 13.62 (0.97) \\
\end{tabular}
\end{table*}

     	\subsection{General Convergence Analysis of the De-biased Estimator}\label{sec:general_theory}
        The convergence in the general setting is significantly more difficult to analyze due to the nonlinearity and complex dependency structure of the iterates. Nevertheless, in this section, we will show that the upper bound on the MSE: $\mathbb{E} \|\widehat{\Sigma}_{n} - \Sigma\|^2$ matches the exact rate established in Theorem~\ref{thm:mean}.
        
     Before the main theorem, we introduce some basic assumptions concerning convexity, Lipschitz continuity, boundness, and other regularity conditions.

     \begin{assumption} \label{as1}
	The objective function $F(x)$ is continuously differentiable and strongly convex with parameter $\mu>0$. That is, for any $x_1$ and $x_2$,
	$$F(x_2)\ge F(x_1)+\Braket{ \nabla F(x_1),x_2-x_1}+\frac{\mu}{2}\|x_1-x_2\|^2_2.$$
	Further assume that $\nabla^2 F(x^*)$ exists. 
\end{assumption} 
\begin{assumption}\label{as2}
	The function $f(x,\xi)$ is continuous differentiable with respect to $x$ for any $\xi$ and $\|\nabla f(x,\xi)\|_2$ is uniformly integrable for any $x$. 
\end{assumption}

\begin{assumption}\label{as3}
The gradient noise satisfies $\mathbb{E}_{\xi} \|\nabla f(x^*,\xi) \|_{2}^{q} < \infty$ for some $q\ge 8$, and $\nabla f(x,\xi)$ is stochastic Lipschitz continuous with parameter $L$, i.e., for any $x_1$ and $x_2$,
	$$
	(\mathbb{E}_{\xi} \|\nabla f(x_1,\xi)-\nabla f(x_2,\xi)\|_2^q)^{\frac{1}{q}}\leq L  \|x_1-x_2\|_2.$$

\end{assumption} 
Assumptions \ref{as1}-\ref{as3} are relatively mild and commonly appear in the literature on convex optimization and statistical inference using SGD, as well as in prior work on estimating the limiting covariance matrix \cite{chen2021statistical, lee2022fast, zhu2023online,zhu2024high}.

		
		   \begin{theorem}\label{thm:general}
		Under Assumption \ref{as1}-\ref{as3}, consider the SGD iterates $\{x_{i}\}_{i=1}^{n}$  with step sizes $\eta_i = \eta i^{-\alpha}$ for some $\eta > 0$ and $\alpha \in (0.5, 1)$.  Let $\widehat{\Sigma}_n$ be the de-biased covariance estimator defined in \eqref{eqn:est}, using batch sizes $\ell_i = C_\eta i^{\alpha} \log i$ for a constant $C_\eta > \eta^{-1}$. Then $\widehat{\Sigma}_n$ converges to the limiting covariance matrix $\Sigma$ at the following rate:
		$$\E \| \widehat\Sigma_n - \Sigma\|^2 \lesssim n^{-1+\alpha}\log{n}.$$
		
			\end{theorem}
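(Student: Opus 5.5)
Set $\delta_i = x_i - x^*$ and $A = \nabla^2 F(x^*)$. The plan is to reduce the general SGD sequence to a linear recursion driven by martingale differences, and then repeat the analysis of the linear Gaussian model behind Proposition~\ref{bias} and Theorem~\ref{thm:mean}, now using moment inequalities in place of exact formulas. Write
\[
\delta_i = (I-\eta_i A)\delta_{i-1} + \eta_i \varepsilon_i - \eta_i r_{i-1},
\]
where $\varepsilon_i = \nabla F(x_{i-1}) - \nabla f(x_{i-1},\xi_i)$ is a martingale difference sequence and $r_{i-1} = \nabla F(x_{i-1}) - A\delta_{i-1}$ is the linearization remainder. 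Split $\delta_i = U_i + V_i$. Here $U_i$ solves the linear recursion with the fixed noise $\varepsilon_i^* = -\nabla f(x^*,\xi_i)$, which is i.i.d. with covariance $S$. $V_i$ collects the noise increments $\varepsilon_i-\varepsilon_i^*$ and the remainder $r_{i-1}$.

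The first step is to use the standard moment bound $(\E\|\delta_i\|^q)^{1/q}\lesssim \eta_i^{1/2}$, which holds under Assumptions~\ref{as1}--\ref{as3} with $q\ge 8$. From it I would show $(\E\|V_i\|^4)^{1/4}\lesssim \eta_i$. The noise increments have size $L\|\delta_{i-1}\|\approx i^{-\alpha/2}$, which contributes $\eta_i^{1/2}\cdot i^{-\alpha/2}\approx i^{-\alpha}$. The remainder contributes about $\|\delta\|^2\approx i^{-\alpha}$, assuming a local Lipschitz Hessian bound. If Assumption~\ref{as1} only gives existence of $\nabla^2F(x^*)$, I would settle for $o(\eta_i^{1/2})$ and track the rate more carefully.

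The second step splits the estimator. Since $\widehat\Sigma_n$ is a quadratic form in $(\delta_i)$ after centering by $\bar\delta_n$, write $\widehat\Sigma_n = \widehat\Sigma_n^U + R_n$. Here $\widehat\Sigma_n^U$ is the estimator built from the $U_i$ and centered by $\bar U_n$. $R_n$ consists of cross terms and the centering corrections. Each cross term has the form $n^{-1}\sum_i V_i (\sum_{k\in B_i} U_k)\T$, bounded through $\|\sum_{k\in B_i}U_k\|_4 \lesssim (\ell_i\, i^{\alpha})^{1/2}\cdot i^{-\alpha/2}$; this uses the fact that partial sums of $U$ over a window of length $\ell$ behave like $\sum_{k\in B_i} A^{-1}\varepsilon_k^*$, of size $\sqrt{\ell_i}$. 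This gives $\|R_n\|_{L^2}\lesssim n^{-1}\sum_i i^{-\alpha}\sqrt{\ell_i}$, which is $o(n^{(\alpha-1)/2})$. The centering terms involve $\bar\delta_n = O_p(n^{-1/2})$ multiplied by $\sum_i \ell_i/n\approx n^{\alpha}\log n$, together with the identity $\sum_i \sum_{k\in B_i}\delta_k \approx$ weighted sums. Their size is $n^{-1}\cdot n^{\alpha}\log n$, which is also within the rate.

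The third step handles the linear part $\widehat\Sigma_n^U$. Since $U_i = \sum_{j\le i} Y_j^i\eta_j\varepsilon_j^*$ with $Y_j^i=\prod_{k=j+1}^i(I-\eta_kA)$, the computation in Proposition~\ref{bias} carries over with matrix products in place of scalars.
\begin{itemize}
\item \emph{Bias.} Using \eqref{varexpansion}, $\E\widehat\Sigma^U_n$ differs from $\sigma_n = n\Cov(\bar U_n)$ by terms controlled by $\|Y_{i-\ell_i}^i\|\lesssim\exp(-\lambda_{\min}(A)\eta i^{-\alpha}\ell_i)$, which is summable-small under the batch condition with $C_\eta$ large; here I take the constant in the exponent to absorb $\lambda_{\min}(A)$. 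Also $\|\sigma_n-\Sigma\|\lesssim n^{\alpha-1}$.
\item \emph{Variance.} $\widehat\Sigma_n^U$ is a quadratic form in the i.i.d. $\varepsilon_j^*$. Its variance is bounded using the fourth moments together with a Burkholder-type or quadratic-form inequality: $\Var\lesssim n^{-2}\sum_i \ell_i \approx n^{\alpha-1}\log n$. Heuristically, the terms $x_i\sum_{B_i}x_k$ are correlated over ranges of length $\ell_i$ and each has variance $O(1)$ after the $i^{-\alpha}$ scaling.
\end{itemize}

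I expect this variance computation for the non-Gaussian quadratic form, together with the requirement that the cross terms in Step 2 do not exceed $n^{(\alpha-1)/2}\sqrt{\log n}$, to be the main obstacle. I would handle it by writing $\widehat\Sigma^U_n-\E\widehat\Sigma^U_n$ as a sum of martingale differences in $j$, applying Burkholder's inequality, and summing the squared coefficient kernels explicitly.

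Combining the steps gives $\E\|\widehat\Sigma_n-\Sigma\|^2\lesssim n^{\alpha-1}\log n$.
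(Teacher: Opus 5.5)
Your outline is sound and has the same architecture as the paper's proof. The paper runs through the chain \eqref{eq:decomp}, $\widehat\Sigma_n\to\widehat\Sigma_{n,\delta}\to\widehat\Sigma_{n,U}\to\widehat\Sigma_{n,\widetilde U}\to\Sigma$. Your $U_i$ is its $\widetilde U_i$ and your $V_i$ is its $s_i+\Delta_i$, each $O(i^{-\alpha})$ in $L^4$. The centering and cross terms are bounded as you propose, by H\"older with $\|\sum_{k\in B_i}U_k\|_4\lesssim\sqrt{\ell_i}$ and $\|\bar\delta_n\|_4\lesssim n^{-1/2}$. You also need the cross terms $U_i(\sum_{k\in B_i}V_k)\T$, which you did not list. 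The crude bound $\|\sum_{k\in B_i}V_k\|_4\lesssim\sum_{k\in B_i}k^{-\alpha}\lesssim\log i$ gives $n^{-\alpha/2}\log n=o(n^{(\alpha-1)/2})$, so they are harmless.

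The real difference is how you get the variance of the i.i.d.-driven linear estimator. The paper works first with scalar Gaussian noise, using Isserlis' formula and the covariance bounds of Lemmas~\ref{indep-X} and~\ref{indep-W} (Theorem~\ref{indep-normal}). It then transfers to non-Gaussian noise by expanding the estimator into uncorrelated terms $\beta_{ij}e_ie_j$ (Lemma~\ref{indep-normal-comp}). Finally it passes to general $A$ via quadratic forms and a comparison with $\lambda_{\min}(A)\mathbf{I}_d$ (Lemmas~\ref{matrixv} and~\ref{fnormbound}).

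You instead estimate the coefficient kernel $c_{jk}$ of the quadratic form directly. That works. In $L^2$ the martingale differences are orthogonal, so Burkholder is not even needed. One finds $c_{jk}=O(1)$ for $|j-k|\lesssim\ell_{j\vee k}$, decaying exponentially beyond that range, so $\sum_{j,k}c_{jk}^2\asymp\sum_k\ell_k\asymp n^{1+\alpha}\log n$. This handles non-Gaussian multivariate noise in one pass. However, the kernel computation is the core of the proof, and you have only asserted its outcome. The paper's Gaussian route reuses its covariance lemmas and gets the matching lower bound from nonnegativity of the covariances.

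Both of your caveats are correct, and the paper glosses over them.
\begin{itemize}
\item The bound $\|r_i\|_4\lesssim\|\delta_{i-1}\|_8^2$ (Lemma~\ref{lemma diff seq}) uses a second-order Taylor remainder. That is a Lipschitz-Hessian-type condition, which Assumptions~\ref{as1}--\ref{as3} do not provide.
\item The bias of the linear part is governed by $n^{-1}\sum_i i^{-\lambda_{\min}(A)\eta C_\eta}$, not $n^{-1}\sum_i i^{-\eta C_\eta}$. For $f(x,\xi)=a(x-\xi)^2/2$, which satisfies all three assumptions, the bias is $\asymp n^{-a\eta C_\eta}$. This exceeds $n^{(\alpha-1)/2}\sqrt{\log n}$ when $2a\eta C_\eta<1-\alpha$. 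So the stated hypothesis $C_\eta>\eta^{-1}$ alone is not enough, and letting the constant absorb $\lambda_{\min}(A)$, as you do, is necessary.
\end{itemize}
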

   \begin{remark}
   The convergence rate established in Theorem \ref{thm:general} is notable for two key reasons. 
   
   First, when combined with Theorem \ref{thm:mean}, which analyzes the simple mean estimation setting, it demonstrates that the rate $n^{-1+\alpha}\log{n}$ is tight --- that is, it cannot be improved in general. Theorem \ref{thm:mean} shows that this rate is achieved exactly in the simplest setting, implying that our upper bound in the general model is sharp for the proposed de-biased estimator.

   Second, to the best of our knowledge, this rate is the best known in the literature for online covariance estimation under SGD without requiring information from the Hessian.  Specifically, the batch-means estimators studied in \cite{chen2021statistical} and \cite{zhu2023online} yield upper bounds of order $\mathcal{O}(n^{(-1+\alpha)/4})$ for  $\E\|\widehat\Sigma_n - \Sigma\|$, which is is significantly slower than our bound: 
    \[\E \| \widehat\Sigma_n - \Sigma\|\leq  \sqrt{\E \| \widehat\Sigma_n - \Sigma\|^2} \lesssim  n^{(-1+\alpha)/2}\sqrt{\log{n}}.\] Moreover, while the plug-in method in \cite{chen2021statistical} achieves a faster rate of $\mathcal{O}(n^{-\alpha/2})$, it relies heavily on repeated Hessian computation and matrix inversion, making it computationally expensive. In contrast, our estimator achieves a comparable rate—particularly when $\alpha \approx 1/2$, while remaining fully online and computationally efficient, without requiring any access to second-order information. 
   \end{remark}

	\section{NUMERICAL STUDY}\label{sec:simulation}

  In this section, we assess the empirical performance of the de-biased estimator across different settings. In the subsequent numerical experiments, $\xi_i=(a_i,b_i), i=1,2,...$ are \emph{i.i.d.} random vectors, where $a_i \sim \mathcal{N}(0,{\bf I}_d)$, and $b_i$ are drawn from different distributions based on the linear, logistic and expectile regression models. 
  

  For the linear regression model, $b_i \sim \mathcal{N}(a_i\T x^*,1)$, and the loss function is defined as the squared loss 
$$f(x,\xi_i  = (a_{i}, b_{i}))=(a_i\T  x-b_i)^2/2. $$
For the logistic regression model, $b_i \in \{1,-1\}$ is generated from a Bernoulli distribution with the probability given by $\mathbb{P}(b_i|a_i)=1/(1+\exp (-b_ia\T _ix^*))$. We use the logit loss 
$$f(x,\xi_i = (a_{i}, b_{i}))=\log (1+\exp (-b_ia\T _ix)).$$
Notice that both loss functions are the corresponding negative log-likelihood.

For expectile regression \citep{newey1987asymmetric}, which has been extensively studied and applied in the statistical and economic literature \citep{efron1991regression, taylor2008estimating,daouia2024expectile}, we specify the model as follows. Rewriting $x$ as $(x,x_0) \in \R^{d+1}$ where $x_0$ is a univariate intercept term. The loss function for $\tau$-th expectile regression is defined as 
$$f(x,x_0,\xi) = |\tau - 1_{\{b < a \T x+x_0\}}|(b- a\T x-x_0)^2, \ 0<\tau<1,$$
and $b_i \sim \mathcal{N}(a_i\T x^*,1)$.

The true parameter $x^*$ is an arithmetic sequence from $0$ to $1$ with length $d$.  We evaluate and report the estimation error $\|\wh{\Sigma}_{n}-\Sigma\|_F$.
All reported results are averaged over 500 independent runs. The learning rate in SGD is chosen as $\eta_i=0.5i^{-0.505}$,  following the choice in \cite{zhu2023online}.

For the linear regression model, 
a straightforward derivation yields $A=S={\bf I}_d$ and thus $\Sigma = {\bf I}_d$. For the logistic regression model, 
since an explicit form for $A$ and $S$ is challenging to derive, we empirically estimate the covariance matrix $\Sigma$ using Monte Carlo simulations.

In the expectile regression setting, 
the unique optimizer of its objective function is $(x^*,x_0^{\tau})$, where $x_0^{\tau}$ is the $\tau$-th expectile of a univariate standard Gaussian distribution. Elementary calculus yields 
$$A=2[(1 - \tau) + (2\tau - 1)\, \Phi(-x_0^{\tau})]{\bf I}_d,$$ $$S = 4(\tau^2 \alpha_+ + (1 - \tau)^2 \alpha_-){\bf I}_d,$$ 
with 
$$\alpha_+ = \Phi(-x_0^{\tau})\,(1 + (x_0^{\tau})^2) - x_0^{\tau}\phi(x_0^{\tau})$$
$$\alpha_- = (1 + \Phi(x_0^{\tau}))\,(1 + (x_0^{\tau})^2) +x_0^{\tau}\phi(x_0^{\tau}),$$ 
and $\Phi$ and $\phi$ denote the cumulative distribution function and probability density function of the standard normal distribution, respectively. Therefore, we have the analytic expression for $\Sigma = A^{-1}S A^{-1}$.



 We begin by investigating the impact of batch size on the convergence rate of the de-biased estimator in a simple one-dimensional linear model, where the loss is defined as the absolute error of the estimated variance.  For the online approach of the de-biased estimator, the batch sequence $\{a_k\}$ is chosen as $a_k = C a^0_k$ for some constant $C$, where
\begin{equation}
	a^0_{m} - a^0_{m-1} + 1= \lfloor (a^0_{m})^{\alpha}\log(a^0_{m})\rfloor, m\ge 2.
\end{equation}
 In the appendix, we will prove that this choice of the batch sequence also ensures $|B_i| = \mathcal{O}(i^{\alpha}\log i)$, similar to Proposition \ref{onlinebatch}.
 Figure \ref{compareC} shows that smaller values of $C$ lead to a slight reduction in estimation error. Overall, the convergence rate—reflected in the slope—remains largely unaffected by variations in $C$ within a reasonable range, demonstrating the robust convergence behavior of the de-biased estimator. Unless otherwise specified, we choose $C=0.5$ in the following simulations.

We compare the estimation error of our de-biased method with that of the online batch-means method (non-overlap version) proposed by \citet{zhu2023online}, which is, to the best of our knowledge, the only Hessian-free online estimator available in the literature. Figure \ref{fig:MSEd20} displays the Frobenius norm of the estimation error plotted against the number of iterations for linear, logistic, and expectile regression models. The results highlight a noticeably sharper convergence rate and substantially lower estimation error of the de-biased estimator. Additional plots for other settings are provided in the supplementary material. As shown in Tables \ref{tb1}-\ref{tb3}, the de-biased approach consistently achieves higher accuracy and stability than the non-overlap online method across different model types and dimensional settings as the number of iterations increases.

 In the appendix, we also present numerical experiment results to demonstrate the application of confidence interval construction. We evaluate the performances under the same settings. Post the estimation of the limiting covariance matrix with $\widehat{\Sigma}_n$, we construct the $(1-q)100\%$ confidence interval for the $j$-th coordinates of $x^*$ as
\begin{align*}
  \left[(\bar{x}_{n})_{j} -z_{1-q/2}\sqrt{\widehat{\Sigma}_{n,jj}/n}, 
  (\bar{x}_{n})_{j}+z_{1-q/2}\sqrt{\widehat{\Sigma}_{n,jj}/n} \right],
\end{align*}
where $(\bar{x}_{n})_{j}$ represents the $j$-th coordinates of the averaged SGD after $n$ iterations, $\widehat{\Sigma}_{n,jj}$ is the $j$-th diagonal of $\widehat{\Sigma}_n$, and $z_{1-q/2}$ is the $1-q/2$-th percentile of the standard Gaussian distribution.

\section{DISCUSSION}\label{sec:discussion}
In this article, we introduce a novel, fully online de-biased covariance estimator for averaged SGD. The bias-reduction technique significantly improves the estimation accuracy over existing Hessian-free approaches, leading to a convergence rate of $n^{(\alpha-1)/2} \sqrt{\log n}$. Our approach requires substantially less computation than the bootstrap and plug-in estimators while operating entirely on a single SGD trajectory. Beyond its computational efficiency, our method remains applicable in constrained settings where the Hessian or raw data are inaccessible. 

Several promising directions remain for future work. First, a non-asymptotic evaluation of confidence intervals could be obtained via advanced non-asymptotic Gaussian approximations \citep{shao2022berry, wei2025gaussian, sheshukova2025gaussian} when combined with the de-biased covariance estimator. Our refined convergence result would yield a sharper guarantee for coverage rates in finite-sample regimes. Second, since all matrix norms are equivalent for a fixed dimension, our main theorem naturally applies to other matrix metrics, such as the $L_1$ and Frobenius norms. However, explicitly characterizing the dimension-dependent convergence rate under specific matrix norms remains a highly valuable extension. Finally, since our proposed method can be viewed as a de-biased adaptation of the online batch-means estimator, we anticipate that similar generalizations to nonsmooth problems—as recently explored by \cite{Jiang2025Online}—should also apply to the bias-reduction approach.

\subsubsection*{Acknowledgements}
We sincerely thank the program chair, senior area chair, area chair, and the four reviewers for their constructive feedback and involved discussion, which has greatly improved the clarity of our paper. Wei Biao Wu’s research is partially supported by the NSF (Grant NSF/DMS-2311249).
     
	\bibliographystyle{apalike} 	
	\bibliography{reference.bib}

\section*{Checklist}

\begin{enumerate}

  \item For all models and algorithms presented, check if you include:
  \begin{enumerate}
    \item A clear description of the mathematical setting, assumptions, algorithm, and/or model. [Yes]
    \item An analysis of the properties and complexity (time, space, sample size) of any algorithm. [Yes]
    \item (Optional) Anonymized source code, with specification of all dependencies, including external libraries. [Yes]
  \end{enumerate}

  \item For any theoretical claim, check if you include:
  \begin{enumerate}
    \item Statements of the full set of assumptions of all theoretical results. [Yes]
    \item Complete proofs of all theoretical results. [Yes]
    \item Clear explanations of any assumptions. [Yes]     
  \end{enumerate}

  \item For all figures and tables that present empirical results, check if you include:
  \begin{enumerate}
    \item The code, data, and instructions needed to reproduce the main experimental results (either in the supplemental material or as a URL). [Yes]
    \item All the training details (e.g., data splits, hyperparameters, how they were chosen). [Yes]
    \item A clear definition of the specific measure or statistics and error bars (e.g., with respect to the random seed after running experiments multiple times). [Yes]
    \item A description of the computing infrastructure used. (e.g., type of GPUs, internal cluster, or cloud provider). [Yes]
  \end{enumerate}

  \item If you are using existing assets (e.g., code, data, models) or curating/releasing new assets, check if you include:
  \begin{enumerate}
    \item Citations of the creator If your work uses existing assets. [Not Applicable]
    \item The license information of the assets, if applicable. [Not Applicable]
    \item New assets either in the supplemental material or as a URL, if applicable. [Not Applicable]
    \item Information about consent from data providers/curators. [Not Applicable]
    \item Discussion of sensible content if applicable, e.g., personally identifiable information or offensive content. [Not Applicable]
  \end{enumerate}
  We are not using existing assets or releasing new assets.

  \item If you used crowdsourcing or conducted research with human subjects, check if you include:
  \begin{enumerate}
    \item The full text of instructions given to participants and screenshots. [Not Applicable]
    \item Descriptions of potential participant risks, with links to Institutional Review Board (IRB) approvals if applicable. [Not Applicable]
    \item The estimated hourly wage paid to participants and the total amount spent on participant compensation. [Not Applicable]\\
  \end{enumerate}
We did not use crowdsourcing or conduct research with human subjects.
\end{enumerate}


\clearpage
\appendix
\thispagestyle{empty}
\onecolumn
\aistatstitle{Refining Covariance Matrix Estimation in Stochastic Gradient Descent Through Bias Reduction: \\
Supplementary Materials}

The supplement material is organized as follows: In section \ref{sketch}, we demonstrate the high-level idea behind the proof of the main theorem. In section \ref{proof::technical}, we introduce some useful technical lemmas.
In section \ref{proof::iid}, we prove the consistency of our proposed de-biased estimator under the mean estimation model, as well as Proposition \ref{bias} and Theorem \ref{thm:mean}. In section \ref{sec:proof_general} we prove the main theorem in general cases, i.e., Theorem \ref{thm:general}. Additional details and results of numerical experiments are provided in Section \ref{secexp}.

\section{PROOF SKETCH OF THE MAIN THEOREM}\label{sketch}
For a vector $x\in \R^d $, we use $|x|$ to denote its Euclidean norm. For a random variable $X \in \R^d$, we use $\norm{X}_p = (\E\abs{X}^p)^{1/p}$ to denote its $L_p$ norm, and write $\norm{X} = \norm{X}_2$. For a (random) matrix $A \in \R^{d\times d}$, we use $|A|$ or $|A|_2$ to denote its operator norm, use $|A|_F$ to denote its Frobenius norm, and use $\lambda_{max}(A)$,  $\lambda_{min}(A)$ to denote its maximum and minimum eigenvalue. Write $\norm{A}_p = (\E\abs{A}^p)^{1/p}$ and $ \norm{A} = \norm{A}_2$. For a finite set $B$, we use $|B|$ to denote its cardinality.
 
Here and in the sequel, let the SGD iterates take the following form
\begin{equation}
	\begin{split}
		X_i & = X_{i-1} - \eta_i\nabla f(X_{i-1},\xi_i)\\
		& = X_{i-1} - \eta_i\nabla F(X_{i-1}) + \eta_{i}\epsilon_{i}, 
	\end{split}
\end{equation} 
where $\epsilon_{i} =  \nabla F(X_{i-1}) - \nabla f(X_{i-1}, \xi_i)$ is a martingale difference. Define the error sequence 
\[ \delta_{i} = X_{i} - x^{*}\]
which satisfies the recursion
\begin{align}\label{eq3}
	\delta_i= \delta_{i-1} - \eta_i\nabla F(X_{i-1})+\eta_i \epsilon_i.
\end{align}
By expressing $X_i$ in terms of $\delta_i$, the covariance estimator $\widehat\Sigma_n$ can be written as:
\[ \widehat{\Sigma}_{n} = 
		\frac1n  \sum_{i=1}^n \Bigg[  (\delta_i - \bar \delta_n) \bigg(\sum_{k=t_i}^{i} \delta_k - \ell_i\bar \delta_n \bigg)\T +  
		\bigg(\sum_{k=t_i}^{i} \delta_k - \ell_i\bar \delta_n \bigg) (\delta_i - \bar \delta_n)\T
		-  (\delta_i - \bar \delta_n)(\delta_i - \bar \delta_n)\T \Bigg], \] 
where $t_i = i - \ell_i +1$. The goal is to bound the mean squared error:
\[ \MSE(\widehat\Sigma_n) = \| \widehat{\Sigma}_{n}-\Sigma \|^2 \lesssim O(n^{-1+\alpha}\log n).\]

Our strategy is to first approximate the error sequence $\delta_i$ using a linear approximation, and then apply a martingale decomposition involving \textit{i.i.d} approximation.
Specifically, we define
\[U_i = (1 - \eta_i A) U_{i-1} + \eta_i \epsilon_i, U_0 = \delta_0,\] 
as the linear approximation sequence, derived from a first-order Taylor expansion. Letting $r_i = \nabla F(X_{i-1}) - A\delta_{i-1}$ for $i \ge 1$ and $r_0 = 0$, the error due to linear approximation satisfies the recursion 
\[s_i = (1 - \eta_i A) s_{i-1} - \eta_i r_i, s_0 = 0.\]
Similarly, let  $\epsilon_i^* = - \nabla f(x^*, \xi_i)$, which forms a mean-zero i.i.d. sequence. Then, the \textit{i.i.d} approximation sequence is given by
\[\widetilde U_i  = (1 - \eta_i A) \widetilde U_{i-1} + \eta_i \epsilon_i^*, \widetilde U_0 = \delta_0.\]
The error sequence due   to \textit{i.i.d} approximation is 
\[\Delta_i = (1 - \eta_i A)\Delta_{i-1} + \eta_i v_i, \Delta_0 = 0,\]
 where $v_i = \epsilon_i - \epsilon_i^*$.

We next define the estimator without centering:
			\begin{equation}
			\begin{split}
				\widehat{\Sigma}_{n,\delta} = 
				\frac1n  \sum_{i=1}^n \Bigg[  \delta_i \bigg(\sum_{k=t_i}^{i} \delta_k\bigg)\T+\bigg(\sum_{k=t_i}^{i} \delta_k\bigg)\delta_i \T -  \delta_i  \delta_i \T \Bigg],
			\end{split}
		\end{equation}
and the corresponding estimators based on the linear and i.i.d. approximations:
\begin{align*}
		\widehat{\Sigma}_{n, U}& = \frac1n\sum_{i=1}^n \left[ U_iU_i\T + U_i \biggl(\sum_{k=t_i}^{i-1} U_k\biggr)\T+ \biggl(\sum_{k=t_i}^{i-1} U_k\biggr) U_i\T \right],\\
		\widehat{\Sigma}_{n, \wt U}& = \frac1n\sum_{i=1}^n \left[ \widetilde U_i \widetilde U_i\T + \widetilde U_i \biggl(\sum_{k=t_i}^{i-1} \widetilde U_k\biggr)\T+\biggl(\sum_{k=t_i}^{i-1} \widetilde U_k\biggr)\widetilde U_i \T  \right]. \\
\end{align*} 
To bound the estimation error, we decompose it as follows:
\begin{align*}
\|\widehat{\Sigma}_{n}-\Sigma\| \leq   \|\widehat{\Sigma}_{n}-\widehat{\Sigma}_{n,\delta}\|+\|\widehat{\Sigma}_{n,\delta}-\widehat{\Sigma}_{n,U}\|+\|\widehat{\Sigma}_{n,U}-\widehat{\Sigma}_{n,\widetilde{U}}\|
+\|\widehat{\Sigma}_{n,\widetilde{U}}-\Sigma\|
,
\end{align*}

We will use these approximations, both in the mean estimation and general cases, to establish the convergence of our proposed estimator in the subsequent sections.

\section{TECHNICAL SUPPLEMENT}\label{proof::technical}
In this section, we introduce some fundamental technical results, which are frequently applied in the following analysis of SGD iterates. We defer the proof to Section \ref{auxiliary}. We also prove Proposition \ref{onlinebatch} in the paper, which quantifies the batch size.

\begin{lemma} \label{lemma Yij}
	Let $\lambda > 0$ be a constant. For any $i \in \N^+$, define a real sequence $\{Y_{(\lambda)}{}_j^i\}$ with
	\begin{equation}
		Y_{(\lambda)}{}_j^i = \left\{\begin{array}{cc}
			1 & \text{if $j = i$,} \\
			\displaystyle\prod_{k = j+1}^i (1 - \lambda \eta_k) & \text{if $j < i$.}
		\end{array}\right.
	\end{equation}
	Then for all $0 \le j \le i$, we have
	\begin{enumerate}
		\item \label{lemma Yij-1} \begin{equation}
			|Y_{(\lambda)}{}_j^i| \asymp \exp\left\{\frac{\lambda\eta}{1-\alpha} \left(j^{1-\alpha} - i^{1-\alpha} \right)  \right\}
			\le \exp\left\{ - \lambda\eta i^{-\alpha}(i-j) \right\}.
		\end{equation}
         and $ |Y_{(\lambda)}{}_j^i| \asymp \exp\left\{ - \lambda\eta i^{-\alpha}(i-j) \right\}$ if $j\ge i-Ci^{\alpha}\log i$ for some constant $C$.
		\item \label{lemma Yij-2} For $\beta, \gamma > 0$,
		\begin{equation}
			\sum_{j=1}^i \exp\left(\beta j^{1-\alpha}\right) j^{-\gamma\alpha} \asymp \exp\left(\beta i^{1-\alpha}\right) i^{-(\gamma -1)\alpha},
		\end{equation}
		which implies
		\begin{equation}
			\sum_{j=1}^i |Y_{(\lambda)}{}_j^i|^\beta |j^{-\alpha}|^\gamma
			\asymp i^{-(\gamma - 1)\alpha}.
		\end{equation}
		\item \label{lemma Yij-3} For any $n > i$,
		\begin{equation}
			S_{(\lambda)}{}_i^n  :=  \sum_{k = i+1}^n Y_{(\lambda)}{}_i^k \lesssim (i+1)^\alpha.
		\end{equation}
	\end{enumerate}
\end{lemma}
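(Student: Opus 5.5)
We prove the three parts of Lemma \ref{lemma Yij} in order, each reducing to elementary estimates on $\sum_k \eta_k = \eta\sum_k k^{-\alpha}$. Throughout we assume $i$ (or $j$) is large enough that $\lambda\eta_k<1/2$ for the relevant $k$. The finitely many initial factors only change constants, and since the lemma concerns $|Y|$, possible sign changes for small $k$ are harmless.

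\textbf{Part \ref{lemma Yij-1}.} Take logarithms and use $\log(1-x) = -x + O(x^2)$ for $x\in[0,1/2]$. This gives
\[
\log|Y_{(\lambda)}{}_j^i| = -\lambda\eta\sum_{k=j+1}^i k^{-\alpha} + O\Bigl(\sum_k k^{-2\alpha}\Bigr).
\]
The error term is summable because $2\alpha>1$, so it contributes only a bounded multiplicative constant. Comparing the sum with an integral, $\sum_{k=j+1}^i k^{-\alpha} = \frac{1}{1-\alpha}(i^{1-\alpha}-j^{1-\alpha}) + O(1)$, which gives the $\asymp$ statement.

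For the upper bound, use $1-x\le e^{-x}$ together with $k^{-\alpha}\ge i^{-\alpha}$ for $k\le i$. Then $\sum_{k=j+1}^i k^{-\alpha}\ge (i-j)i^{-\alpha}$, which yields $\exp\{-\lambda\eta i^{-\alpha}(i-j)\}$.

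For the refined two-sided statement when $i-j\le Ci^\alpha\log i$, it suffices to show $\sum_{k=j+1}^i (k^{-\alpha}-i^{-\alpha}) = O(1)$. By the mean value theorem, $k^{-\alpha}-i^{-\alpha}\lesssim (i-k)\, i^{-\alpha-1}$ when $k\ge i/2$. Summing over $k$ gives at most $(i-j)^2 i^{-\alpha-1}\lesssim i^{\alpha-1}\log^2 i\to0$, using $\alpha<1$. This is the one point where $\alpha<1$ and the logarithmic window matter.

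\textbf{Part \ref{lemma Yij-2}.} Set $g(x)=\exp(\beta x^{1-\alpha})x^{-\gamma\alpha}$, so that $g'(x) = g(x)\bigl(\beta(1-\alpha)x^{-\alpha} - \gamma\alpha/x\bigr)$. For large $x$ the first term dominates, and hence
\[
\frac{d}{dx}\Bigl[\exp(\beta x^{1-\alpha})x^{-(\gamma-1)\alpha}\Bigr] \asymp g(x).
\]
Integrating and comparing the sum with the integral gives the claimed $\asymp$. The comparison is valid because $g$ varies by at most a constant factor on unit intervals, since $x^{-\alpha}\to0$. The finitely many small-$j$ terms are bounded and absorbed, since the right-hand side grows to infinity.

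The consequence then follows from Part \ref{lemma Yij-1}:
\[
|Y_{(\lambda)}{}_j^i|^\beta \asymp \exp\Bigl\{\tfrac{\beta\lambda\eta}{1-\alpha}(j^{1-\alpha}-i^{1-\alpha})\Bigr\}.
\]
Apply the first display with $\beta$ replaced by $\beta\lambda\eta/(1-\alpha)$, and the factor $\exp(\cdot\, i^{1-\alpha})$ cancels.

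\textbf{Part \ref{lemma Yij-3}.} By Part \ref{lemma Yij-1}, $Y_{(\lambda)}{}_i^k\lesssim \exp\{-c(k^{1-\alpha}-i^{1-\alpha})\}$ with $c=\lambda\eta/(1-\alpha)$. Substitute $u=k^{1-\alpha}$, so that $dk = \frac{1}{1-\alpha}u^{\alpha/(1-\alpha)}du$. The sum is then bounded by a constant times $\int_{i^{1-\alpha}}^\infty e^{-c(u-i^{1-\alpha})}u^{\alpha/(1-\alpha)}du$.

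This integral is a tail integral of the form already handled in Part \ref{lemma Yij-2}, and it is $\asymp (i^{1-\alpha})^{\alpha/(1-\alpha)} = i^\alpha$. Equivalently, the terms stay of constant size for $k-i\lesssim i^\alpha$, then decay geometrically over each further window of length $i^\alpha$, with the windows lengthening only polynomially. The bound is uniform in $n$, and replacing $i^\alpha$ by $(i+1)^\alpha$ covers small $i$.

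The main obstacle is the refined two-sided bound in Part \ref{lemma Yij-1}, since it requires the second-order expansion of $k^{-\alpha}$ over a window of length $i^\alpha\log i$. The other steps are standard sum–integral comparisons.
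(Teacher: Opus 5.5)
Your proof is correct. Parts 1 and 2 follow the paper's route: take logarithms, absorb the summable $\sum_k\eta_k^2$ error using $2\alpha>1$, compare $\sum_{k=j+1}^i k^{-\alpha}$ with $\int u^{-\alpha}\,du$, and on the window $i-j\lesssim i^\alpha\log i$ bound the second-order term by $i^{\alpha-1}\log^2 i\to 0$.

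The differences are small.
\begin{itemize}
\item For the one-sided bound you use $1-x\le e^{-x}$ and $k^{-\alpha}\ge i^{-\alpha}$, where the paper Taylor-expands $j^{1-\alpha}$ about $i$. The exact inequality between the two exponentials in the statement follows from your monotonicity idea as $\int_j^i u^{-\alpha}\,du\ge (i-j)i^{-\alpha}$.
\item In Part 2 you differentiate the candidate antiderivative $\exp(\beta x^{1-\alpha})x^{-(\gamma-1)\alpha}$. The paper instead substitutes $u=\beta t^{1-\alpha}$, integrates by parts, and bounds the remainder by range times maximum. That remainder bound is of the same order as the main term, so as written it gives the lower bound only when the remainder is nonnegative, i.e.\ $\gamma\ge 1$. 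Your derivative comparison works for every $\gamma>0$ and is cleaner.
\end{itemize}

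The real divergence is Part 3. The paper gives no argument and defers to Lemma A.2 of Zhu et al.\ (2023). You derive it from Part 1 via a monotone sum--integral comparison and the substitution $u=k^{1-\alpha}$, which makes the lemma self-contained. One imprecision: the resulting integral $\int_0^\infty e^{-cv}(U+v)^{p}\,dv$, with $U=i^{1-\alpha}$ and $p=\alpha/(1-\alpha)$, is not really of the type handled in Part 2, where the integrand grows. The bound $(U+v)^p\lesssim U^p+v^p$ gives $\lesssim U^p+1\lesssim (i+1)^\alpha$ directly.

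One caveat, shared with the paper: your claim that the finitely many initial factors only change constants needs $\lambda\eta_k\neq 1$ for every $k$. If $\lambda\eta k^{-\alpha}=1$ for some integer $k$ (for instance $\lambda\eta=1$, $k=1$), then $Y_{(\lambda)}{}_j^i=0$ for all $j<k\le i$. In that case the pointwise lower bound in Part 1 fails, so this degenerate case must be excluded. The conclusions of Parts 2 and 3 survive it.
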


\begin{lemma} \label{lemma batch sum}
	For $\gamma > 0$,
	\begin{equation}
		\sum_{j=t_i}^i j^{-\gamma} \lesssim i^{-\gamma}\ell_i \asymp i^{\alpha-\gamma}\log i.
	\end{equation}
\end{lemma}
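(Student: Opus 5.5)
The bound follows from comparing the sum with an integral, using only the monotonicity of $j \mapsto j^{-\gamma}$ and the growth of the batch size $\ell_i = C_\eta i^{\alpha}\log i$. Throughout, $t_i = i-\ell_i+1$ and the sum has exactly $\ell_i$ terms. The key point is that the batch is short relative to $i$: since $\alpha<1$, we have $\ell_i/i = C_\eta i^{\alpha-1}\log i \to 0$. So for all sufficiently large $i$ (say $i\ge i_0$, with $i_0$ depending only on $\alpha$ and $C_\eta$) we get $\ell_i \le i/2$, and hence $t_i \ge i/2$ for those $i$.

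\textbf{Large $i$.} For $i\ge i_0$, every index in the batch satisfies $j \ge t_i \ge i/2$. Because $j^{-\gamma}$ is decreasing, each summand obeys $j^{-\gamma} \le (i/2)^{-\gamma} = 2^{\gamma} i^{-\gamma}$. Summing over the $\ell_i$ indices gives
\[
\sum_{j=t_i}^i j^{-\gamma} \le 2^{\gamma}\, i^{-\gamma}\,\ell_i .
\]

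\textbf{Small $i$.} For the finitely many $i<i_0$, both sides are positive quantities bounded by constants depending only on $(\alpha,\gamma,C_\eta)$. Enlarging the implicit constant therefore covers these cases.

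\textbf{Batch-size identity.} The asymptotic $i^{-\gamma}\ell_i \asymp i^{\alpha-\gamma}\log i$ follows by substituting $\ell_i = C_\eta i^\alpha \log i$ directly. If instead the block-based batching scheme is used, Proposition \ref{onlinebatch} gives $\lfloor i^\alpha \log i\rfloor \le \ell_i \le 2\lfloor i^\alpha\log i\rfloor$, which yields the same two-sided order. One technicality is $i=1$, where $\log i = 0$ and $\ell_1 = 0$. This is handled by taking $\ell_i \ge 1$ and absorbing the case into constants, or by reading the statement for $i\ge 2$.

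\textbf{Main obstacle.} The only step needing care is justifying $t_i \gtrsim i$, because the argument fails for batches of length comparable to $i$. This is exactly where $\alpha<1$ is used: it makes $i^{\alpha}\log i = o(i)$. For completeness, the same reasoning also gives the matching lower bound $\sum_{j=t_i}^i j^{-\gamma} \ge \ell_i\, i^{-\gamma}$, since every term is at least $i^{-\gamma}$. So the stated bound is in fact an $\asymp$.
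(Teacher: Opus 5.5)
Your proof is correct, and it takes a more elementary route than the paper's.

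\textbf{The paper's route.} It compares the sum with $\int_{t_i}^{i} x^{-\gamma}\,dx$ and writes this as $i^{1-\gamma}-t_i^{1-\gamma}$. It then Taylor-expands to second order around $i$. The leading term gives $i^{-\gamma}\ell_i$. The remainder $u^{-\gamma-1}\ell_i^2$, with $u \ge t_i \gtrsim i$, is smaller by a factor $\ell_i/i \asymp i^{\alpha-1}\log i$.

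\textbf{Your route.} You bound each of the $\ell_i$ summands between $i^{-\gamma}$ and $(i/2)^{-\gamma}$ once $t_i \ge i/2$. Both arguments rest on the same fact, namely $\ell_i = o(i)$ because $\alpha<1$.

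\textbf{What each buys.} Your argument needs no integral comparison or Taylor remainder. It gives the two-sided bound $i^{-\gamma}\ell_i \le \sum_{j=t_i}^i j^{-\gamma} \le 2^{\gamma} i^{-\gamma}\ell_i$ directly, and it works uniformly for every $\gamma>0$. The paper's expression $i^{1-\gamma}-t_i^{1-\gamma}$, by contrast, silently drops the normalizing factor $(1-\gamma)^{-1}$ of the integral. As written it therefore has the wrong sign for $\gamma>1$ and vanishes identically at $\gamma=1$, where the integral is really $\log(i/t_i)$. In principle the paper's expansion yields the sharper statement $\sum_{j=t_i}^i j^{-\gamma} = i^{-\gamma}\ell_i\,(1+O(\ell_i/i))$. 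Downstream, however, only the order of magnitude is used, so your version is fully sufficient.

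\textbf{Small $i$.} You treat the empty sum at $i=1$ and absorb the finitely many small $i$ into the constant, implicitly assuming $t_i \ge 1$ so the sum is defined. This is at least as careful as the paper, which does not address these cases at all.
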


\noindent
\begin{lemma}[Rosenthal inequality] \label{lemma Rosenthal}
	If $\xi_1, \cdots, \xi_n$ are independent zero mean random variables and $p \ge 2$, then there exist constant $C_p$ only depends on $p$ such that
	\begin{align}
		\norm{\sum_{i=1}^n \xi_i}_p &\le C_p \max\left\{  \biggl( \sum_{i=1}^n \E\abs{\xi}^2 \biggr)^{1/2},
		\biggl( \sum_{i=1}^n \E\abs{\xi}^p \biggr)^{1/p} \right\} .
	\end{align}
\end{lemma}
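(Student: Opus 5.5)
The statement to prove is the Rosenthal inequality (Lemma \ref{lemma Rosenthal}). I would treat it as a classical result and give a short self-contained argument via symmetrization and the Burkholder square-function inequality, instead of reproducing Rosenthal's original combinatorial proof.

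\textbf{Step 1 (reduction to a square function).} Let $S_n=\sum_{i=1}^n \xi_i$. The $\xi_i$ are independent and mean zero, so the partial sums form a martingale. Burkholder's inequality (or symmetrization plus Khintchine for real or vector-valued summands in $\R^d$) gives
\[
\norm{S_n}_p \le C_p \Bigl\| \Bigl(\sum_{i=1}^n \abs{\xi_i}^2\Bigr)^{1/2}\Bigr\|_p .
\]
It therefore suffices to bound $\E\bigl(\sum_i \abs{\xi_i}^2\bigr)^{p/2}$, i.e.\ the $L_{p/2}$ norm of a sum of independent \emph{nonnegative} variables $Y_i=\abs{\xi_i}^2$.

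\textbf{Step 2 (moments of nonnegative sums).} Set $r=p/2\ge1$ and $T=\sum_i Y_i$. I would show
\[
\E T^r \le C_r\Bigl[\Bigl(\sum_i \E Y_i\Bigr)^r+\sum_i \E Y_i^r\Bigr].
\]
Write $T^r=\sum_i Y_i T^{r-1}$. Split $T=Y_i+T_{(i)}$, where $T_{(i)}$ is independent of $Y_i$. Then use $(a+b)^{r-1}\le C(a^{r-1}+b^{r-1})$ to get
\[
\E T^r\le C\sum_i \E Y_i^r + C\sum_i \E Y_i\,\E T^{r-1}.
\]
Hölder gives $\E T^{r-1}\le (\E T^r)^{(r-1)/r}$. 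Put $x=(\E T^r)^{1/r}$, $a=\sum_i\E Y_i$ and $b=\sum_i\E Y_i^r$. This yields $x^r\le Cb+Ca\,x^{r-1}$. Either $x^r\le 2Cb$, or $x\le 2Ca$, so $x\lesssim a+b^{1/r}$.

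\textbf{Step 3 (combine).} Substituting $Y_i=\abs{\xi_i}^2$ and $r=p/2$:
\[
\norm{S_n}_p\lesssim \Bigl(\sum_i\E\abs{\xi_i}^2\Bigr)^{1/2}+\Bigl(\sum_i\E\abs{\xi_i}^p\Bigr)^{1/p},
\]
which is at most twice the maximum, as claimed.

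The only delicate point is Step 2 when $1\le r<2$. There the inequality $(a+b)^{r-1}\le a^{r-1}+b^{r-1}$ holds with constant $1$ (subadditivity), so the argument goes through. Finiteness of $\E T^r$, needed to divide by $x^{r-1}$, can be assumed, since otherwise the right-hand side is infinite. In the paper's applications, the $\epsilon_i^*$ are i.i.d., so this lemma is used in exactly this form.
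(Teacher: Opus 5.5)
Your proof is correct. It necessarily takes a different route from the paper, because the paper gives no proof of this lemma: it quotes the classical Rosenthal inequality, and the appendix on auxiliary results proves only the lemmas on $Y_{(\lambda)}{}_j^i$ and on batch sums.

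\textbf{Why your argument works.} Your reduction to the square function is valid for independent centered summands. Symmetrization plus Khintchine (Khintchine--Kahane in the Euclidean vector case) gives it without invoking full martingale theory. Your self-bounding step is the standard elementary proof of Rosenthal's inequality for nonnegative summands. It combines $T_{(i)}\le T$, independence of $Y_i$ and $T_{(i)}$, and Lyapunov's inequality $\E T^{r-1}\le(\E T^r)^{(r-1)/r}$ to get $x^r\le Cb+Ca\,x^{r-1}$. The exponents in Step 3 also match, since $\norm{S_n}_p\lesssim(\E T^{p/2})^{1/p}=x^{1/2}$.

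\textbf{Minor points.}
\begin{itemize}
\item You do not need to assume $\E T^r<\infty$. A finite right-hand side gives $\norm{Y_i}_r<\infty$ for each of the finitely many $i$, so $\norm{T}_r<\infty$ by Minkowski.
\item The case $1\le r<2$ is no more delicate than $r\ge 2$. The latter simply uses convexity, giving the constant $2^{r-2}$.
\item Your closing remark is inaccurate. The paper never invokes this lemma: all its moment bounds for sums of $\epsilon_i$, $\epsilon_i^*$ and $v_i$ go through the Burkholder-type Lemma~\ref{lemma Burkholder} or Rio's inequality.
\item The statement's $\E\abs{\xi}^2$ and $\E\abs{\xi}^p$ should read $\E\abs{\xi_i}^2$ and $\E\abs{\xi_i}^p$, as you implicitly assumed.
\end{itemize}

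\textbf{What each approach buys.} Your argument is self-contained modulo Khintchine or Burkholder. Citing the classical result gives a better constant: tracking your Step 2 yields $C_p$ growing exponentially in $p$, whereas the known sharp constant is of order $p/\log p$. This difference does not matter here, since the lemma only claims the existence of some $C_p$.
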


\noindent
\begin{lemma}[Burkholder inequality] \label{lemma Burkholder}
	If $D_1, \cdots, D_n$ are martingale differences and $p \ge 2$, then
	\begin{equation}
		\norm{\sum_{i=1}^n D_i}_p^2 \le (p - 1)\sum_{i=1}^n \norm{D_i}_p^2.
	\end{equation}
\end{lemma}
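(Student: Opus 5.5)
We prove the bound by induction on $n$, using the one-step inequality
\begin{equation*}
\norm{S_{k-1}+D_k}_p^2 \le \norm{S_{k-1}}_p^2 + (p-1)\norm{D_k}_p^2, \qquad S_{k}=\sum_{i=1}^{k} D_i,\ S_0=0 .
\end{equation*}
Summing this over $k=1,\dots,n$ gives the lemma, since $S_0=0$. We may assume every $\norm{D_i}_p<\infty$; otherwise there is nothing to prove. The argument works for vector-valued $D_i$ with the Euclidean norm $|\cdot|$, and the scalar case is included.

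\textbf{Setting up a one-variable function.} Fix $k$ and write $Y=S_{k-1}$, $D=D_k$, and $\mathcal F_{k-1}$ for the filtration with $\E(D\mid \mathcal F_{k-1})=0$. Define
\begin{equation*}
\varphi(t)=\E|Y+tD|^p, \qquad g(t)=\varphi(t)^{2/p}=\norm{Y+tD}_p^2, \qquad t\in[0,1].
\end{equation*}
Because $p\ge 2$, the map $y\mapsto |y|^p$ is $C^2$. Its gradient is $p|y|^{p-2}y$, and its Hessian satisfies $d\T \nabla^2(|y|^p)\, d \le p(p-1)|y|^{p-2}|d|^2$. Since $Y,D\in L_p$, dominated convergence lets us differentiate under the expectation:
\begin{equation*}
\varphi'(t)=p\,\E\big[|Y+tD|^{p-2}(Y+tD)\T D\big], \qquad \varphi''(t)\le p(p-1)\,\E\big[|Y+tD|^{p-2}|D|^2\big].
\end{equation*}

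\textbf{First derivative at zero.} At $t=0$ we have $\varphi'(0)=p\,\E\big[|Y|^{p-2}Y\T \E(D\mid\mathcal F_{k-1})\big]=0$, because $Y$ is $\mathcal F_{k-1}$-measurable. Hence $g'(0)=0$, provided $\varphi(0)>0$.

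\textbf{Second derivative.} We have
\begin{equation*}
g''=\tfrac{2}{p}\big(\tfrac{2}{p}-1\big)\varphi^{2/p-2}(\varphi')^2+\tfrac{2}{p}\varphi^{2/p-1}\varphi'' .
\end{equation*}
The first term is nonpositive because $p\ge2$. For the second term, apply Hölder's inequality with exponents $p/(p-2)$ and $p/2$:
\begin{equation*}
\E\big[|Y+tD|^{p-2}|D|^2\big]\le \varphi(t)^{(p-2)/p}\,\norm{D}_p^2 .
\end{equation*}
Substituting this into the second term gives $g''(t)\le 2(p-1)\norm{D}_p^2$.

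\textbf{Conclusion of the induction step.} Taylor's theorem with remainder then yields
\begin{equation*}
g(1)\le g(0)+g'(0)+\tfrac12\sup_{t\in[0,1]} g''(t)\le \norm{Y}_p^2+(p-1)\norm{D}_p^2 ,
\end{equation*}
which is the one-step inequality.

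\textbf{Main obstacle: where $\varphi$ vanishes.} The delicate point is that $g=\varphi^{2/p}$ need not be twice differentiable at points where $\varphi(t)=0$; this includes $t=0$ when $k=1$. We handle this by working with $g_\varepsilon(t)=(\varepsilon+\varphi(t))^{2/p}$ for $\varepsilon>0$.
\begin{itemize}
\item The same computation goes through for $g_\varepsilon$, because $\varphi^{2/p-1}\varphi^{(p-2)/p}$ is replaced by $(\varepsilon+\varphi)^{2/p-1}\varphi^{(p-2)/p}\le 1$.
\item This gives $g_\varepsilon(1)\le g_\varepsilon(0)+(p-1)\norm{D}_p^2$.
\item Letting $\varepsilon\downarrow0$ recovers the one-step inequality.
\end{itemize}
The only other check is the justification for differentiating under the expectation. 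This follows from the domination $|Y+tD|^{p-1}|D|+|Y+tD|^{p-2}|D|^2\lesssim |Y|^p+|D|^p$, which is integrable.
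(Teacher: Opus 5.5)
The paper does not actually prove this lemma. It is listed among the standard tools in Section \ref{proof::technical}, but the appendix only supplies proofs for Lemmas \ref{lemma Yij} and \ref{lemma batch sum}. The squared form with constant exactly $p-1$ is Rio's inequality, which the paper cites by name \citep{rio_moment_2009} in the proof of Lemma \ref{Lemma-C-0}.

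Your argument is correct and is essentially Rio's proof:
\begin{itemize}
\item the first-order term of $t\mapsto\|Y+tD\|_p^2$ vanishes at $t=0$ by the martingale property;
\item the second derivative is bounded by $2(p-1)\|D\|_p^2$, using the Hessian bound for $|y|^p$ together with H\"older with exponents $p/(p-2)$ and $p/2$;
\item the regularization $(\varepsilon+\varphi)^{2/p}$ correctly handles the point where $\varphi$ vanishes, including $t=0$ when $k=1$;
\item the domination you use to differentiate under $\E$ is right.
\end{itemize}

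Compared with the paper's bare citation, your version gains two things.

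First, it is self-contained and works for $\R^d$-valued differences with the Euclidean norm. That is how the lemma is actually used, e.g.\ for $\sum_i Y_{(A)}{}_i^n\eta_i\epsilon_i$ in Lemma \ref{lemma C iter}.

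Second, your intermediate one-step inequality $\|Y+D\|_p^2\le\|Y\|_p^2+(p-1)\|D\|_p^2$ needs only $\E[|Y|^{p-2}Y\T D]=0$, i.e.\ $\E[D\mid Y]=0$. This is exactly the form the paper invokes in the proof of Lemma \ref{Lemma-C-0}, with $Y=\delta_{i-1}-\eta_i\nabla F(X_{i-1})$ and $D=\eta_i\epsilon_i$. So your proof justifies that step as well.
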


\subsection{Proof of Proposition \ref{onlinebatch}}
\begin{proof}
We consider two cases

(i) Case $i = a_m$: By the definition of the sequence $\{a_m\}$, we have
\[
|B_i| = a_m - a_{m-1} + 1 = \lfloor a_m^{\alpha} \log a_m \rfloor = \lfloor i^{\alpha} \log i \rfloor.
\]
So the claimed bounds hold exactly in this case.
 
(ii) Case $i > a_m$: We write the batch size as
\[
|B_i| = i - a_{m-1} + 1 = (a_m - a_{m-1} + 1) + (i - a_m).
\]
The first term satisfies $a_m - a_{m-1} + 1 = \lfloor a_m^{\alpha} \log a_m \rfloor$ by construction. To analyze the second term, note that:
\[
\lfloor (k+1)^{\alpha} \log(k+1) \rfloor - \lfloor k^{\alpha} \log k \rfloor \le 1, \quad \forall k \ge 1.
\]
By applying a telescoping sum from $k = a_m$ to $i - 1$, we obtain
\[
i - a_m \ge \lfloor i^{\alpha} \log i \rfloor - \lfloor a_m^{\alpha} \log a_m \rfloor.
\]
Therefore,
\[
|B_i| = \lfloor a_m^{\alpha} \log a_m \rfloor + (i - a_m) \ge \lfloor i^{\alpha} \log i \rfloor.
\]
For the upper bound, note that
\[\begin{split}
&i - a_m \\
=& a_{m+1} - a_m - (a_{m+1} - i) \\
\le& \lfloor a_{m+1}^{\alpha} \log a_{m+1} \rfloor - \left( \lfloor a_{m+1}^{\alpha} \log a_{m+1} \rfloor - \lfloor i^{\alpha} \log i \rfloor \right) \\
=& \lfloor i^{\alpha} \log i \rfloor.
\end{split}
\]

This gives
\[
\begin{split}
|B_i| &= \lfloor a_m^{\alpha} \log a_m \rfloor + (i - a_m) \\
&\le \lfloor i^{\alpha} \log i \rfloor + \lfloor i^{\alpha} \log i \rfloor = 2 \lfloor i^{\alpha} \log i \rfloor.
\end{split}
\]

Combining both bounds yields the desired result:
\[
\lfloor i^{\alpha} \log i \rfloor \le |B_i| \le 2 \lfloor i^{\alpha} \log i \rfloor.
\]
\end{proof}

\begin{remark}
In practice, we choose the batch sequence $\tilde{a}_m=Ca_m$ (or the closest integer to $Ca_m$) to obtain some flexibility. Based on Proposition \ref{onlinebatch}, we can show that the batch size $|\tilde{B}_i|$ decided by $\tilde{a}_m$ also meets the condition $|\tilde{B}_i| \asymp i^{\alpha}\log i$: for sufficiently large $i$ and $m$ such that $\tilde{a}_m \leq i < \tilde{a}_{m+1}$, we find an integer $k$ such that $|k-i/C| \leq 1$ and $a_m \leq k < a_{m+1} $. Proposition \ref{onlinebatch} shows $ k-a_{m-1} \asymp k^{\alpha}\log k$, which further implies 
$$i-\tilde{a}_{m-1}  \asymp Ck^{\alpha}\log k \asymp C (\frac{i}{C})^{\alpha} (\log i - \log C) \asymp i^{\alpha} \log i. $$
\end{remark}

\section{MEAN ESTIMATION MODEL}\label{proof::iid}
In this subsection, we consider the data $\{y_i\}_{i \in \N^+}$ generated by a linear model
\[ y_i = x^* + e_i, \]
where $x^* \in \R$ is the true mean to be estimated, and $e_i$'s are i.i.d. random errors with $\E[e_i] = 0$, $\E[e_i^2] < \infty$. Consider the squared loss function at $x$: $f(x, y) = (y - x)^2/2$. Clearly,
\[ F(x) = \E_y[f(x, y)] = \frac12 \E[(x - x^* - e)^2] = \frac{(x - x^*)^2 + \E[e^2]}{2}, \]
and $x^* = \argmin_x F(x)$ holds. Then the $i$-th SGD iterate takes the form
\begin{align*}
	X_i &= X_{i-1} - \eta_i \nabla f(X_{i-1}, y_i) \\
	&= X_{i-1} + \eta_i(y_i - X_{i-1}),
\end{align*}
and the error $\delta_i = X_i - x^*$ takes the form
\begin{align*}
	\delta_i &= \delta_{i-1} + \eta_i(e_i - \delta_{i-1}) \\
	& = (1 - \eta_i)\delta_{i-1} + \eta_i e_i,
\end{align*}
where $\eta_i = \eta i^{-\alpha}$ with $\eta > 0$ and $\alpha \in (0.5, 1)$.

Without loss of generality, assume $x^* = 0$. Then $\delta_i = X_i$ and the SGD iterate takes the form
\begin{equation}\label{eq:SGDform_mean}
	X_i = (1 - \eta_i) X_{i-1} + \eta_i e_i, \quad i \ge 1;
	\qquad X_0 = x_0,
\end{equation}
where $x_0$ is deterministic. We first provide some useful technical lemmas in Section \ref{sec:lemmas_mean_est}. Then, in Section \ref{sec:proof_mean_prop}, we prove Proposition \ref{bias} and Theorem \ref{thm:mean}, where $e_i$ is assumed to be a mean-zero Gaussian random error. Finally, we generalize the results to the case where $e_i$ is non-Gaussian and $\{X_i\}$ is multi-dimensional.

\subsection{Technical Lemmas for the Mean Estimation Model}\label{sec:lemmas_mean_est}
\begin{lemma} \label{indep-X}
	With the definition of $\{Y_{(\lambda)}{}_j^i\}$ in Lemma \ref{lemma Yij}, sequence $\{X_i\}$ can be rewritten as
	\begin{equation}\label{mean_xi}
		X_i =  Y_j^i X_j + \sum_{k=j+1}^i  Y_k^i \eta_k e_k, 
		\quad 0 \le j < i,
	\end{equation}
	where $Y_k^i := Y_{(1)}{}_k^i$. Therefore for all $1\le j < i$, we have 
	\begin{enumerate}
		\item $\var(X_i) \asymp \E[X_i^2] \asymp i^{-\alpha}$.  \label{indep-X-1}
		\item $0 \leq \cov(X_i, X_j) \asymp  \exp\left\{ \frac{\eta}{1 - \alpha} (j^{1-\alpha} - i^{1-\alpha}) \right\} j^{-\alpha} \le \exp\left\{ -\eta i^{-\alpha}(i-j) \right\} j^{-\alpha}$. \label{indep-X-2}
		\item $\cov( X_i, \sum_{k=1}^j X_k ) \asymp  \exp\left\{ \frac{\eta}{1 - \alpha} (j^{1-\alpha} - i^{1-\alpha}) \right\} \le \exp\left\{ -\eta i^{-\alpha}(i-j) \right\}$. \label{indep-X-3}
		\item $\cov( \sum_{k=i}^\infty X_k, X_j) \lesssim  \exp\left\{ \frac{\eta}{1 - \alpha} (j^{1-\alpha} - i^{1-\alpha}) \right\} i^\alpha j^{-\alpha} \le \exp\left\{ -\eta i^{-\alpha}(i-j) \right\} i^\alpha j^{-\alpha}$. \label{indep-X-4}
	\end{enumerate}
\end{lemma}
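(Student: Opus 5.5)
The representation \eqref{mean_xi} follows by unrolling the recursion \eqref{eq:SGDform_mean}. By induction on $i$, starting from $i=j+1$, where $X_{j+1}=(1-\eta_{j+1})X_j+\eta_{j+1}e_{j+1}=Y_j^{j+1}X_j+Y_{j+1}^{j+1}\eta_{j+1}e_{j+1}$, and using $Y_k^{i}=(1-\eta_i)Y_k^{i-1}$ for $k<i$ and $Y_i^i=1$. All four claims then reduce to covariance computations with independent $e_k$, and we apply Lemma \ref{lemma Yij} with $\lambda=1$.

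\textbf{Item 1.} Take $j=0$. Since $x_0$ is deterministic,
\[
\var(X_i)=\sigma\sum_{k=1}^i (Y_k^i)^2\eta^2k^{-2\alpha}.
\]
Lemma \ref{lemma Yij}, part \ref{lemma Yij-2}, with $\beta=2$ and $\gamma=2$ gives the order $i^{-\alpha}$. For $\E[X_i^2]$ we add $(Y_0^i x_0)^2$. By part \ref{lemma Yij-1}, $Y_0^i\lesssim \exp\{-\eta i^{1-\alpha}/(1-\alpha)\}$, which decays faster than $i^{-\alpha}$, so $\E[X_i^2]\asymp i^{-\alpha}$ as well. (If $1-\eta_k<0$ for finitely many small $k$, this only changes constants.)

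\textbf{Item 2.} For $j<i$, apply \eqref{mean_xi}. The sum $\sum_{k>j}$ is independent of $X_j$, so
\[
\cov(X_i,X_j)=Y_j^i\var(X_j).
\]
Item 1 gives $\var(X_j)\asymp j^{-\alpha}$, and part \ref{lemma Yij-1} gives $Y_j^i\asymp\exp\{\frac{\eta}{1-\alpha}(j^{1-\alpha}-i^{1-\alpha})\}$, together with the upper bound $\exp\{-\eta i^{-\alpha}(i-j)\}$. Nonnegativity holds because $Y_j^i\ge 0$ once the step sizes are below $1$.

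\textbf{Item 3.} Write
\[
\cov\Bigl(X_i,\sum_{k\le j}X_k\Bigr)=\sum_{k=1}^j Y_k^i\var(X_k)\asymp\sum_{k=1}^j \exp\Bigl\{\tfrac{\eta}{1-\alpha}(k^{1-\alpha}-i^{1-\alpha})\Bigr\}k^{-\alpha}.
\]
Factor out $\exp\{-\frac{\eta}{1-\alpha}i^{1-\alpha}\}$ and apply part \ref{lemma Yij-2} with $\gamma=1$. This gives
\[
\sum_{k\le j}\exp\Bigl(\tfrac{\eta}{1-\alpha}k^{1-\alpha}\Bigr)k^{-\alpha}\asymp \exp\Bigl(\tfrac{\eta}{1-\alpha}j^{1-\alpha}\Bigr),
\]
which is the claimed order. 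The exponential upper bound then follows from $j^{1-\alpha}-i^{1-\alpha}\le -(1-\alpha)i^{-\alpha}(i-j)$, which is concavity of $t\mapsto t^{1-\alpha}$.

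\textbf{Item 4.} This is the step I expect to be the main obstacle, since it needs a uniform tail bound over an infinite sum. By item 2, with $j<i\le k$,
\[
\cov\Bigl(\sum_{k\ge i}X_k,X_j\Bigr)=\var(X_j)\sum_{k\ge i}Y_j^k=\var(X_j)\,Y_j^i\Bigl(1+\sum_{k>i}Y_i^k\Bigr),
\]
using $Y_j^k=Y_j^iY_i^k$. Part \ref{lemma Yij-3} bounds $\sum_{k>i}Y_i^k$ by $(i+1)^\alpha$ uniformly in $n$. Letting $n\to\infty$ with monotone convergence (all terms are nonnegative) gives the bound $i^{\alpha}j^{-\alpha}Y_j^i$. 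The two exponential forms then follow as in item 2. Convergence of the infinite series in $L^2$ also follows from these nonnegative summable covariances.
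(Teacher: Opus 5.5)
Your proof is correct. Items 1--3 follow the paper's argument: the same unrolled representation, the identity $\cov(X_i,X_j)=Y_j^i\var(X_j)$, and an integral-type evaluation of $\sum_{k\le j}\exp\{\frac{\eta}{1-\alpha}k^{1-\alpha}\}k^{-\alpha}$. The paper uses the explicit antiderivative \eqref{exact_int}; you quote Lemma~\ref{lemma Yij}(\ref{lemma Yij-2}) with $\gamma=1$, which is the same computation. Your item 1 is slightly more careful than the paper's, which writes $\var(X_i)=\E[X_i^2]$ although $x_0$ need not vanish.

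The genuine difference is item 4. The paper compares $\sum_{k\ge i}\exp\{-\frac{\eta}{1-\alpha}k^{1-\alpha}\}$ with an integral, substitutes $t=\frac{\eta}{1-\alpha}u^{1-\alpha}$, and invokes the incomplete-gamma bound $\int_a^\infty e^{-t}t^{\beta}\,dt\lesssim a^{\beta}e^{-a}$ for $i$ large. You instead use the semigroup identity $Y_j^k=Y_j^iY_i^k$ to pull out $Y_j^i$ exactly. This reduces everything to the tail sum $S_i^n\lesssim (i+1)^\alpha$ of Lemma~\ref{lemma Yij}(\ref{lemma Yij-3}).

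Your route has several advantages:
\begin{itemize}
\item it is shorter and needs no ``$i$ large enough'' caveat;
\item it shows transparently that $j^{-\alpha}$ comes from $\var(X_j)$ and $i^{\alpha}$ from the tail sum;
\item it gives the exact expression $\var(X_j)\,Y_j^i\,(1+S_i^\infty)$, from which a matching lower bound follows once one notes $S_i^\infty\gtrsim i^\alpha$.
\end{itemize}
The price is that Lemma~\ref{lemma Yij}(\ref{lemma Yij-3}) is only cited in this paper (from \citet{zhu2023online}). The paper's incomplete-gamma computation is essentially an in-line proof of that same tail bound, so its route is self-contained.

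One sentence in your item 4 is false and should be removed: the random series $\sum_{k\ge i}X_k$ does not converge in $L^2$. The variance of $\sum_{k=1}^n X_k$ grows linearly in $n$; this is exactly why $\sigma_n=\var(\sqrt n\,\bar X_n)$ has a positive limit. Summability of $k\mapsto\cov(X_k,X_j)$ for a single fixed $j$ says nothing about convergence of $\sum_k X_k$. The left-hand side of item 4 must be read as $\sum_{k\ge i}\cov(X_k,X_j)$, which is how the paper uses it in Lemma~\ref{indep-W}. Your monotone-convergence step controls precisely this quantity, so the bound itself is unaffected.
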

\begin{proof}
	\vphantom{1}
	\begin{enumerate}
		\item Take $j = 0$ in (\ref{mean_xi}),
		\begin{equation} 
			X_i =  Y_0^i x_0 + \sum_{k=1}^i  Y_k^i \eta_k e_k, \quad i \ge 1.
		\end{equation}
		Recall that $e_k$'s are i.i.d., then by Lemma \ref{lemma Yij}, we have
		\[ \var(X_i) = \E[X_i^2] =  |Y_0^i|^2 x_0^2 + \sum_{k=1}^i |Y_k^i|^2 |\eta_k|^2 \E[e_k^2] \asymp
		 \exp\left\{\frac{\lambda\eta}{1-\alpha} i^{1-\alpha}   \right\}+ i^{-\alpha}
		\asymp i^{-\alpha}. \]
		
		\item Use (\ref{mean_xi}) and apply Lemma \ref{lemma Yij} (\ref{lemma Yij-1}), we have
		\begin{align}
			0 \leq \cov(X_i, X_j) =  Y_j^i \var(X_j)
			& \asymp \exp\left\{ \frac{\eta}{1 - \alpha} (j^{1-\alpha} - i^{1-\alpha}) \right\} j^{-\alpha}
			\nonumber \\
			& \le \exp\left\{ -\eta i^{-\alpha}(i-j) \right\}j^{-\alpha}.  \label{taylor}
		\end{align}

		\item Using result in (\ref{indep-X-2}), and noticing that
		\begin{equation}\label{exact_int}
			\int \exp(\beta u^{1-\alpha}) u^{-\alpha} \dif u = \frac{1}{\beta (1-\alpha)}\exp(\beta u^{1-\alpha}) + C,
		\end{equation}
		we have
		\begin{align*}
			\cov\bigg( X_i, \sum_{k=1}^j X_k \bigg)
			& \asymp \sum_{k=1}^j \exp\left\{ \frac{\eta}{1 - \alpha} (k^{1-\alpha} - i^{1-\alpha}) \right\} k^{-\alpha} \\
			& = \exp\left( -\frac{\eta}{1-\alpha}i^{1-\alpha} \right) \sum_{k=1}^j \exp\left( \frac{\eta}{1 - \alpha}k^{1-\alpha} \right) k^{-\alpha}  \\
			& \asymp \exp\left( -\frac{\eta}{1-\alpha}i^{1-\alpha} \right) \int_1^{j+1} \exp\left( \frac{\eta}{1 - \alpha}u^{1-\alpha} \right) u^{-\alpha} \dif u \tag{eventually increasing in $k$} \\
			& = \exp\left( -\frac{\eta}{1-\alpha}i^{1-\alpha} \right) \eta^{-1}\left[ \exp\left( \frac{\eta}{1 - \alpha} (j+1)^{1-\alpha} \right) - \exp(\beta) \right] \\
			& \asymp \exp\left\{ \frac{\eta}{1 - \alpha} (j^{1-\alpha} - i^{1-\alpha}) \right\}.
		\end{align*}
		The next bound is again obtained by (\ref{taylor}).

		\item Using result in (\ref{indep-X-2}), we have
		\begin{align*}
			\cov\bigg( \sum_{k=i}^\infty X_k, X_j \bigg) & \asymp \sum_{k= i}^\infty \exp\left\{ \frac{\eta}{1 - \alpha} (j^{1-\alpha} - k^{1-\alpha}) \right\} j^{-\alpha} \\
			& = \exp\left( \frac{\eta}{1-\alpha} j^{1-\alpha} \right) j^{-\alpha} \sum_{k= i}^\infty \exp\left( -\frac{\eta}{1 - \alpha}k^{1-\alpha} \right) \\
			& \asymp \exp\left( \frac{\eta}{1-\alpha} j^{1-\alpha} \right) j^{-\alpha} \int_{i-1}^\infty \exp\left( -\frac{\eta}{1 - \alpha}u^{1-\alpha} \right) \dif u \tag{decreasing in $k$} \\
			& \asymp \exp\left( \frac{\eta}{1-\alpha} j^{1-\alpha} \right) j^{-\alpha}
			\int_{\frac{\eta}{1-\alpha} (i-1)^{1-\alpha} }^\infty \e^{-t} t^{\frac{\alpha}{1-\alpha}} \dif t
			\tag{$t = \frac{\eta}{1-\alpha}u^{1-\alpha}$} \\
			& \lesssim \exp\left( \frac{\eta}{1-\alpha} j^{1-\alpha} \right) j^{-\alpha}
			(i-1)^\alpha \exp\left( - \frac{\eta}{1-\alpha} (i-1)^{1-\alpha} \right) \\
			& \asymp \exp\left( \frac{\eta}{1-\alpha} j^{1-\alpha} \right) j^{-\alpha}
			i^\alpha \exp\left( - \frac{\eta}{1-\alpha} i^{1-\alpha} + \mathcal{O}(1) \right) \\
			& \asymp \exp\left\{ \frac{\eta}{1 - \alpha} (j^{1-\alpha} - i^{1-\alpha}) \right\} i^\alpha j^{-\alpha},
		\end{align*}
		where the following bound is used for incomplete gamma function
		\[  \int_a^\infty \e^{-x} x^\beta \dif x \le  a^\beta \e^{-a} C_\beta,
		\tag{$a \ge 1$, $\beta > 1$} \]
		by noticing $\frac{\eta}{1-\alpha} (i-1)^{1-\alpha} \ge 1$ for all $i$ large enough and $\frac{\alpha}{1-\alpha} > 1$. 
		
		The next bound is again obtained by (\ref{taylor}).
	\end{enumerate}

\end{proof}

Recall the debiased estimator (in the univariate case, with sample mean $\bar X_n$ omitted)
\begin{equation}\label{eq:sig_est_1d}
	\widehat{\sigma}_{n} = \frac1n \sum_{i=1}^n \left[ X_i^2 + 2X_i(X_{i-1} + \cdots + X_{t_i}) \right]
	=  \frac1n \sum_{i=1}^n \left[ X_i^2 + 2X_i W_i \right] ,
\end{equation}
with overlapping batch length: $\ell_i = \lfloor C\, i^\alpha \log i \rfloor$, where
\[ W_i  :=  \sum_{j = t_i}^{i-1} X_j.   \tag{$t_i = i - \ell_i + 1$}  \]

\begin{lemma} \label{indep-W}
	For all $1\le j < i$, we have
	\begin{enumerate}
		\item $\var(W_i) \lesssim \ell_i$. \label{indep-W-1}
		\item $\cov(X_i, W_j) \lesssim \exp\left\{ \frac{\eta}{1 - \alpha} (j^{1-\alpha} - i^{1-\alpha}) \right\} 
		\le \exp\left\{ -\eta i^{-\alpha}(i-j) \right\}$. \label{indep-W-2}
		\item $\cov(X_j, W_i) \lesssim \mathcal{O}(1)$. \label{indep-W-3}
		\item $\cov(W_i, W_j) \lesssim \ell_j$. \label{indep-W-4}
	\end{enumerate}
\end{lemma}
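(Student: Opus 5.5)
Each of the four bounds will come from bilinearity of covariance together with the entrywise estimates already established in Lemma \ref{indep-X}. Throughout, write $W_i=\sum_{k=t_i}^{i-1}X_k$ and use $\cov(X_a,X_b)\asymp Y_b^a\,b^{-\alpha}$ for $b\le a$ (Lemma \ref{indep-X}, item \ref{indep-X-2}), along with the nonnegativity of all these covariances.

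\emph{Item \ref{indep-W-1}.} Expand
\[
\var(W_i)=\sum_{k,m\in[t_i,i-1]}\cov(X_k,X_m)\le 2\sum_{m=t_i}^{i-1}\sum_{k\ge m}\cov(X_k,X_m).
\]
For fixed $m$, the inner sum is $\lesssim m^{-\alpha}\sum_{k\ge m}Y_m^k\lesssim m^{-\alpha}(m+1)^\alpha\lesssim 1$ by Lemma \ref{lemma Yij}(\ref{lemma Yij-3}); the $k=m$ term contributes $m^{-\alpha}$ and is absorbed. Summing over the $\ell_i$ indices $m$ gives $\var(W_i)\lesssim \ell_i$.

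\emph{Item \ref{indep-W-2}.} Since $j<i$, every index in $W_j$ is $\le j-1<i$, so $\cov(X_i,W_j)\le \cov\bigl(X_i,\sum_{k=1}^{j}X_k\bigr)$ by nonnegativity. Lemma \ref{indep-X}(\ref{indep-X-3}) then gives the stated exponential bound directly, including the second inequality.

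\emph{Item \ref{indep-W-3}.} This is the only item needing care, because the index set $[t_i,i-1]$ may lie on either side of $j$. I split $\cov(X_j,W_i)=\sum_{k\in[t_i,i-1]}\cov(X_j,X_k)$ into the parts $k\le j$ and $k>j$.
\begin{itemize}
\item The part $k\le j$ is bounded by $\cov\bigl(X_j,\sum_{k=1}^{j}X_k\bigr)\lesssim 1$, using Lemma \ref{indep-X}(\ref{indep-X-3}) with $i=j$.
\item The part $k>j$ is bounded by $\sum_{k>j}Y_j^k\,j^{-\alpha}\lesssim (j+1)^\alpha j^{-\alpha}\lesssim 1$, using Lemma \ref{lemma Yij}(\ref{lemma Yij-3}); equivalently, Lemma \ref{indep-X}(\ref{indep-X-4}) with the starting index $j+1$.
\end{itemize}
Both parts are $O(1)$ uniformly in $i$ and $j$. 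The only point to check is that the constants in Lemma \ref{lemma Yij}(\ref{lemma Yij-3}) are uniform.

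\emph{Item \ref{indep-W-4}.} Write $\cov(W_i,W_j)=\sum_{m=t_j}^{j-1}\cov(X_m,W_i)$. Each summand is $O(1)$ by item \ref{indep-W-3} (applied with the roles $j\to m$; this also covers $m\ge i$, which cannot occur here anyway since $m<j<i$). There are $\ell_j$ summands, so $\cov(W_i,W_j)\lesssim \ell_j$.

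Since every step is a direct sum of previously established bounds, there is no real obstacle beyond the case split in item \ref{indep-W-3}.
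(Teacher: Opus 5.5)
Your proof is correct and is essentially the paper's argument: bilinearity plus nonnegativity of the covariances, reducing every item to the per-index bound $\sum_{k}\cov(X_m,X_k)=O(1)$ from Lemma~\ref{indep-X} and Lemma~\ref{lemma Yij}(\ref{lemma Yij-3}); items 2, 3 and 4 match the paper's steps. The only cosmetic difference is in item 1. There you control the forward sum $\sum_{k\ge m}\cov(X_k,X_m)$ via Lemma~\ref{lemma Yij}(\ref{lemma Yij-3}), whereas the paper controls the backward sum $\cov\bigl(X_j,\sum_{k<j}X_k\bigr)$ via Lemma~\ref{indep-X}(\ref{indep-X-3}); both give $O(1)$ per index.
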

\begin{proof}
	\vphantom{1}
	\begin{enumerate}
		\item Note that $\cov(X_i, X_j) \ge 0$ for all $i, j$. Using Lemma \ref{indep-X} (\ref{indep-X-1}, \ref{indep-X-3}), we have
		\begin{align*}
			\var(W_i) & = \sum_{j = t_i}^{i-1} \var(X_j) + 2\sum_{j = t_i + 1}^{i-1}\sum_{k = t_i}^{j-1} \cov(X_j, X_k) \\
			& \le \sum_{j = t_i}^{i-1} \var(X_j) + 2\sum_{j = t_i}^{i-1}\sum_{k = 1}^{j-1} \cov(X_j, X_k) \\
			& \lesssim \sum_{j = t_i}^{i-1} j^{-\alpha} + \sum_{j = t_i}^{i-1} \exp\left\{ \frac{\eta}{1 - \alpha} \left( (j-1)^{1-\alpha} - j^{1-\alpha} \right) \right\}  \\
			& \le \sum_{j = t_i}^{i-1} j^{-\alpha} + \sum_{j = t_i}^{i-1} \exp\left( - \eta j^{-\alpha} \right) \\
			& \lesssim \sum_{j = t_i}^{i-1} \mathcal{O}(1) \lesssim i - t_i = \ell_i - 1 \asymp \ell_i.
		\end{align*}

		\item Using Lemma \ref{indep-X} (\ref{indep-X-3}), we have
		\begin{align*}
			\cov(X_i, W_j) & \le  \sum_{k=1}^{j} \cov(X_i, X_k) \lesssim \exp\left\{ \frac{\eta}{1 - \alpha} (j^{1-\alpha} - i^{1-\alpha}) \right\} \le \exp\left\{ -\eta i^{-\alpha}(i-j) \right\}.
		\end{align*}

		\item Using Lemma \ref{indep-X} (\ref{indep-X-2}, \ref{indep-X-3}, \ref{indep-X-4}), we have
		\begin{align*}
			\cov(X_j, W_i) 
			& \le \sum_{k=1}^\infty \cov(X_j, X_k) \\
			& = \sum_{k=1}^{j-1} \cov(X_j, X_k) + \var(X_j) + \sum_{k=j+1}^\infty \cov(X_j, X_k) \\
			& \lesssim 
			\exp\left\{ \frac{\eta}{1 - \alpha} \left( (j-1)^{1-\alpha} - j^{1-\alpha} \right) \right\} 
			+ j^{-\alpha} \\
			& \quad + \exp\left\{ \frac{\eta}{1 - \alpha} \left( j^{1-\alpha} - (j+1)^{1-\alpha} \right) \right\} \left(\frac{j+1}{j}\right)^\alpha \\
			& \lesssim \exp\left( - \eta j^{-\alpha} \right) + j^{-\alpha} + \exp\left\{ -\eta (j+1)^{-\alpha} \right\} \\
			& = \mathcal{O}(1).
		\end{align*}
		
		

		\item Using result in (\ref{indep-W-3}), we simply have
		\[ \cov(W_i, W_j) = \sum_{k = t_j}^{j - 1} \cov(X_k, W_i)
		\lesssim \sum_{k = t_j}^{j - 1} \mathcal{O}(1)
		\lesssim j - t_j
		= \ell_j - 1 \asymp \ell_j. \]
		
	\end{enumerate}
\end{proof}

In the following Section \ref{sec:proof_mean_prop}, we consider the case where $e_i$'s are Gaussian random variables and assume $x_0 = 0$, same as the setting of Proposition \ref{bias} and Theorem \ref{thm:mean}. Then the true non-asymptotic variance for the mean estimation model is
\begin{equation}\label{eq:sig_finite_1d}
	\sigma_n := \var(\sqrt{n} \bar X_n) = \frac1n \E\abs{\sum_{i=1}^n X_i}^2 
    = \frac1n \sum_{i=1}^n \E\left[ X_i^2 + 2X_i(X_{i-1} + \cdots + X_{1})\right].
\end{equation}



\subsection{Proof of Proposition \ref{bias} and Theorem \ref{thm:mean}}\label{sec:proof_mean_prop}

\begin{proof}[Proof of Proposition \ref{bias}.]
by using Lemma \ref{indep-X} (\ref{indep-X-2}-\ref{indep-X-3}), we get
	\begin{align*}
		0 \leq n \left(\sigma_n - \E[\widehat{\sigma}_{n}]\right) 
		& = \sum_{i=1}^n 2\E \left[ X_i(X_{i-\ell_i} + \cdots + X_1) \right] \\
		& = \sum_{i=1}^n 2 \cdot \sum_{j=1}^{i - \ell_i} \E[X_i X_j]\\
		&=  2 \sum_{i=1}^n \sum_{j=1}^{i - \ell_i} \cov(X_i, X_j)  \\
		& \asymp \sum_{i=1}^n \exp\left\{ \frac{\eta}{1 - \alpha} \left( (i-\ell_i)^{1-\alpha} - i^{1-\alpha}\right) \right\} \\
		& \le \sum_{i=1}^n \exp \left( - \eta i^{-\alpha} \ell_i \right).
	\end{align*}
 Therefore
	\begin{equation}\label{biasbound}
		\abs {\E(\widehat{\sigma}_{n}- \sigma_n )}\lesssim \frac{\sum_{i=1}^n \exp \left( - \eta i^{-\alpha} \ell_i \right) }{n}.
	\end{equation}
    Moreover, if $\ell_i \lesssim i^{\alpha}$, by the same argument of Lemma \ref{lemma Yij} (\ref{lemma Yij-1}), the last inequality becomes $\asymp$ and 
	\begin{equation} 
		\abs {  \E(\widehat{\sigma}_{n}- \sigma_n )} \asymp \frac{\sum_{i=1}^n \exp \left( - \eta i^{-\alpha} \ell_i \right) }{n}.
	\end{equation}
\end{proof}

Before the proof of Theorem \ref{thm:mean}, we present some preliminary results for the moments of normal distribution.

\begin{lemma}[results under normality] \label{gauss}
	Let $X, Y, Z, W$ be joint Gaussian random variables with zero mean, then we have
	\begin{enumerate}
		\item $\cov(XY, ZW) = \cov(X, Z)\cov(Y, W) + \cov(X, W)\cov(Y, Z)$. \label{gauss-1}
		\item $\cov(X^2, Z^2) = 2\cov(X, Z)^2$, $\var(X^2) = 2\var(X)^2$. \label{gauss-2}
		\item $\var(XY) \asymp \var(X)\var(Y)$. \label{gauss-3}.
	\end{enumerate}
\end{lemma}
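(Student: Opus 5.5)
For parts 1 and 2 I would use Isserlis' (Wick's) theorem for zero-mean jointly Gaussian vectors:
\[
\E[XYZW]=\E[XY]\E[ZW]+\E[XZ]\E[YW]+\E[XW]\E[YZ].
\]
Since all means vanish, $\cov(XY,ZW)=\E[XYZW]-\E[XY]\E[ZW]$. Subtracting the first pairing leaves exactly $\cov(X,Z)\cov(Y,W)+\cov(X,W)\cov(Y,Z)$, which is part 1.

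If one prefers not to quote Isserlis, I would derive it from the moment generating function. For $t=(t_1,\dots,t_4)$,
\[
\E\exp(t_1X+t_2Y+t_3Z+t_4W)=\exp\bigl(\tfrac12 t\T\Gamma t\bigr),
\]
where $\Gamma$ is the covariance matrix of $(X,Y,Z,W)$. The coefficient of $t_1t_2t_3t_4$ comes only from the quadratic term $\tfrac18(t\T\Gamma t)^2$, and reading it off gives the three pairings.

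Part 2 is the specialisation $Y=X$, $W=Z$ of part 1:
\[
\cov(X^2,Z^2)=\cov(X,Z)^2+\cov(X,Z)^2=2\cov(X,Z)^2.
\]
Taking $Z=X$ then gives $\var(X^2)=2\var(X)^2$.

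For part 3, set $Z=X$, $W=Y$ in part 1 to get
\[
\var(XY)=\var(X)\var(Y)+\cov(X,Y)^2.
\]
By Cauchy--Schwarz, $0\le\cov(X,Y)^2\le\var(X)\var(Y)$. Hence
\[
\var(X)\var(Y)\le\var(XY)\le 2\var(X)\var(Y),
\]
with absolute constants $1$ and $2$, which is the claimed $\asymp$.

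The only real step is justifying the four-term moment identity. Isserlis' theorem is classical, and the MGF expansion makes the argument self-contained; everything after that is substitution.
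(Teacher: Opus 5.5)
Your proof is correct and takes essentially the same approach as the paper. The paper gets part 1 from the vanishing of the fourth joint cumulant of a Gaussian vector, which is the same fact as the Isserlis identity you use. It then obtains parts 2 and 3 by the same substitutions, and the same Cauchy--Schwarz bound with constants $1$ and $2$.
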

\begin{proof}
	\vphantom{1}
	\begin{enumerate}
		\item By definition of joint cumulant for zero mean random variables,
		\begin{align*}
			\mathrm{cum}(X, Y, Z, W)
		& = \E[XYZW] - \E[XY]\E[ZW] - \E[XZ]\E[YW] - \E[XW]\E[YZ] \\
		& = \cov(XY, ZW) - \cov(X, Z)\cov(Y, W) - \cov(X, W)\cov(Y, Z).
		\end{align*}
		In addition, the cumulants of degree higher than 2 of Gaussian distribution are zero, which leads to the conclusion.

		\item By setting $X = Y$ and $Z = W$ in (\ref{gauss-1}), we get the first conclusion. The second one is obtained by further letting $X = Z$.

		\item By setting $Z = X$ and $W = Y$ in (\ref{gauss-1}), we get
		\[ \var(XY) = \var(X)\var(Y) + \cov(X, Y)^2. \]
		Note that
		\[ \abs{\cov(X, Y)} \le \sqrt{\var(X)}\sqrt{\var(Y)}, \]
		therefore
		\[ \var(X)\var(Y) \le \var(XY) \le 2 \var(X)\var(Y). \]
	\end{enumerate}
\end{proof}

\begin{theorem}[Gaussian mean estimation model] \label{indep-normal}
	If $e_i \overset{\mathrm{i.i.d.}}{\sim} \mathcal{N}(0, \sigma_e^2)$ and $\ell_i =  \lfloor Ci^{\alpha}\log(i)  \rfloor$ with $\eta C>1$, then
	\begin{equation}
		\mathrm{MSE}\left(\widehat{\sigma}_n\right) := \E\left[ \left(\widehat{\sigma}_{n} - \sigma_n \right)^2  \right]
		\asymp n^{-1+\alpha}\log n.
	\end{equation}
\end{theorem}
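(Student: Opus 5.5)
We would split the mean squared error into squared bias and variance,
\[
\E\left[(\widehat\sigma_n-\sigma_n)^2\right] = \left(\E\widehat\sigma_n-\sigma_n\right)^2 + \var(\widehat\sigma_n),
\]
and show that the bias term is negligible while the variance is exactly of order $n^{-1+\alpha}\log n$.

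\textbf{Bias.} By Proposition \ref{bias} with $\ell_i=\lfloor C i^\alpha\log i\rfloor$,
\[
\abs{\E\widehat\sigma_n-\sigma_n}\lesssim \frac1n\sum_{i=1}^n \exp(-\eta C\log i)=\frac1n\sum_{i=1}^n i^{-\eta C}.
\]
Since $\eta C>1$, the sum converges, so the bias is $O(1/n)$. Its square, $O(n^{-2})$, is therefore $o(n^{-1+\alpha}\log n)$.

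\textbf{Variance: reduction to Gaussian covariances.} Write $n\widehat\sigma_n=\sum_{i}Z_i$ with $Z_i=X_i^2+2X_iW_i$. Then
\[
n^2\var(\widehat\sigma_n)=\sum_i\var(Z_i)+2\sum_{j<i}\cov(Z_i,Z_j).
\]
Because all variables are jointly Gaussian and centered, Lemma \ref{gauss}(\ref{gauss-1}) turns every covariance of products into sums of products of pairwise covariances. These pairwise covariances are controlled by Lemma \ref{indep-X} and Lemma \ref{indep-W}.

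\textbf{Variance upper bound.} The diagonal terms satisfy
\[
\var(Z_i)\lesssim \var(X_i)^2+\var(X_i)\var(W_i)\lesssim i^{-\alpha}\ell_i\asymp \log i,
\]
using Lemma \ref{gauss}(\ref{gauss-3}) and Lemma \ref{indep-W}(\ref{indep-W-1}). For $j<i$, the dominant piece is
\[
\cov(X_iW_i,X_jW_j)=\cov(X_i,X_j)\cov(W_i,W_j)+\cov(X_i,W_j)\cov(W_i,X_j).
\]
By Lemma \ref{indep-X}(\ref{indep-X-2}) and Lemma \ref{indep-W}(\ref{indep-W-4}), the first product is at most $e^{-\eta i^{-\alpha}(i-j)}j^{-\alpha}\ell_j$. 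By Lemma \ref{indep-W}(\ref{indep-W-2})–(\ref{indep-W-3}), the second is at most $e^{-\eta i^{-\alpha}(i-j)}$. Summing over $j<i$ with the geometric-type bound $\sum_{j<i}e^{-\eta i^{-\alpha}(i-j)}\lesssim i^\alpha$ gives a contribution of order $i^\alpha\cdot i^{-\alpha}\ell_i+i^\alpha\asymp i^\alpha\log i$ per $i$. The cross terms involving $X_i^2$ are smaller. Summing over $i\le n$ yields $n^2\var(\widehat\sigma_n)\lesssim n^{1+\alpha}\log n$, i.e.\ $\var(\widehat\sigma_n)\lesssim n^{-1+\alpha}\log n$.

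\textbf{Main obstacle: the term $\cov(X_i,W_j)\cov(W_i,X_j)$.} Lemma \ref{indep-W}(\ref{indep-W-2}) as stated only covers $\cov(X_i,W_j)$ through a sum over all $k\le j$. For $j$ close to $i$, that bound is $O(1)$, which is fine. What is needed, however, is the exponential decay uniformly in $i-j$. We would re-derive it directly: $\cov(X_i,W_j)\le\sum_{k\le j}\cov(X_i,X_k)$, which by Lemma \ref{indep-X}(\ref{indep-X-3}) decays like $e^{-\eta i^{-\alpha}(i-j)}$. This restores the $i^\alpha$ count for the sum over $j$.

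\textbf{Variance lower bound.} All covariances $\cov(X_a,X_b)$ are nonnegative, because $Y_j^i\ge0$ for large indices; any nonpositive early factors only affect finitely many terms, which we absorb. Hence every term produced by Lemma \ref{gauss}(\ref{gauss-1}) is nonnegative, and it suffices to exhibit one dominant piece. We restrict to pairs with $t_i\le j<i$, $j\ge i/2$, and $i-j\le i^\alpha$. For such pairs:
\begin{itemize}
\item $\cov(X_i,X_j)\asymp i^{-\alpha}$, by the sharp $\asymp$ in Lemma \ref{lemma Yij}(\ref{lemma Yij-1}) on this range;
\item $\cov(W_i,W_j)\ge\var(W_{i})$ restricted to the overlap $\{t_i,\dots,j-1\}$, which is $\gtrsim \ell_i$ using $\var(W)\gtrsim$ (number of terms) from the positive correlations.
\end{itemize}
This yields $\gtrsim i^{\alpha}\cdot i^{-\alpha}\ell_i=\ell_i$ per $i$. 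Summing gives $\var(\widehat\sigma_n)\gtrsim n^{-2}\sum_i i^\alpha\log i\asymp n^{-1+\alpha}\log n$.

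\textbf{Conclusion.} The upper and lower variance bounds match, and the squared bias is $o(n^{-1+\alpha}\log n)$. The sharp lower bound on $\var(W_i)$ is the step needing the most care; we would obtain it from $\var(W_i)\ge\sum_{k}\sum_{|m-k|\le k^\alpha}\cov(X_k,X_m)\gtrsim\sum_k k^\alpha k^{-\alpha}$, which follows from Lemma \ref{indep-X}(\ref{indep-X-2}).
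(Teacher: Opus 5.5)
Your proposal is correct and follows essentially the same route as the paper: you bound the bias via Proposition~\ref{bias} (giving $O(n^{-1})$ since $\eta C>1$), the variance from above via the Gaussian product-covariance formula together with Lemmas~\ref{indep-X} and~\ref{indep-W}, and the variance from below via nonnegativity of the covariances plus a local window in which $\cov(X_i,X_j)\cov(W_i,W_j)$ is large; your window $i-j\le i^\alpha$ is a slightly cleaner version of the paper's $\mathcal{B}n\le i\le (1-\mathcal{B})n$, $i\le j\le i+\tau n^\alpha\log n$ construction. The ``main obstacle'' you flag is not one, since Lemma~\ref{indep-W}(\ref{indep-W-2}) as stated already gives the decay $\exp\{-\eta i^{-\alpha}(i-j)\}$, and your re-derivation is exactly its proof.
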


\begin{proof}
	Recall the bias-variance decomposition of mean squared error,
	\begin{equation} \label{indep-mse}
		\E\left[ \left(\widehat{\sigma}_{n} - \sigma_n \right)^2  \right]
		= \left(\sigma_n- \E[\widehat{\sigma}_{n}]\right)^2
		+ \var( \widehat{\sigma}_{n} ).
	\end{equation}
	For the bias part, by the same argument as the proof of Proposition 4.1, we have
	\begin{equation} \label{indep-bias-result}
		\abs{\sigma_n - \E[\widehat{\sigma}_{n}]} \lesssim \frac{\sum_{i=1}^n \exp(-\eta C\log i)}{n}\asymp \frac{\sum_{i=1}^n i^{-\eta C}}{n} \asymp n^{-1}.
	\end{equation}

	For the variance part, applying Cauchy-Schwarz inequality $\mathrm{var}(A + B) \le 2\left(\mathrm{var}(A) + \mathrm{var}(B) \right)$, we have 
	\begin{equation} \label{indep-var}
		\var( n \widehat{\sigma}_{n} )
		= \var\bigg( \sum_{i=1}^n \left[ X_i^2 + 2X_i W_i \right] \bigg) 
		\lesssim \var\bigg( \sum_{i=1}^n X_i^2 \bigg) + \var\bigg( \sum_{i=1}^n X_i W_i \bigg).
	\end{equation}

	For the first term, using Lemma \ref{indep-X} (\ref{indep-X-1}, \ref{indep-X-2}), we have
	\begin{align}
		{} & \mathrm{var}\bigg( \sum_{i=1}^n X_i^2 \bigg) \nonumber \\
		= {} &  \sum_{i=1}^n \mathrm{var}(X_i^2) + 2\sum_{i=2}^n \sum_{j=1}^{i-1} \mathrm{cov}(X_i^2, X_j^2) \nonumber \\
		= {} &  2 \sum_{i=1}^n \mathrm{var}(X_i)^2 + 4\sum_{i=2}^n \sum_{j=1}^{i-1} \mathrm{cov}(X_i, X_j)^2
		\tag{under Gaussian} \nonumber \\
		\lesssim {} & \sum_{i=1}^n i^{-2\alpha} + \sum_{i=2}^n \sum_{j=1}^{i-1} \exp\left\{ \frac{2\eta}{1 - \alpha} (j^{1-\alpha} - i^{1-\alpha}) \right\} j^{-2\alpha} \nonumber \\
		\asymp {} &  \mathcal{O}(1) + \sum_{i=2}^n  \exp\left(-\frac{2\eta}{1-\alpha} i^{1-\alpha} \right) \sum_{j=1}^{i-1} \exp\left(\frac{2\eta}{1-\alpha} j^{1-\alpha} \right) j^{-2\alpha} 
		\tag{$\alpha > 1/2$} \nonumber \\
		\lesssim {} & \mathcal{O}(1) + \sum_{i=2}^n  \exp\left(-\frac{2\eta}{1-\alpha} i^{1-\alpha} \right) \int_{0}^{i} \exp\left(\frac{2\eta}{1-\alpha} u^{1-\alpha} \right) u^{-\alpha}  \dif u 
		\tag{drop a $j^{-\alpha}$} \nonumber \\
		= {} & \mathcal{O}(1) + \sum_{i=2}^n \exp\left(-\frac{2\eta}{1-\alpha} i^{1-\alpha} \right) (2\eta)^{-1} \left[ \exp\left(\frac{2\eta}{1-\alpha} i^{1-\alpha} \right) - 1 \right]
		 \tag{by (\ref{exact_int})} \nonumber \\
		\asymp {} & \mathcal{O}(1) + \sum_{i=2}^n \mathcal{O}(1)
		\asymp n.  \label{indep-var-I}
	\end{align}
	For the second term,
	\[
		\mathrm{var}\bigg( \sum_{i=1}^n X_i W_i \bigg)
		= \sum_{i=1}^n \mathrm{var}(X_i W_i) + 2\sum_{i=2}^n \sum_{j=1}^{i-1} \mathrm{cov}(X_i W_i, X_j W_j) := I + I\!I,
	\]
	where, using Lemma \ref{indep-X} (\ref{indep-X-1}) and Lemma \ref{indep-W} (\ref{indep-W-1}), we have
	\begin{align*} 
		I & \asymp \sum_{i=1}^n \mathrm{var}(X_i)\mathrm{var}(W_i)  \tag{under Gaussian} \\
		& \lesssim \sum_{i=1}^n i^{-\alpha} \cdot  \ell_i  \\
		& \asymp \sum_{i=1}^n \log i \asymp n \log n, \tag{$\ell_i = \lfloor C i^\alpha \log i \rfloor$}
	\end{align*}
	and (under Gaussian)
		\begin{align*}
			I\!I &\asymp \sum_{i=2}^n \sum_{j=1}^{i-1} \mathrm{cov}(X_i, X_j)\mathrm{cov}(W_i, W_j)
			+ \sum_{i=2}^n \sum_{j=1}^{i-1} \mathrm{cov}(X_i, W_j)\mathrm{cov}(X_j, W_i) := I\!I\!A + I\!I\!B.
		\end{align*}
	For the first part, using Lemma \ref{indep-X} (\ref{indep-X-1}) and Lemma \ref{indep-W} (\ref{indep-W-4}),
		\begin{align}
			I\!I\!A & \lesssim
			\sum_{i=1}^n \sum_{j=2}^{i-1} 
			\exp\left\{ \frac{\eta}{1 - \alpha} (j^{1-\alpha} - i^{1-\alpha}) \right\} j^{-\alpha} \ell_j  \label{indep-var-IIA} \\
			& \lesssim \sum_{i=1}^n \sum_{j=2}^{i-1} \exp\left\{ -\eta i^{-\alpha} (i - j) \right\} \log j 
			\tag{$\ell_j = \lfloor C j^\alpha \log j \rfloor$} \nonumber \\
			& \le \sum_{i=1}^n \log i \sum_{k=1}^{\infty} \exp\left( -\eta i^{-\alpha} k \right) \tag{$k = i - j$} \nonumber \\
			& = \sum_{i=1}^n \log i  \cdot \frac{\exp\left( -\eta i^{-\alpha} \right)}{ 1 - \exp\left( -\eta i^{-\alpha} \right)} \nonumber \\
			& = \sum_{i=1}^n \log i  \left( \exp\left(\eta i^{-\alpha} \right) - 1 \right)^{-1} \nonumber \\
			& \le \sum_{i=1}^n \eta^{-1} i^\alpha\log i  \tag{$\exp(x) - 1 \ge x$} \nonumber \\ 
			& \asymp  n^{1+\alpha} \log n. \nonumber
		\end{align}
		Similarly, for the second part, using Lemma \ref{indep-W} (\ref{indep-W-2}, \ref{indep-W-3}),
		\begin{equation*}
			I\!I\!B \lesssim \sum_{i=1}^n \sum_{j=2}^{i-1} 
			\exp\left\{ \frac{\eta}{1 - \alpha} (j^{1-\alpha} - i^{1-\alpha}) \right\},
		\end{equation*}
		which is bounded by (\ref{indep-var-IIA}) in $I\!I\!A$. So $I\!I \asymp I\!I\!A + I\!I\!B \lesssim n^{1+\alpha} \log n$, and therefore
		\begin{equation}
			\mathrm{var}\bigg( \sum_{i=1}^n X_i W_i \bigg) = I + I\!I \lesssim n \log n + n^{1+\alpha} \log n
		\asymp n^{1+\alpha} \log n. \label{indep-var-II}
		\end{equation}
		Substitute (\ref{indep-var-I}, \ref{indep-var-II}) into (\ref{indep-var}), we have
		\begin{equation*}
			\var( n \widehat{\sigma}_{n} )
		\lesssim \var\bigg( \sum_{i=1}^n X_i^2 \bigg) + \var\bigg( \sum_{i=1}^n X_i W_i \bigg) \lesssim n + n^{1+\alpha} \log n \asymp n^{1+\alpha} \log n.
		\end{equation*}
		That is,
		\begin{equation} \label{indep-var-result}
			\var( \widehat{\sigma}_{n} )\lesssim n^{-1+\alpha}\log n.
		\end{equation}
        For the lower bound, it suffices to show that 
        $$ \var (\sum_{i=1}^n X_iW_i ) \gtrsim n^{\alpha+1}\log n.$$
	To this end, we use the fact that all covariances $\cov (X_iX_j) \ge 0$. Hence
    \begin{align*}
  \var (\sum_{i=1}^n X_iW_i ) &=\sum_{i=1}^n\sum_{j=1}^n \cov (X_iW_i,X_jW_j)\\
  &=\sum_{i=1}^n\sum_{j=1}^n [\cov (X_i,X_j) \cov (W_i,W_j)+\cov (X_i,W_j) \cov (W_i,X_j)]\\
  &\ge\sum_{i=1}^n\sum_{j=1}^n \cov (X_i,X_j) \cov (W_i,W_j)\\
  &\ge \sum_{i=\mathcal{B} n}^{n-\mathcal{B}n} \ \sum_{j=i}^{i+\tau n^{\alpha}\log n} \cov (X_i,X_j) \cov (W_i,W_j)
    \end{align*} 
for some constant $0<\tau<\mathcal{B}<1/2$, and $\tau<C\mathcal{B}^{\alpha}$. Notice that for $\mathcal{B}n\leq i\leq j \leq i+\tau n^{\alpha} \log n \leq n$,  the set $\{ X_{t_i},...,X_i\}$ and $\{ X_{t_j},...,X_j\}$ have at least 
$$Ci^{\alpha}\log i-\tau n^\alpha \log n\ge (C\mathcal{B}^{\alpha}-\tau) n^{\alpha}\log n +C\mathcal{B}^{\alpha}n^{\alpha}\log \mathcal{B}\gtrsim \widetilde{C}n^{\alpha}\log n.$$ 
consecutive common terms for some constant $\widetilde{C}$, and at most $Cn^{\alpha}\log n$ common terms. Denote $\mathcal{A}_{ij}$ as the intersection of $\{ X_{t_i},...,X_i\}$ and $\{ X_{t_j},...,X_j\}$. When $\mathcal{B}n\leq i\leq j \leq i+\tau n^{\alpha} \log n \leq n$, we have
    \begin{equation}\label{covlowerbound}
     \cov(W_i,W_j)\ge \sum_{k,l \in \mathcal{A}_{ij}} \Cov(X_k,X_l) =\sum_{k\in \mathcal{A}_{ij}} \sum_{l \in \mathcal{A}_{ij}}\cov(X_k,X_l).
    \end{equation} 
Then we claim that for any batch $\{X_t, ...,X_{t+m}\}$ with $t \asymp n$ and $m \asymp n^{\alpha}\log n$,
$$ \sum_{i=t}^{t+m} \cov(X_t,X_i)\gtrsim 1.$$
The proof of this claim leverages Lemma \ref{indep-X} (\ref{indep-X-1}) and the argument of Lemma \ref{lemma Yij} (\ref{lemma Yij-1}):

    \begin{align*}
    \sum_{i=t}^{t+m} \cov(X_t,X_i) &\asymp \sum_{i=t}^{t+m} t^{-\alpha}\exp\{ \frac{\eta}{1 - \alpha} (t^{1-\alpha} - i^{1-\alpha}) \}\\
    &=    \sum_{i=t}^{t+m}t^{-\alpha} \exp\left\{ -\eta t^{-\alpha}(i-t) + \frac12 \eta\alpha u^{-\alpha - 1}(i-t)^2 \right\}
		\tag{for some $u \in (t, i)$} \\
        &\asymp  \sum_{i=t}^{t+m} t^{-\alpha}\exp\left\{ -\eta t^{-\alpha}(i-t)  \right\} \tag{$u^{-\alpha - 1}(i-t)^2 \rightarrow 0$}\\
        & = t^{-\alpha}\frac{1-\exp(-\eta t^{-\alpha}(m+1))}{1-\exp(-\eta t^{-\alpha})} \gtrsim1.
    \end{align*} 
The claim and \eqref{covlowerbound} immediately imply $\cov(W_i,W_j) \gtrsim n^{\alpha}\log n$ when $\mathcal{B}n\leq i\leq j \leq i+\tau n^{\alpha} \log n \leq n$. Moreover, we have
\begin{align*}
  \var (\sum_{i=1}^n X_iW_i ) &\ge \sum_{i=\mathcal{B} n}^{n-\mathcal{B}n} \ \sum_{j=i}^{i+\tau n^{\alpha}\log n} \cov (X_i,X_j) \cov (W_i,W_j)\\
  &\gtrsim \sum_{i=\mathcal{B} n}^{n-\mathcal{B}n} \ \sum_{j=i}^{i+\tau n^{\alpha}\log n}\cov (X_i,X_j)n^{\alpha}\log n \\
  &\gtrsim n^{\alpha}\log n\sum_{i=\mathcal{B} n}^{n-\mathcal{B}n} 1 \ge (1-2\mathcal{B})n^{1+\alpha}\log(n).
    \end{align*} 
This completes the proof of the lower bound, and we have $\var(\widehat{\sigma}_n) \asymp n^{-1+\alpha}\log n$. Finally, by (\ref{indep-mse}), we get
		\begin{align*}
			\E\left[ \left(\widehat{\sigma}_{n} - \sigma_n \right)^2  \right]
			 = \left(\sigma_n - \E[\widehat{\sigma}_{n}]\right)^2
			+ \var( \widehat{\sigma}_{n} ) \asymp n^{-1+\alpha}\log n.
		\end{align*}

\end{proof}

Now we are ready to prove Theorem \ref{thm:mean} by a slight modification of the proof of Theorem \ref{indep-normal}. 

\begin{proof}[Proof of Theorem \ref{thm:mean}]
Notice that the only difference between Theorem \ref{thm:mean} and Theorem \ref{indep-normal} is the target variance, $\sigma$ and $\sigma_n$. The variance of the estimator remains unchanged, and we have obtained that $\var(\widehat{\sigma}_n) \asymp n^{-1+\alpha}\log n$. For the bias part, in the proof of Proposition \ref{indep-general} (the fourth term in \eqref{eq:decomp}), we show that $\abs{\sigma_n - \sigma  } \lesssim n^{-1+\alpha}$. Together with \eqref{indep-bias-result} and triangle inequality, we have $\abs{\E\widehat{\sigma}_n - \sigma    } \lesssim n^{-1+\alpha} $. Finally, the bias-variance decomposition implies
\begin{equation} 
		\E\left[ \left(\widehat{\sigma}_{n} - \sigma \right)^2  \right]
		= \left(\sigma- \E[\widehat{\sigma}_{n}]\right)^2
		+ \var( \widehat{\sigma}_{n} ) \asymp n^{-1+\alpha}\log n.
	\end{equation}
\end{proof}

\subsection{Generalization to Non-Gaussian Error Case}
Theorem \ref{indep-normal} can be generalized to the non-Gaussian error case by the following result.
\begin{lemma}[normal comparison]  \label{indep-normal-comp}
    Consider the following two SGD iterates 
    \begin{align*}
	X_i = (1 - \eta_i) X_{i-1} + \eta_i e_i, \quad i \ge 1;
	\qquad X_0 = 0, \\
        \wt X_i = (1 - \eta_i) \wt X_{i-1} + \eta_i \wt e_i, \quad i \ge 1;
	\qquad \wt X_0 = 0,
    \end{align*}
    with $e_i \iid \Pi$, $\E[e_i] = 0$, $\E[e_i^2] = \sigma_e^2$, $\E[e_i^4] < \infty$, $\wt e_i \iid \mathcal{N}(0, \sigma_e^2)$. The corresponding estimators are
    \begin{align*}
	\widehat{\sigma}_n = \frac1n \sum_{i=1}^n \left[ X_i^2 + 2X_i(X_{i-1} + \cdots + X_{t_i}) \right], \\
        \wt{\sigma}_n = \frac1n \sum_{i=1}^n \left[ \wt X_i^2 + 2 \wt X_i( \wt X_{i-1} + \cdots + \wt X_{t_i}) \right].
    \end{align*}
    Then there exists constants $0 \le C_1 \le C_2 < \infty$, such that $C_1 \var(\wt{\sigma}_n) \le \var(\widehat{\sigma}_n) \le C_2 \var(\wt{\sigma}_n)$.
    
\end{lemma}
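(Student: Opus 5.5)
The plan is to exploit that both estimators are quadratic forms in the innovation vectors $e=(e_1,\dots,e_n)$ and $\wt e=(\wt e_1,\dots,\wt e_n)$ with the \emph{same} deterministic coefficient matrix, because $X_0=\wt X_0=0$. By Lemma \ref{indep-X}, $X_i=\sum_{k\le i}Y_k^i\eta_k e_k$, so $n\widehat\sigma_n = e\T M e$ and $n\wt\sigma_n=\wt e\T M\wt e$, where $M$ is the symmetric matrix
\[
M_{kl}=\sum_{i=1}^n\Big[c_{ik}c_{il}+ c_{ik}d_{il}+d_{ik}c_{il}\Big],\qquad c_{ik}=Y_k^i\eta_k 1_{\{k\le i\}},\quad d_{il}=\sum_{j=t_i}^{i-1}c_{jl}.
\]
All entries of $M$ are nonnegative, since $Y_k^i\ge0$ for large $k$; finitely many early indices can be absorbed into constants.

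I would then use the standard variance formula for quadratic forms in i.i.d. mean-zero variables with finite fourth moment:
\[
\var(e\T Me)=2\sigma_e^4\sum_{k\neq l}M_{kl}^2+\var(e_1^2)\sum_k M_{kk}^2
=2\sigma_e^4\|M\|_F^2+(\mu_4-3\sigma_e^4)\sum_kM_{kk}^2 .
\]
In the Gaussian case $\mu_4=3\sigma_e^4$, so $\var(n\wt\sigma_n)=2\sigma_e^4\|M\|_F^2$. Writing $\kappa=\var(e_1^2)=\mu_4-\sigma_e^4\ge0$, the general case gives
\[
\var(n\widehat\sigma_n)=2\sigma_e^4\sum_{k\ne l}M_{kl}^2+\kappa\sum_kM_{kk}^2 .
\]

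Comparing the two expressions term by term then gives both bounds. For the upper bound, $\var(n\widehat\sigma_n)\le \max(2\sigma_e^4,\kappa)\,\|M\|_F^2$, so one can take $C_2=\max(1,\kappa/(2\sigma_e^4))$. For the lower bound, $\var(n\widehat\sigma_n)\ge 2\sigma_e^4\sum_{k\ne l}M_{kl}^2$, and it remains to show the diagonal part is negligible:
\[
\sum_k M_{kk}^2\ \text{is at most a constant fraction of}\ \|M\|_F^2 .
\]
Alternatively, if $\kappa>0$ one simply gets $C_1=\min(1,\kappa/(2\sigma_e^4))$. The degenerate case $\kappa=0$ means $e_i^2$ is constant, i.e. Rademacher-type noise. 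There the diagonal must be controlled, which requires an estimate of $\sum_k M_{kk}^2$.

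That estimate is the main obstacle. I would bound $M_{kk}\le \sum_i c_{ik}^2+2\sum_i c_{ik}d_{ik}$. By Lemma \ref{lemma Yij}, $\sum_i c_{ik}\lesssim \eta_k(k+1)^\alpha=O(1)$ and $d_{ik}\le\sum_j c_{jk}=O(1)$. This gives $M_{kk}=O(\eta_k)=O(k^{-\alpha})$, hence $\sum_kM_{kk}^2\lesssim\sum_k k^{-2\alpha}=O(1)$ because $\alpha>1/2$. Meanwhile $\|M\|_F^2\asymp \var(n\wt\sigma_n)\gtrsim n^{1+\alpha}\log n$ by the lower bound in Theorem \ref{indep-normal}. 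So the diagonal is eventually a vanishing fraction of $\|M\|_F^2$. That yields $C_1>0$ for all large $n$, and hence for all $n$ after adjusting constants, since both variances are positive for each fixed $n$.

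The statement allows $C_1=0$, so strictly only the upper bound is essential. Still, the argument above delivers a genuine two-sided comparison, which is what transfers the rate $n^{-1+\alpha}\log n$ of Theorem \ref{indep-normal} to non-Gaussian errors.
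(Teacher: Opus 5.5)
Your proof is correct and is essentially the paper's argument: write both estimators as quadratic forms in the innovations with the same coefficients, split each into uncorrelated diagonal and off-diagonal parts, and compare $\var(e_1^2)$ with $2\sigma_e^4$, which gives $C_2=\max\{1,\var(e_1^2)/(2\sigma_e^4)\}$ and $C_1=\min\{1,\var(e_1^2)/(2\sigma_e^4)\}$. In your optional degenerate-case paragraph, $M_{kk}$ is $O(1)$, not $O(k^{-\alpha})$, since $\sigma_e^2\,\mathrm{tr}(M)=\E[n\widehat\sigma_n]\asymp n$; this gives $\sum_k M_{kk}^2=O(n)$, which is still $o(n^{1+\alpha}\log n)$, and the paragraph is unnecessary anyway because $C_1=0$ is permitted.
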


\begin{proof}
Note that $\wh\sigma_{n}$ can be expressed in the following form 
 $$
 \widehat{\sigma}_n = \sum_{i=1}^{n}\sum_{j=1}^{n}\beta_{i,j} e_{i} e_{j} = \sum_{i=1}^{n}\beta_{i,i} e_{i}^{2} +\sum_{1 \le i < j \le n}\beta_{i,j} e_{i}e_{j}.
 $$
It's easy to check that all $n + \binom{n}{2}$ terms in the above summation are uncorrelated with each other.  
Then we have
\begin{align*}
    \var( \wh\sigma_{n} ) & = \sum_{i=1}^{n}\beta_{i,i}^{2} \var(e_{i}^{2}) + \sum_{1 \le i < j \le n}\beta_{i,j}^{2} \var(e_{i} e_{j}) \\
    & = \sum_{i=1}^{n}\beta_{i,i}^{2} \cdot (\kappa - 1)\sigma_e^4 + \sum_{1 \le i < j \le n}\beta_{i,j}^{2} \sigma_e^4.
\end{align*}
where $\E[e_{i}^4] = \kappa \sigma_e^4$ for some $1 \le \kappa < \infty$. Similarly,
\[ \var( \wt\sigma^2_{n} ) 
    = \sum_{i=1}^{n}\beta_{i,i}^{2} \cdot 2 \sigma_e^4 + \sum_{1 \le i < j \le n}\beta_{i,j}^{2} \sigma_e^4.  \]
As long as $0 \le C_1 \le \frac{\kappa - 1}{2} \le C_2 < \infty$, we have $C_1 \var(\wt{\sigma}_n) \le \var(\widehat{\sigma}_n) \le C_2 \var(\wt{\sigma}_n)$.
\end{proof}

Note that the bound for the bias term in the proof of Theorem \ref{indep-normal} is invariant under the distribution of $e_i$. Therefore to derive the MSE of the de-biased estimator beyond Gaussian noise, it suffices to bound the variance term, which is shown to be the same order as that in Gaussian noise cases by Lemma \ref{indep-normal-comp}. Consequently, we have the following conclusion for the general mean estimation model.
\begin{corollary}[general mean estimation model] \label{indep-non-gauss}
	Suppose $x_0 = 0$, $e_i \iid \Pi$ with $\E[e_i] = 0$ and $\E[e_i^4] < \infty$. Then
	\begin{equation}
		\mathrm{MSE}\left(\widehat{\sigma}_{n}\right) := \E\left[ \left(\widehat{\sigma}_{n} - \sigma_n \right)^2  \right]
		\lesssim n^{-1+\alpha}\log n.
	\end{equation}
\end{corollary}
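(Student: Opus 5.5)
The corollary follows from the bias–variance decomposition
\[
\E\bigl[(\widehat\sigma_n-\sigma_n)^2\bigr]=\bigl(\sigma_n-\E\widehat\sigma_n\bigr)^2+\var(\widehat\sigma_n).
\]
We bound the two pieces separately. The bias bound carries over unchanged from the Gaussian case, and the variance bound is transferred to the Gaussian case through Lemma \ref{indep-normal-comp}.

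\emph{Bias.} The argument in the proof of Proposition \ref{bias} uses only second moments. It writes
\[
n(\sigma_n-\E\widehat\sigma_n)=2\sum_{i}\sum_{j\le i-\ell_i}\cov(X_i,X_j),
\]
and then evaluates $\cov(X_i,X_j)=Y_j^i\var(X_j)$ via the representation \eqref{mean_xi}. Since the $e_k$ are i.i.d. and centred with variance $\sigma_e^2$, these covariances are exactly the same as in the Gaussian model. Lemma \ref{indep-X} therefore applies verbatim, and as in \eqref{indep-bias-result}, with $\ell_i=\lfloor Ci^\alpha\log i\rfloor$ and $\eta C>1$, we get
\[
\abs{\sigma_n-\E\widehat\sigma_n}\lesssim n^{-1}\sum_i i^{-\eta C}\lesssim n^{-1}.
\]
Its square, $n^{-2}$, is negligible.

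\emph{Variance.} We couple the iterates with a Gaussian copy $\wt X_i$ driven by $\wt e_i\iid\mathcal N(0,\sigma_e^2)$, using the same $\eta_i$ and the same batches. Lemma \ref{indep-normal-comp} requires $\E[e_i^4]<\infty$, which holds by assumption, and gives $\var(\widehat\sigma_n)\le C_2\var(\wt\sigma_n)$. The proof of Theorem \ref{indep-normal} shows $\var(\wt\sigma_n)\lesssim n^{-1+\alpha}\log n$; note that its variance computation does not use $\eta C>1$, which is needed only for the bias. Combining the two bounds,
\[
\E\bigl[(\widehat\sigma_n-\sigma_n)^2\bigr]\lesssim n^{-2}+n^{-1+\alpha}\log n\lesssim n^{-1+\alpha}\log n .
\]

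\emph{Main obstacle.} The only delicate point is the comparison lemma: the $\binom{n}{2}+n$ terms $\beta_{ij}e_ie_j$ must be uncorrelated, and the constants depend on the kurtosis $\kappa$. To make this rigorous, write $\widehat\sigma_n$ as a quadratic form in $(e_1,\dots,e_n)$, which is valid because $x_0=0$ and so there is no linear term. Then:
\begin{itemize}
\item $\cov(e_ie_j,e_ke_l)=0$ unless $\{i,j\}=\{k,l\}$;
\item $\cov(e_i^2,e_je_k)=0$ whenever $j\ne k$.
\end{itemize}
Hence $\var(\widehat\sigma_n)=(\kappa-1)\sigma_e^4\sum_i\beta_{ii}^2+\sigma_e^4\sum_{i<j}\beta_{ij}^2$, where $\kappa=\E[e_i^4]/\sigma_e^4$. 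This is at most $\max\{(\kappa-1)/2,\,1\}$ times the Gaussian variance. Only the upper constant $C_2$ is needed, so the degenerate case $\kappa=1$, where the lower constant would vanish, causes no issue.
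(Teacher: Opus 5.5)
Your proposal is correct and follows the paper's argument. The bias depends only on second moments and so is unchanged from the Gaussian case, and the variance is reduced to the Gaussian bound \eqref{indep-var-result} via Lemma \ref{indep-normal-comp}. Your explicit constant $C_2=\max\{(\kappa-1)/2,1\}$ is the right one: the off-diagonal terms $e_ie_j$ have variance $\sigma_e^4$ in both models, so $C_2\ge 1$ is also needed. The paper's stated condition $C_1\le(\kappa-1)/2\le C_2$ is therefore slightly too loose.
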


\begin{remark}
The conclusion also holds if the SGD has the iteration
$$X_i = (1- \eta_iA)X_{i-1}+\eta_ie_i=(1- \eta_iA)X_{i-1}+\eta_iA (A^{-1}e_i).$$
for some $A>0$ and $e_i \iid \Pi$, $\E[e_i] = 0$, $\E[e_i^4] < \infty$. Because we can treat $\eta_i A$ as the new step size, and $A^{-1}e_i$ as the new \textit{i.i.d} noise term. Then the setting is exactly the same as the setting of Corollary \ref{indep-non-gauss}. 
\end{remark}

\subsection{Generalize to Multi-dimension}
We present the above results and proofs in the one-dimensional case. However, the generalization to the multi-dimensional setting is natural. Before the proof of the multivariate case, we first introduce an ordering Lemma that helps with the error decomposition of random matrices.
\begin{lemma}\label{matrixv}
    Denote $V(M)= \E \| \E M -M\|^2_F$, the sum of variances of all entries of a $d \times d$ random matrix $M$. Similarly, define $U(M_1,M_2)$ to be the sum of the covariances of all corresponding entries of two $d \times d$ random matrices $M_1$ and $M_2$. For constant positive-definite matrices $P_1,P_2,P_3,P_4$ and a random matrix $M$, we have
$$ |U(P_1MP_2, P_3MP_4)| \leq \|P_1\|_2\|P_2\|_2\|P_3\|_2\|P_4\|_2 V(M),$$
and when $P_1,P_2,P_3,P_4$ are all scalar matrices, the equality holds.
\end{lemma}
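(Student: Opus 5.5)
The plan is to write both $U$ and $V$ as expectations of Frobenius inner products of the centered matrix, and then apply Cauchy--Schwarz twice, once for the expectation and once for the matrix inner product, together with the submultiplicativity of the Frobenius norm under operator-norm multiplication.

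\emph{Setup.} Write $\langle A,B\rangle_F=\mathrm{tr}(A\T B)$ and set $N=M-\E M$. The matrices $P_1,\dots,P_4$ are deterministic, so $P_1MP_2-\E(P_1MP_2)=P_1NP_2$, and similarly $P_3MP_4-\E(P_3MP_4)=P_3NP_4$. By definition, $U$ sums the entrywise covariances, which gives
\[
U(P_1MP_2,P_3MP_4)=\E\langle P_1NP_2,\;P_3NP_4\rangle_F ,\qquad V(M)=\E\|N\|_F^2 .
\]

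\emph{Inequality.} Cauchy--Schwarz for the Frobenius inner product, followed by Cauchy--Schwarz for the expectation, gives
\[
|U|\le \E\bigl(\|P_1NP_2\|_F\|P_3NP_4\|_F\bigr)\le \bigl(\E\|P_1NP_2\|_F^2\bigr)^{1/2}\bigl(\E\|P_3NP_4\|_F^2\bigr)^{1/2}.
\]
Next I would invoke the standard inequality $\|PNQ\|_F\le\|P\|_2\|N\|_F\|Q\|_2$. It follows by applying $\|PA\|_F\le \|P\|_2\|A\|_F$ column by column, and then using transposition for the right factor. Substituting this bound for both factors gives
\[
|U|\le \|P_1\|_2\|P_2\|_2\|P_3\|_2\|P_4\|_2\,\E\|N\|_F^2=\|P_1\|_2\|P_2\|_2\|P_3\|_2\|P_4\|_2\,V(M).
\]
Positive definiteness is not actually needed for the inequality.

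\emph{Equality for scalar matrices.} Let $P_k=c_kI$. Positive definiteness forces $c_k>0$, so $\|P_k\|_2=c_k$. Then
\[
U=\E\langle c_1c_2N,\;c_3c_4N\rangle_F=c_1c_2c_3c_4\,\E\|N\|_F^2,
\]
which is exactly the stated bound.

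No step is a genuine obstacle. The only care needed is in the first step: checking that ``sum of covariances of corresponding entries'' coincides with $\E\langle\cdot,\cdot\rangle_F$ of the centered matrices, and that centering commutes with multiplication by constant matrices.
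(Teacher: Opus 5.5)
Your proof is correct, but it takes a different route from the paper.

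The paper vectorizes: it writes $\mathrm{vec}(P_1MP_2)=(P_2\T\otimes P_1)\,\mathrm{vec}(M)$, so that $U=\mathrm{tr}(C\T B\,\Sigma)$ with $B=P_2\T\otimes P_1$, $C=P_4\T\otimes P_3$ and $\Sigma=\Cov(\mathrm{vec}(M))$. It then uses the spectral decomposition of the positive semi-definite $\Sigma$ to get $|\mathrm{tr}(C\T B\,\Sigma)|\le\|C\T B\|_2\,\mathrm{tr}(\Sigma)=\|C\T B\|_2\,V(M)$. It finishes with $\|C\T B\|_2\le\|B\|_2\|C\|_2$ and the Kronecker identity $\|P\T\otimes Q\|_2=\|P\|_2\|Q\|_2$.

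You stay at the matrix level instead, using Cauchy--Schwarz for $\langle\cdot,\cdot\rangle_F$ and for $\E$, together with $\|PNQ\|_F\le\|P\|_2\|N\|_F\|Q\|_2$. Your route is shorter and needs neither Kronecker products nor the covariance matrix of $\mathrm{vec}(M)$. Along the way it gives the natural Cauchy--Schwarz inequality $|U(P_1MP_2,P_3MP_4)|\le\sqrt{V(P_1MP_2)\,V(P_3MP_4)}$ for the covariance pairing $U$.

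The paper's route, before its final step, yields the constant $\|C\T B\|_2=\|P_4P_2\T\|_2\,\|P_3\T P_1\|_2$. This can be strictly smaller than the product of the four norms, though the paper does not exploit it. You could recover the same refinement by first rewriting $\langle P_1NP_2,P_3NP_4\rangle_F=\langle N,\,P_1\T P_3NP_4P_2\T\rangle_F$ before applying Cauchy--Schwarz.

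Your remark that positive definiteness plays no role in the inequality applies equally to the paper's argument. Your verification of equality for scalar matrices is fine.
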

\begin{proof}
    Define $\text{vec}(M)$ to be the vectorization of a matrix $M$, i.e., it stacks the columns of $M$ into a single, long column vector. Let $\Sigma = \text{Cov}(\text{vec}(M))$ be the covariance matrix of $\text{vec}(M)$ which is positive semi-definite. Then we have
$$\operatorname{vec}(P_1 M P_2)
=(P_2^{\top}\otimes P_1)\,\operatorname{vec}(M).$$
Let $\text{vec}(M)= v $, $B=P_2^{\top}\otimes P_1$ and $C=P_4^{\top}\otimes P_3$, then $\operatorname{vec}(P_1MP_2)=Bv$ and $\operatorname{vec}(P_3MP_4)=Cv$. Thereby
\[
U(P_1MP_2,P_3MP_4)
=\operatorname{trace}(\operatorname{Cov}(Bv,Cv))
=\operatorname{trace}\big( B\Sigma C^{\top}\big)=\operatorname{trace}\big( C^{\top}B\Sigma \big),
\]
and similarly $\operatorname{trace}(\Sigma) =V(M)$. For a positive semi-definite $\Sigma$, by its spectral decomposition, $\Sigma = \sum_{k=1}^{d^2} \lambda_k u_k u_k^{\top}$ where $\lambda_k \ge 0$. As a result,
$$ |\operatorname{trace}\big( C^{\top}B\Sigma \big)|= |\sum_{k=1}^{d^2}\operatorname{trace}(\lambda_k  C^{\top}B u_ku_k^{\top})| =|\sum_{k=1}^{d^2}\lambda_k  u_k^{\top}C^{\top}B u_k| \le \sum_{k=1}^{d^2}\lambda_k \|C^{\top}B\|_2 \|u_k\|_2^2 =  \|C^{\top}B \|_2 \operatorname{trace}(\Sigma), $$
Then the conclusion is implied by
$$\|C^{\top}B \|_2\leq \|C \|_2  \|B\|_2 = \|P_1\|_2\|P_2\|_2\|P_3\|_2\|P_4\|_2,$$
due to the property of the Kronecker product: $\|P_2^{\top}\otimes P_1\|_2 = \| P_1\|_2 \| P_2 \|_2$ and $\|P_4^{\top}\otimes P_3\|_2 = \| P_3\|_2 \| P_4 \|_2$. When $P_1,P_2,P_3,P_4$ are all scalar matrices, the equality clearly holds.\\
\end{proof}
The high-level idea to prove the multivariate convergence rate is to control the norm of a matrix by its quadratic form, which is stated in the following lemma.
\begin{lemma}\label{fnormbound}
Let $u_i \in \R^d$ be the unit vector with the $i$-th coordinate equal $1$, and for $i \neq j$ let 
$$u_{ij+}=\frac{u_i+u_j}{\sqrt{2}}, \,\ u_{ij-}=\frac{u_i-u_j}{\sqrt{2}}.$$
Then for any real symmetric $d\times d$ matrix $M$,
$$\| M\|_F^2 \leq \sum_{i=1}^d (u_i \T Mu_i)^2 +  \sum_{1\leq i<j\leq d}(u_{ij+}\T M u_{ij+})^2 +\sum_{1\leq i<j\leq d}(u_{ij-}\T M u_{ij-})^2 . $$
\end{lemma}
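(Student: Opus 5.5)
The plan is to compute each quadratic form on the right-hand side explicitly in terms of the entries $M_{ij}$ and then compare the total with $\|M\|_F^2 = \sum_i M_{ii}^2 + 2\sum_{i<j} M_{ij}^2$, where symmetry of $M$ gives the factor $2$ on the off-diagonal terms.

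\textbf{Step 1: the quadratic forms.} For the coordinate vectors, $u_i\T M u_i = M_{ii}$. For $i\neq j$, using $M_{ij}=M_{ji}$,
\[
u_{ij\pm}\T M u_{ij\pm} = \tfrac12\left(M_{ii} + M_{jj} \pm 2M_{ij}\right).
\]

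\textbf{Step 2: the parallelogram identity.} Set $a = (M_{ii}+M_{jj})/2$ and $b = M_{ij}$. The identity $(a+b)^2+(a-b)^2 = 2a^2+2b^2$ then gives
\[
(u_{ij+}\T M u_{ij+})^2 + (u_{ij-}\T M u_{ij-})^2 = \tfrac12 (M_{ii}+M_{jj})^2 + 2M_{ij}^2 .
\]

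\textbf{Step 3: summation.} Summing over $i<j$ and adding $\sum_i M_{ii}^2$, the right-hand side of the lemma equals
\[
\sum_i M_{ii}^2 + \tfrac12\sum_{i<j}(M_{ii}+M_{jj})^2 + 2\sum_{i<j}M_{ij}^2 .
\]
Every term is nonnegative, so this is at least $\sum_i M_{ii}^2 + 2\sum_{i<j}M_{ij}^2 = \|M\|_F^2$. The surplus term $\tfrac12\sum_{i<j}(M_{ii}+M_{jj})^2$ is simply discarded.

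There is no real obstacle in this argument. The only care needed is to use symmetry so that the two cross terms combine into $\pm 2M_{ij}$, and to count each off-diagonal pair twice in the Frobenius norm.
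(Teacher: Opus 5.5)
Your proposal is correct and follows the same approach as the paper: compute $u_{ij\pm}\T M u_{ij\pm} = \tfrac12(M_{ii}+M_{jj}\pm 2M_{ij})$, combine the two squares into $\tfrac12(M_{ii}+M_{jj})^2 + 2M_{ij}^2 \ge 2M_{ij}^2$, and sum against $\|M\|_F^2 = \sum_i M_{ii}^2 + 2\sum_{i<j}M_{ij}^2$. Your write-up is slightly more explicit than the paper's, since you state the Frobenius-norm identity and the discarded surplus term directly.
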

\begin{proof}
Write $M = (M_{ij})_{1\leq i,j\leq d}$. It is clear that $M_{ii} =  u_i\T Mu_i$. Further notice that 
$$(u_{ij+}\T M u_{ij+})^2=\frac{1}{4}(M_{ii}+M_{jj}+2M_{ij})^2,\,\ (u_{ij-}\T M u_{ij-})^2=\frac{1}{4}(M_{ii}+M_{jj}-2M_{ij})^2,$$
we have
$$ (u_{ij+}\T M u_{ij+})^2+(u_{ij-}\T M u_{ij-})^2 = \frac{1}{2}(M_{ii}+M_{jj})^2+ 2(M_{ij})^2\ge   2(M_{ij})^2 ,$$
and the conclusion follows.
\end{proof}

Now we generalize the results to the multivariate case and show that
$$\E\| \wh\Sigma_n  - \Sigma \|_F^2  \lesssim  n^{-1+\alpha} \log n, \,\  \text{where} \,\ \widehat{\Sigma}_n = \frac1n \sum_{i=1}^n \left[ X_iX_i\T + X_i(X_{i-1} + \cdots + X_{t_i})\T+(X_{i-1} + \cdots + X_{t_i})X_i\T \right].$$
We focus on the Frobenius norm of the error, which is larger than the operator norm. First, consider the multivariate iteration $X_i = (1-\eta_i) X_{i-1} + \eta_i e_i$, i.e., $X_i\in\R^{d}$ follows the same form of iteration as in \eqref{eq:SGDform_mean}. For any $a \in \R^d$ with $\|a\| = 1$, define $Y_i = a\T X_i$ . Then
\begin{equation}\label{newite}
    Y_i = a\T X_i = (1-\eta_i) a\T X_{i-1} + \eta_i a\T e_i   = (1-\eta_i) Y_{i-1} + \eta_i \wt e_i,\qquad  Y_0 = 0
\end{equation}  
where $\wt e_i = a\T e_i$, $\text{var}(\wt e_i) \le \lambda_{\max}(\Sigma_e)$ and $\E |\wt e_i|^4 \le \E |e_i|^4$. 

 Let $\wh\sigma^2_{n, Y}, \sigma^2_{n, Y}, \sigma_{Y}$ denote, respectively, the  covariance estimator (as in \eqref{eq:sig_finite_1d}), the true non-asymptotic variance (as in \eqref{eq:sig_est_1d}) and the limiting variance (sandwich form) associated with the  sequence $\{Y_i\}$. Similarly, define $\wh\Sigma_n$ , $\Sigma_n$, $\Sigma$ as as the corresponding quantities in the multi-dimensional setting, based on the sequence $\{X_i\}$. Then  
 \[a\T \wh\Sigma_n a = \wh\sigma^2_{n, Y}, 
 a\T \Sigma_n a  = \sigma^2_{n, Y},
 a\T \Sigma a= \sigma_{Y}.\]
Since $\{Y_i\}$ is a one-dimensional SGD iterate of the same form as defined in \eqref{eq:SGDform_mean}, we can apply the one-dimensional bound:
\[  \E\Big[ (\wh\sigma^2_{n, Y}  -  \sigma^2_{n, Y} )^2 \Big]\lesssim n^{-1+\alpha} \log n,\]
or
\[  \E\Big[ (\wh\sigma^2_{n, Y}  -  \sigma^2_{Y} )^2 \Big]\lesssim n^{-1+\alpha} \log n.\]
In other words, for any $a \in \R^d$ with $\|a\| = 1$, since the noise term $\wt e_i $ the iteration \eqref{newite} has uniformly bounded variances and $4$-th moments, we have
\begin{equation}\label{quadraticbound}
\E\Big[ a\T (\wh\Sigma_n -\Sigma_n) a \Big]^2 \lesssim n^{-1+\alpha} \log n.    
\end{equation}
By Lemma \ref{fnormbound}, we can choose $a$ as $u_i$, $u_{ij+}$ and $u_{ij-}$ for $1\leq i <j \leq d$, and the Frobenius norm can be bounded by a finite sum of such quadratic forms. As a result, \eqref{quadraticbound} directly implies
\[\E\| \wh\Sigma_n  - \Sigma_n \|_F^2  \lesssim  n^{-1+\alpha} \log n. \]
Similarly we have
\[\E\| \wh\Sigma_n  - \Sigma \|_F^2  \lesssim  n^{-1+\alpha} \log n. \]
Finally, we prove the convergence rate for the iteration with an arbitrary positive-definite matrix $A$, i.e.,
$$X_i = ( \textbf{I}_d- \eta_iA) X_{i-1} + \eta_i e_i, \qquad  X_0 = 0.$$
The high-level idea is to apply Lemma \ref{matrixv} to get an upper bound of $V(\widehat{\Sigma}_n)$ as a linear combination of $V(e_ie^{\top}_j)$, and show that this upper bound reduces to the case that has been discussed and solved. We decompose the estimation error into the bias and variance parts,
$$ \E \| \wh \Sigma_n - \Sigma_n \|^2_F = \| \E \wh \Sigma_n - \Sigma_n    \|^2_F + \E \| \E \wh \Sigma_n-\wh \Sigma_n\|^2_F.$$

The first term, i.e., the bias part, can be bounded following the same argument in the proof of Proposition \ref{bias} (Section \ref{sec:proof_mean_prop}). For the variance part, we define another SGD iteration in $\R^d$,
$$ \wt X_i = (1 - \eta_i \lambda) \wt X_{i-1} + \eta_i  e_i, 
	\qquad \wt X_0 = 0,$$
where $\lambda$ is the smallest eigenvalue of $A$. We denote the corresponding covariance estimator as $\wt \Sigma_n$. Notice that we have shown the convergence rate for $\wt \Sigma_n$, i.e., we have $\E \| \E \wt \Sigma_n-\wt \Sigma_n\|^2_F \lesssim n^{-1+\alpha} \log n$. Then it suffices to show that 
\begin{equation}
   \E \| \E \wh \Sigma_n-\wh \Sigma_n\|^2_F \leq\E \| \E \wt \Sigma_n-\wt \Sigma_n\|^2_F,
\end{equation}
i.e., $ V (\wh \Sigma_n) \leq V( \wt \Sigma_n) $. Recall that $X_i = \sum_{k=1}^i Y_{(A)}{}_k^i\eta_k e_k $, similar as the proof of \ref{indep-normal-comp}, we express $\wh \Sigma_n$ as $n + \binom{n}{2}$ uncorrelated summation:
 $$
 \widehat{\Sigma}_n = \sum_{i=1}^{n}\sum_{j=1}^{n}h_{i,j}( e_{i} e_{j}\T) = \sum_{i=1}^{n} h_{i,i}( e_{i}e_i\T) +\sum_{1 \le i < j \le n}h_{i,j} (e_{i}e_{j}\T),
 $$
where $h_{i,j}$ are linear functions of $d \times d$ matrices such that for some finite index sets $\{ S_{i,j} \in \{1, 2, \ldots, n\} \times \{1, 2, \ldots, n\}
, 1 \leq i \leq j\leq n\}$,
$$h_{i,j} (e_{i}e_{j}\T) =\eta_i\eta_j \sum_{t,k \in \mathcal{S}_{i,j}} Y_{(A)}{}_i^t e_{i}e_{j}\T Y_{(A)}{}_j^k.$$
Here we define 
\begin{equation}
		Y_{(A)}{}_j^i = \left\{\begin{array}{cc}
			\textbf{I}_d & \text{if $j = i$,} \\
			\displaystyle\prod_{k = j+1}^i (\textbf{I}_d -  \eta_kA) & \text{if $j < i$.}
		\end{array}\right.
	\end{equation}

Leveraging Lemma \ref{matrixv}, we have an upper bound of $V(\wh \Sigma_n)$ as
$$V(\wh \Sigma_n) \leq  \sum_{i=1}^{n} \eta_i^2 V( e_{i}e_i\T)(\sum_{t,k \in \mathcal{S}_{i,i}}Y_{(\lambda)}{}_i^t Y_{(\lambda)}{}_i^k)^2+\sum_{1 \le i < j \le n}\eta_i\eta_jV (e_{i}e_{j}\T)(\sum_{t,k \in \mathcal{S}_{i,j}}Y_{(\lambda)}{}_i^t Y_{(\lambda)}{}_j^k)^2.$$
The key observation is that the structures of $\wt \Sigma_n$ and $\wh \Sigma_n$ are exactly the same, except that the matrix $A$ should be replaced by $\lambda \textbf{I}_d$. Following an identical argument,
$$V(\wt \Sigma_n) =  \sum_{i=1}^{n} \eta_i^2 V( e_{i}e_i\T)(\sum_{t,k \in \mathcal{S}_{i,i}}Y_{(\lambda)}{}_i^t Y_{(\lambda)}{}_i^k)^2+\sum_{1 \le i < j \le n}\eta_i\eta_jV (e_{i}e_{j}\T)(\sum_{t,k \in \mathcal{S}_{i,j}}Y_{(\lambda)}{}_i^t Y_{(\lambda)}{}_j^k)^2$$
since by Lemma \ref{matrixv}, the equality holds for scalar multiplier matrices. So we have proved that $V(\wh \Sigma_n) \leq V(\wt \Sigma_n) \lesssim n^{-1+\alpha} \log n$, and the proof for multivariate case is completed.

 In the remainder of the proof, we will use the one-dimensional results established in this section, while all results remain valid in the multi-dimensional setting.

\section{PROOF OF THEOREM \ref{thm:general}} \label{sec:proof_general}
As mentioned in the proof sketch, to bound the estimation error, we decompose it as follows:
\begin{equation}\label{eq:decomp}
    \|\widehat{\Sigma}_{n}-\Sigma\| \leq   \|\widehat{\Sigma}_{n}-\widehat{\Sigma}_{n,\delta}\|+\|\widehat{\Sigma}_{n,\delta}-\widehat{\Sigma}_{n,U}\|+\|\widehat{\Sigma}_{n,U}-\widehat{\Sigma}_{n,\widetilde{U}}\|+\|\widehat{\Sigma}_{n,\widetilde{U}}- \Sigma\|.
\end{equation}

 \subsection{With i.i.d Approximation: the Fourth Term}
Recall the definition of \textit{i.i.d} approximation sequence,
\[\widetilde U_i  = (1 - \eta_i A) \widetilde U_{i-1} + \eta_i \epsilon_i^*; \quad \widetilde U_0 = \delta_0,\]
where $\epsilon_i^* = - \nabla f(x^*, \xi_i)$. The corresponding estimator based on this sequence is given by
\[\widehat{\Sigma}_{n, \wt U}= \frac1n\sum_{i=1}^n \left[ \widetilde U_i^2 + 2\widetilde U_i \biggl(\sum_{k=t_i}^{i-1} \widetilde U_k\biggr)  \right] \]
We consider the starting point $\delta_0$ to be deterministic. The next proposition bound the fourth term in \eqref{eq:decomp}.
\begin{proposition} \label{indep-general}
	Assume $\E[ \nabla f(x^*, \xi_i)^4] < \infty$, then
	\begin{equation}
		\| \widehat{\Sigma}_{n, \wt U}- \Sigma  \|_2 = \sqrt{ \E\abs{\widehat{\Sigma}_{n, \wt U}- \Sigma }^2  }
		\lesssim \sqrt{n^{-1+\alpha}\log n}.
	\end{equation}
\end{proposition}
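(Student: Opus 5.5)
We reduce Proposition \ref{indep-general} to the linear i.i.d.-noise results of Section \ref{proof::iid} and split the error into a finite-sample part and a deterministic part:
\[
\| \widehat{\Sigma}_{n, \wt U}- \Sigma \|_2 \le \| \widehat{\Sigma}_{n, \wt U}- \Sigma_{n,\wt U} \|_2 + \abs{\Sigma_{n,\wt U} - \Sigma},
\qquad \Sigma_{n,\wt U} := \var\Bigl(\sqrt n\, \bar{\wt U}_n\Bigr).
\]
The sequence $\wt U_i$ obeys $\wt U_i=(\mathbf I_d-\eta_iA)\wt U_{i-1}+\eta_i\epsilon_i^*$. Its innovations $\epsilon_i^*$ are i.i.d., mean zero (since $\nabla F(x^*)=0$), with covariance $S$ and finite fourth moments by assumption. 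This is exactly the multivariate setting treated at the end of Section \ref{proof::iid}.

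\textbf{Step 1 (estimator versus finite-sample variance).} The analysis of Section \ref{proof::iid} assumed $X_0=0$, but here $\wt U_0=\delta_0$ is deterministic. Write $\wt U_i = Y_{(A)}{}_0^i\delta_0 + \wt U_i^{0}$, where $\wt U_i^0$ is the zero-initialized chain. Since $|Y_{(A)}{}_0^i|\lesssim \exp(-c\, i^{1-\alpha})$ by Lemma \ref{lemma Yij}(\ref{lemma Yij-1}), the deterministic part is summable even after multiplication by the batch length $\ell_i$. Its contribution to $\widehat{\Sigma}_{n,\wt U}$ is therefore $O(n^{-1})$ deterministically, plus cross terms of $L_2$ size $O(n^{-1}\cdot\mathrm{polylog})$. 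The same holds for its contribution to $\Sigma_{n,\wt U}$. For the zero-initialized part, the multivariate extension gives
\[
\E\|\widehat\Sigma^0_n-\Sigma^0_n\|_F^2\lesssim n^{-1+\alpha}\log n .
\]
That extension combines Corollary \ref{indep-non-gauss}, the quadratic-form reduction in Lemma \ref{fnormbound}, and the comparison with $A$ replaced by $\lambda_{\min}(A)\mathbf I_d$ via Lemma \ref{matrixv}. The bias is $n^{-1}\sum_i i^{-\eta C}\lesssim n^{-1}$, using $\eta C>1$ in the spirit of Proposition \ref{bias}. Finally, the operator norm is dominated by the Frobenius norm.

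\textbf{Step 2 (finite-sample versus limiting covariance), the main obstacle.} We show $\abs{\Sigma^0_n-\Sigma}\lesssim n^{-1+\alpha}$ by a representation of the average as a weighted sum of innovations. Swapping sums gives
\[
\sum_{i=1}^n \wt U_i^0=\sum_{k=1}^n \Bigl(\eta_k\sum_{i=k}^n Y_{(A)}{}_k^i\Bigr)\epsilon_k^* =: \sum_{k=1}^n (A^{-1}+R_k^n)\,\epsilon_k^* .
\]
The standard Polyak--Juditsky identity expresses $R_k^n$ through a telescoping of $\eta_{i}^{-1}-\eta_{i-1}^{-1}\lesssim i^{\alpha-1}$ together with a boundary term $A^{-1}Y_{(A)}{}_k^n$. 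This yields
\[
|R_k^n|\lesssim \eta_k^{-1}\sum_{i\ge k}|Y_{(A)}{}_k^i|\,|\eta_{i+1}^{-1}-\eta_i^{-1}|\,\eta_i + |Y_{(A)}{}_k^n| \lesssim k^{\alpha-1}+\exp\bigl(-c\,(n^{1-\alpha}-k^{1-\alpha})\bigr),
\]
where Lemma \ref{lemma Yij}(\ref{lemma Yij-3}) controls the sum. Then
\[
\Sigma_n^0-\Sigma=\frac1n\sum_{k=1}^n\bigl[(A^{-1}+R_k^n)S(A^{-1}+R_k^n)\T-A^{-1}SA^{-1}\bigr],
\]
so $\abs{\Sigma_n^0-\Sigma}\lesssim n^{-1}\sum_k(|R_k^n|+|R_k^n|^2)$. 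The first piece contributes $n^{-1}\sum_k k^{\alpha-1}\asymp n^{\alpha-1}$. The exponential piece is nonnegligible only for $n-k\lesssim n^{\alpha}$, so it contributes $n^{-1}\cdot n^{\alpha}$. Getting the bound on $R_k^n$ uniform in $k$, and in particular near the boundary $k\approx n$, is the delicate point; there we would fall back on the crude bound $|R_k^n|\lesssim 1$, which still costs only $n^{\alpha-1}$.

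\textbf{Conclusion.} Combining the two steps gives
\[
\sqrt{\E|\widehat\Sigma_{n,\wt U}-\Sigma|^2}\lesssim \sqrt{n^{-1+\alpha}\log n}+n^{-1+\alpha}\lesssim\sqrt{n^{-1+\alpha}\log n}.
\]
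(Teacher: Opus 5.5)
Your argument is correct. Step 1 is essentially the paper's handling of its first three terms. The paper also splits off $\zeta_i=Y_{(A)}{}_0^i\delta_0$ to pass to the zero-initialized chain, and then quotes Corollary~\ref{indep-non-gauss}. Your $O(n^{-1})$ for the initialization pieces is the accurate order; the paper's claimed $n^{-K_1}$ overstates the decay, but harmlessly.

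The genuine difference is in the deterministic term $\abs{\Sigma_{n,\wt U}-\Sigma}$. The paper writes $\Sigma_{n,\wt U}=n^{-1}[(S_{(A)}{}_0^n\delta_0)^2+\sum_i(1+S_{(A)}{}_i^n)^2\eta_i^2S]$. It bounds $S_{(A)}{}_i^n$ by an $\int_i^\infty$ of the exponential kernel and expands the resulting incomplete gamma function by repeated integration by parts, giving $S_{(A)}{}_i^n\eta_i-A^{-1}\lesssim i^{-1+\alpha}$. That route exhibits the leading correction $(\eta A)^{-1}i^{\alpha-1}$ explicitly, but as written it has three weaknesses:
\begin{itemize}
\item it is only one-sided;
\item it ignores the boundary layer $n-i\lesssim n^{\alpha}$, where $(1+S_{(A)}{}_i^n)\eta_i-A^{-1}\approx-A^{-1}$;
\item it turns the multiplicative $\asymp$ of Lemma~\ref{lemma Yij} into an additive comparison with $A^{-1}$, which silently requires the implicit constant to equal one.
\end{itemize}
Your exact summation-by-parts (Polyak--Juditsky) identity sidesteps all three. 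It gives two-sided control of $R_k^n$, treats the boundary layer explicitly (contributing $n^{\alpha-1}$), and needs only the crude bound of Lemma~\ref{lemma Yij}(\ref{lemma Yij-3}). In the matrix case, the bias exponent in your Step 1 is really $\eta\lambda_{\min}(A)C$ rather than $\eta C$; the paper glosses over this as well.

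One slip to fix in Step 2: the factor $\eta_k^{-1}\cdots\eta_i$ in your displayed bound should be $\eta_k$. As written, that sum is of order $k^{2\alpha-1}$, not $k^{\alpha-1}$. The exact identity is
\[
R_k^n=A^{-1}\Bigl[(\eta_k\eta_{k+1}^{-1}-1)\mathbf I_d+\eta_k\sum_{i=k+1}^{n}(\eta_{i+1}^{-1}-\eta_i^{-1})Y_{(A)}{}_k^i-\eta_k\eta_{n+1}^{-1}Y_{(A)}{}_k^{n+1}\Bigr].
\]
Its middle term is $\lesssim k^{-\alpha}\cdot k^{\alpha-1}\cdot k^{\alpha}=k^{\alpha-1}$. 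Its boundary term carries an extra factor $((n+1)/k)^{\alpha}$, which is absorbed into $\exp\{-c(n^{1-\alpha}-k^{1-\alpha})\}$ after shrinking $c$. So your final bound on $R_k^n$, and hence the $n^{\alpha-1}$ rate, stands.
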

\begin{proof}
        Define a sequence with starting point $0$,
        \begin{equation}
	X_i^\circ = (1 - \eta_iA) X_{i-1}^\circ + \eta_i \epsilon_i^*, \quad i \ge 1;
	\qquad X_0^\circ = 0,
\end{equation}
    and the corresponding estimator
    \begin{equation}
	\widehat{\Sigma}^{\circ}_{n} = \frac1n \sum_{i=1}^n \left[ |X_i^\circ|^2 + 2X_i^\circ(X_{i-1}^\circ + \cdots + X_{t_i}^\circ) \right].
\end{equation}
	Decompose
	\begin{equation}
		\| \widehat{\Sigma}_{n, \wt U}- \Sigma \|_2
		\le \| \widehat{\Sigma}_{n, \wt U}- \widehat{\Sigma}^{\circ}_{n} \|_2 + \| \widehat{\Sigma}^{\circ}_{n} - \Sigma^\circ_n \|_2 + \| \Sigma^\circ_n - \Sigma_{n, \wt U} \|_2+ \|\Sigma_{n, \wt U}-\Sigma\|_2.
	\end{equation}
	The bound for the second term is given by Corollary \ref{indep-non-gauss}:
	\[ \| \widehat{\Sigma}^{\circ}_{n} - \Sigma^\circ_n \|_2 \lesssim \sqrt{n^{-1+\alpha}\log n}. \]
	By Lemma \ref{lemma Yij} (\ref{lemma Yij-1}) and Lemma \ref{indep-X}, we have $ \max\{\norm{\widetilde U_i }_2, \norm{X_i^\circ}_2 \} \le i^{-\alpha/2}$, and
	\begin{equation*}
		\zeta_i := \widetilde U_i  - X_i^\circ = Y_{(A)}{}_0^i\delta_0 \lesssim  \exp\left( -\eta A i^{1-\alpha} \right),
	\end{equation*}
	which implies $\abs{\zeta_i} \lesssim i^{-K}$ for any $K > 0$.
	Therefore for the third term, we have
	\begin{align*}
		|  \Sigma^\circ_n - \Sigma_n  |
		& = \abs{\frac1n \sum_{i=1}^n \E\left[ \widetilde U_i ^2 + 2\widetilde U_i  \bigg(\sum_{j=1}^{i-1} X_j \bigg) \right] - \frac1n \sum_{i=1}^n \E\left[ |X_i^\circ|^2 + 2X_i^\circ \bigg(\sum_{j=1}^{i-1} X_j^\circ \bigg) \right] } \\
		& \le \abs{ \frac1n \sum_{i=1}^n \E\left[ \widetilde U_i \zeta_i + 2\widetilde U_i  \bigg(\sum_{j=1}^{i-1} \zeta_j \bigg) \right] }
		+ \abs{ \frac1n \sum_{i=1}^n \E\left[ \zeta_i X_i^\circ + 2\zeta_i \bigg(\sum_{j=1}^{i-1} X_j^\circ \bigg) \right] } \\
		& \lesssim \frac1n  \sum_{i=1}^n \norm{\widetilde U_i }_1 \sum_{j=1}^{i} \abs{\zeta_j}
		+ \frac1n \sum_{i=1}^n \abs{\zeta_i} \sum_{j=1}^{i}  \|X_j^\circ\|_1  \\
		& \lesssim 2 \cdot \bigg(\frac1n \sum_{i=1}^n i^{-\alpha/2} \bigg) \bigg( \sum_{j=1}^{n} j^{-K} \bigg)
		\lesssim  n^{-K_1},
	\end{align*}
 for any $K_1 > 0$. Similarly, for the first term,
	\begin{align*}
		\| \widehat{\Sigma}_{n, \wt U}- \widehat{\Sigma}^{\circ}_{n} \|_2
		& = \norm{\frac1n \sum_{i=1}^n \left[ \widetilde U_i ^2 + 2\widetilde U_i  \bigg(\sum_{j=t_i}^{i-1} X_j \bigg) \right] - \frac1n \sum_{i=1}^n \left[ |X_i^\circ|^2 + 2X_i^\circ \bigg(\sum_{j=t_i}^{i-1} X_j^\circ \bigg) \right] }_2 \\
		& \le \norm{ \frac1n \sum_{i=1}^n \left[ \widetilde U_i \zeta_i + 2\widetilde U_i  \bigg(\sum_{j=t_i}^{i-1} \zeta_j \bigg) \right] }_2
		+ \norm{ \frac1n \sum_{i=1}^n \left[ \zeta_i X_i^\circ + 2\zeta_i \bigg(\sum_{j=t_i}^{i-1} X_j^\circ \bigg) \right] }_2 \\
		& \lesssim \frac1n  \sum_{i=1}^n \norm{\widetilde U_i }_2 \sum_{j=t_i}^{i} \abs{\zeta_j}
		+ \frac1n \sum_{i=1}^n \abs{\zeta_i} \sum_{j=t_i}^{i}  \|X_j^\circ\|_2 
		\lesssim  n^{-K_1}.
	\end{align*}	

 For the fourth term, recall that $\epsilon^{*}_i$'s are i.i.d. with $\E[\epsilon^{*}_i] = 0$, $\E[\epsilon^{*2}_i] = S$, then
		\begin{align*}
			\Sigma_{n, \wt U}& =
			\frac1n \E\left[ \biggl(\sum_{i=1}^{n} \widetilde U_i\biggr)^2 \right]
			= \frac1n \E\left[ \bigg( S_{(A)}{}_0^n \delta_0 + \sum_{i=1}^n (1+S_{(A)}{}_i^n)\eta_i \epsilon^*_i \bigg)^2 \right] \\
			& = \frac1n \biggl( (S_{(A)}{}_0^n \delta_0)^2 + \sum_{i=1}^n (1+S_{(A)}{}_i^n)^2 \eta_i^2 S \biggr). 
		\end{align*}
		Therefore
		\begin{align*}
			|\Sigma_{n, \wt U}- \Sigma| 
			& \le \frac1n (S_{(A)}{}_0^n \delta_0)^2 + \frac1n \sum_{i=1}^n \left[ (1+S_{(A)}{}_i^n)^2 \eta_i^2 - A^{-2} \right] S \\
			& \lesssim \frac1n \cdot \mathcal{O}(1) + \frac1n \sum_{i=1}^n \left[ (1+S_{(A)}{}_i^n) \eta_i - A^{-1} \right] \cdot \mathcal{O}(1) \\
			& \lesssim \frac1n + \frac1n \sum_{i=1}^n (S_{(A)}{}_i^n \eta_i - A^{-1}).
		\end{align*}
		To bound $(S_{(A)}{}_i^n \eta_i - A^{-1})$, by Lemma \ref{lemma Yij} (\ref{lemma Yij-1}),
		\begin{align*}
			S_{(A)}{}_i^n  & =  \sum_{k = i+1}^n Y_{(A)}{}_i^k
			\asymp\sum_{k = i+1}^n \exp\left\{\frac{\eta A}{1-\alpha} \left(i^{1-\alpha} - k^{1-\alpha} \right)  \right\},
		\end{align*}
		and then
		\begin{align*}
			\sum_{k = i+1}^n \exp\left(- \frac{\eta A}{1-\alpha} k^{1-\alpha} \right)
			& \le \int_{i}^\infty \exp\left(- \frac{\eta A}{1-\alpha} u^{1-\alpha} \right) \dif u \\
			& = \frac{1}{\eta A} \biggl(\frac{1-\alpha}{\eta A}\biggr)^{\frac{\alpha}{1-\alpha}} \int_{\frac{\eta A}{1-\alpha} i^{1-\alpha} }^\infty \e^{-t} t^{\frac{\alpha}{1-\alpha}} \dif t.
		\end{align*}
		Using integration by parts, for any fixed $a \ge 1$, $\beta > 1$,
		\begin{align*}
			\int_a^\infty \e^{-x} x^\beta \dif x
			& = \e^{-a}a^\beta + \beta \int_a^\infty \e^{-x} x^{\beta - 1} \dif x \\
			& = \e^{-a}a^\beta \left( 1 + \beta a^{-1} \right) + \beta(\beta-1) \int_a^\infty \e^{-x} x^{\beta - 2} \dif x = \cdots \\
			& = \e^{-a}a^\beta \left( 1 + \beta a^{-1} + \cdots \widetilde{\beta}! a^{-\lfloor \beta \rfloor} \right) 
			+ \beta! \int_a^\infty \e^{-x} x^{\beta - \lfloor \beta \rfloor - 1} \dif x \\
			& \le \e^{-a}a^\beta \left( 1 + \beta a^{-1} + \cdots \widetilde{\beta}! a^{-\lfloor \beta \rfloor}
			+ \beta! a^{-\beta} \right),
		\end{align*}
		where $\beta ! = \beta(\beta-1)\cdots(\beta - \lfloor \beta \rfloor)$, $\widetilde{\beta}! = \beta !/(\beta - \lfloor \beta \rfloor)$.

		Therefore, for $i$ large enough,
		\begin{align*}
			{} & \int_{\frac{\eta A}{1-\alpha} i^{1-\alpha} }^\infty \e^{-t} t^{\frac{\alpha}{1-\alpha}} \dif t \\
			= {} & \exp\left( - \frac{\eta A}{1-\alpha} i^{1-\alpha}  \right) \left(\frac{\eta A}{1-\alpha} i^{1-\alpha} \right)^{\frac{\alpha}{1-\alpha}} \left( 1 +  (\eta A)^{-1} i^{-1+\alpha} + o(i^{-1+\alpha}) \right),
		\end{align*}
		which implies
		\begin{align*}
			S_{(A)}{}_i^n  & \lesssim \exp\left( \frac{\eta A}{1-\alpha} i^{1-\alpha}  \right) \sum_{k = i+1}^n \exp\left(- \frac{\eta A}{1-\alpha} k^{1-\alpha} \right) \\
			& = (\eta A)^{-1} i^\alpha \left( 1 +  (\eta A)^{-1} i^{-1+\alpha} + o(i^{-1+\alpha}) \right) 
		\end{align*}
		and hence
		\begin{align*}
			S_{(A)}{}_i^n \eta_i - A^{-1} & \lesssim A^{-1}\left( 1 +  (\eta A)^{-1} i^{-1+\alpha} + o(i^{-1+\alpha}) \right)  - A^{-1} \\
			& \lesssim i^{-1+\alpha}.
		\end{align*}
		Finally, it gives
		\begin{align*}
			|\Sigma_{n, \wt U}- \Sigma| 
			& \lesssim \frac1n + \frac1n \sum_{i=1}^n (S_{(A)}{}_i^n \eta_i - A^{-1}) \\
			& \lesssim \frac1n + \frac1n \sum_{i=1}^n i^{-1+\alpha} \asymp  n^{-1+\alpha}.
		\end{align*}
Hence, putting together,
	\begin{align*}
		\| \widehat{\Sigma}_{n, \wt U}- \Sigma \|_2
		&\le \| \widehat{\Sigma}_{n, \wt U}- \widehat{\Sigma}^{\circ}_{n} \|_2 + \| \widehat{\Sigma}^{\circ}_{n} - \Sigma^\circ_n \|_2 + \| \Sigma^\circ_n - \Sigma_{n, \wt U} \|_2+ \|\Sigma_{n, \wt U}-\Sigma\|_2 \\
		& \lesssim n^{-K_1} + \sqrt{n^{-1+\alpha} \log n} + n^{-K_1} +n^{-1+\alpha}\\
		& \lesssim \sqrt{n^{-1+\alpha} \log n}.  \tag{$\alpha > 1/2$}
	\end{align*}
\end{proof}

\subsection{With Linear Approximation: the Third Term}\label{linear}
Recall the definition of the linear approximation sequence,
\[U_i = (1 - \eta_i A) U_{i-1} + \eta_i \epsilon_i; \quad U_0 = \delta_0,\]
where $\epsilon_{i} =  \nabla F(X_{i-1}) - \nabla f(X_{i-1}, \xi_i)$. We consider the starting point $\delta_0$ to be deterministic. The corresponding estimator based on this sequence is given by
\[\widehat{\Sigma}_{n,  U}= \frac1n\sum_{i=1}^n \left[  U_i^2 + 2 U_i \biggl(\sum_{k=t_i}^{i-1}  U_k\biggr)  \right] \]
The next proposition bound the third term in \eqref{eq:decomp}.

\begin{proposition}[linear case] \label{prop linear case}
	Suppose Assumptions \ref{as1}-\ref{as3} hold, then we have
	\begin{equation}
		\| \widehat{\Sigma}_{n, U}- \widehat{\Sigma}_{n, \wt U}  \|_2 = \sqrt{ \E \abs{ \widehat{\Sigma}_{n, U}-  \widehat{\Sigma}_{n, \wt U} }^2}
		\lesssim \sqrt{n^{-1+\alpha}\log n}.
	\end{equation}
   Moreover, combining this with Proposition \ref{indep-general}, we conclude that the intermediate estimator based on the linear sequence is consistent, with
    \[\| \widehat{\Sigma}_{n, U}- \Sigma  \|_2=\sqrt{\E \abs{ \widehat{\Sigma}_{n, U}-  \Sigma }^2}
		\lesssim \sqrt{n^{-1+\alpha}\log n}.\]
\end{proposition}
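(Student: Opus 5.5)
The plan is to write $U_i = \widetilde U_i + \Delta_i$, where $\Delta_i = (1-\eta_i A)\Delta_{i-1} + \eta_i v_i$, $\Delta_0=0$, and $v_i = \epsilon_i - \epsilon_i^*$. Expanding the quadratic estimator then gives
\[
\widehat{\Sigma}_{n,U}-\widehat{\Sigma}_{n,\wt U} = \frac1n\sum_{i=1}^n\Big[\Delta_i \widetilde U_i\T + \widetilde U_i\Delta_i\T + \Delta_i\Delta_i\T + \Delta_i\Big(\sum_{k=t_i}^{i-1}\widetilde U_k\Big)\T + \widetilde U_i\Big(\sum_{k=t_i}^{i-1}\Delta_k\Big)\T + \Delta_i\Big(\sum_{k=t_i}^{i-1}\Delta_k\Big)\T + \text{transposes}\Big].
\]
Each term is a product of a ``small'' factor involving $\Delta$ and either a $\widetilde U$ factor or another $\Delta$ factor. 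I will bound every term by Cauchy--Schwarz/H\"older in $L_2$, using the $L_4$ norms of the factors. This is available because Assumption~\ref{as3} gives $q\ge 8$ moments.

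\textbf{Step 1: moment bounds.} First I establish the moment bounds on the factors. The standard SGD moment bound under Assumptions~\ref{as1}--\ref{as3} gives $\|\delta_i\|_4 \lesssim i^{-\alpha/2}$. Stochastic Lipschitz continuity then gives
\[
\|v_i\|_4 \le \|\nabla f(X_{i-1},\xi_i)-\nabla f(x^*,\xi_i)\|_4 + \|\nabla F(X_{i-1})\|_4 \lesssim \|\delta_{i-1}\|_4 \lesssim i^{-\alpha/2}.
\]
Since $v_i$ is a martingale difference, writing $\Delta_i=\sum_{k\le i} Y_{(A)}{}_k^i\eta_k v_k$ and applying Burkholder (Lemma~\ref{lemma Burkholder}) together with Lemma~\ref{lemma Yij}(\ref{lemma Yij-2}) gives
\[
\|\Delta_i\|_4^2 \lesssim \sum_{k}|Y_k^i|^2 k^{-2\alpha}k^{-\alpha} \asymp i^{-2\alpha}, \quad\text{so}\quad \|\Delta_i\|_4\lesssim i^{-\alpha}.
\]
Likewise $\|\widetilde U_i\|_4\lesssim i^{-\alpha/2}$.

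For the batch sums, naive termwise summation loses too much. Instead I write $\sum_{k=t_i}^{i-1}\Delta_k = \sum_j c_{ij}\eta_j v_j$ with weights $c_{ij}=\sum_{k} Y_j^k \lesssim j^\alpha$ by Lemma~\ref{lemma Yij}(\ref{lemma Yij-3}). Burkholder then gives
\[
\Big\|\sum_{k=t_i}^{i-1}\Delta_k\Big\|_4^2 \lesssim \sum_{j\le i} j^{2\alpha}j^{-2\alpha}j^{-\alpha}\min\Big(1,e^{-\eta j^{-\alpha}(t_i-j)}\Big).
\]
The effective range of $j$ has length about $\ell_i + i^\alpha$, so this is $\lesssim i^{-\alpha}\ell_i \asymp \log i$. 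In the same way, $\|\sum_{k=t_i}^{i-1}\widetilde U_k\|_4 \lesssim \sqrt{\ell_i}$, consistent with Lemma~\ref{indep-W}.

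\textbf{Step 2: the sum over $i$.} Applying the triangle inequality over $i$ directly is too crude for the cross term: it would give $\frac1n\sum_i i^{-\alpha}\sqrt{\ell_i}\asymp n^{-\alpha/2}\sqrt{\log n}$. Since $\alpha>1/2$, we have $-\alpha/2 < (-1+\alpha)/2$, so this is actually already enough. Checking the other terms:
\[
\frac1n\sum_i\|\widetilde U_i\|_4\Big\|\sum_{k=t_i}^{i-1}\Delta_k\Big\|_4 \lesssim \frac1n\sum_i i^{-\alpha/2}\sqrt{\log i} \lesssim n^{-\alpha/2}\sqrt{\log n},
\]
\[
\frac1n\sum_i\|\Delta_i\|_4\Big\|\sum_{k=t_i}^{i-1}\Delta_k\Big\|_4 \lesssim \frac1n\sum_i i^{-\alpha}\sqrt{\log i} \lesssim n^{-\alpha}\sqrt{\log n},
\]
and the diagonal terms are smaller still. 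Every term is therefore $\lesssim n^{-\alpha/2}\sqrt{\log n}\le \sqrt{n^{-1+\alpha}\log n}$. The second claim of the proposition then follows from the triangle inequality and Proposition~\ref{indep-general}.

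\textbf{Main obstacle.} The delicate step is the $L_4$ bound on the batch sum $\sum_{k=t_i}^{i-1}\Delta_k$. The summation of the $\Delta_k$ must be reorganized as a martingale transform in the $v_j$, and the tail of $j<t_i$ must be controlled via the exponential decay in Lemma~\ref{lemma Yij}(\ref{lemma Yij-1}). A secondary point is that $\|v_j\|_4\lesssim j^{-\alpha/2}$ requires the $L_4$ (indeed $L_8$ for the later product terms) SGD moment bound for $\delta_j$. I will take that bound from the standard recursion argument under strong convexity, which is where $q\ge8$ enters.
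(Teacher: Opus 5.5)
Your proposal is correct and follows essentially the same route as the paper: you split $U_i=\widetilde U_i+\Delta_i$, bound each product term by H\"older with the $L_4$ estimates $\|U_i\|_4,\|\widetilde U_i\|_4\lesssim i^{-\alpha/2}$, $\|\Delta_i\|_4\lesssim i^{-\alpha}$, $\|\sum_{k=t_i}^{i}\Delta_k\|_4\lesssim\sqrt{\log i}$ and $\|\sum_{k=t_i}^{i}\widetilde U_k\|_4\lesssim\sqrt{\ell_i}$, and apply the triangle inequality over $i$ to get $n^{-\alpha/2}\sqrt{\log n}\lesssim\sqrt{n^{-1+\alpha}\log n}$, using $\alpha>1/2$. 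There is one small slip that does not affect the conclusion: for $j<t_i$ the decay factor should be of the form $\exp\{-c\,t_i^{-\alpha}(t_i-j)\}$, since Lemma~\ref{lemma Yij}(\ref{lemma Yij-1}) puts the larger index in the exponent, whereas $\exp\{-\eta j^{-\alpha}(t_i-j)\}$ is a lower bound on $Y_j^{t_i}$, not an upper bound; with the correct factor the tail still contributes $O(1)$, matching the paper's term $|S_{t_i-1}^i|\,\|\Delta_{t_i-1}\|_4\lesssim t_i^{\alpha}\cdot t_i^{-\alpha}=O(1)$.
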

\begin{proof}
    By Lemma \ref{lemma C increm} (\ref{lemma C increm-1}), Lemma \ref{lemma C iter},
	\begin{align*}
		&  \,  \| \widehat{\Sigma}_{n, U}- \widehat{\Sigma}_{n, \wt U}\|_2 \\
		= {} & \norm{\frac1n \sum_{i=1}^n \left[ U_i^2 + 2U_i \bigg(\sum_{j=t_i}^{i-1} U_j \bigg) \right] - \frac1n \sum_{i=1}^n \left[ \widetilde U_i^2 + 2\widetilde U_i \bigg(\sum_{j=t_i}^{i-1}\widetilde U_j \bigg) \right] }_2 \\
		\le {} &  \norm{ \frac1n \sum_{i=1}^n \left[ U_i\Delta_i + 2U_i \bigg(\sum_{j=t_i}^{i-1} \Delta_j \bigg) \right] }_2
		+ \norm{ \frac1n \sum_{i=1}^n \left[ \Delta_i \widetilde U_i  + 2\Delta_i \bigg(\sum_{j=t_i}^{i-1} \widetilde U_j \bigg) \right] }_2 \\
		\lesssim {} &  \frac1n \sum_{i=1}^n \left[ \norm{U_i\Delta_i}_2 + \norm{U_i \bigg(\sum_{j=t_i}^{i} \Delta_j \bigg)}_2 
		+ \| \Delta_i \widetilde U_i \|_2 + \norm{\Delta_i \bigg(\sum_{j=t_i}^{i} \widetilde U_j \bigg)}_2 \right]\\
		\le {} &  \frac1n \sum_{i=1}^n \left[ \norm{U_i}_4\norm{\Delta_i}_4 + \norm{U_i}_4 \norm{\sum_{j=t_i}^{i} \Delta_j }_4 
		+ \| \Delta_i\|_4 \|\widetilde U_i \|_4 + \norm{\Delta_i}_4 \norm{\sum_{j=t_i}^{i} \widetilde U_j }_4 \right] \\
		\lesssim {} & \frac1n \sum_{i=1}^n \left[ i^{-\alpha/2} \cdot i^{-\alpha} + i^{-\alpha/2} \sqrt{\log i}
		+ i^{-\alpha} \cdot i^{-\alpha/2} + i^{-\alpha} \sqrt{i^\alpha \log i} \right] \\
		\lesssim {} &  \frac1n \sum_{i=1}^n i^{-\alpha/2} \sqrt{\log i}
		\lesssim  \sqrt{n^{-\alpha} \log n}.
	\end{align*}
	Since $\alpha > 1/2$,  we finally have
		\[ \| \widehat{\Sigma}_{n, U}- \widehat{\Sigma}_{n, \wt U}  \|_2
		\lesssim  \sqrt{n^{-\alpha} \log n} \lesssim \sqrt{n^{-1+\alpha} \log n}.   \]

\end{proof}

\subsection{General Case: the First and Second Term}\label{general}
Recall the definition of covariance estimators with and without the sample average,
\begin{align*}
	\widehat{\Sigma}_{n} & = 
		\frac1n  \sum_{i=1}^n \Bigg[  2(\delta_i - \bar \delta_n) \bigg(\sum_{k=t_i}^{i} \delta_k - \ell_i\bar \delta_n \bigg) - (\delta_i - \bar \delta_n)^2 \Bigg] \\
	& =  \frac1n\sum_{i=1}^n \left[ 2\delta_i \biggl(\sum_{k=t_i}^{i} \delta_k\biggr) - \delta_i^2 \right]
		+ \frac1n\sum_{i=1}^n (2\ell_i - 1) \bar \delta_n^2
		-  \frac2n \sum_{i=1}^n  \biggl(\sum_{k=t_i}^{i-1} \delta_k + \ell_i \delta_i \biggr)\bar \delta_n.  \\
		\widehat{\Sigma}_{n, \delta} & = \frac1n\sum_{i=1}^n \left[  2 \delta_i \biggl(\sum_{k=t_i}^{i} \delta_k\biggr)
		- \delta_i^2  \right].
\end{align*}

We now complete the proof by bounding the first and second terms in \eqref{eq:decomp}, i.e., 
\[\|\widehat{\Sigma}_{n} - \widehat{\Sigma}_{n, \delta} \|_2,\  \text{and} \ \|  \widehat{\Sigma}_{n, \delta}-\widehat{\Sigma}_{n, U} \|_2.\]

\begin{proof}[Complete proof of the Theorem \ref{thm:general}]
For the first term, by Lemma \ref{lemma ASGD} (\ref{lemma ASGD-2}, \ref{lemma ASGD-3}), we have
\begin{align*}
	\| \widehat{\Sigma}_{n} - \widehat{\Sigma}_{n,\delta} \|_2
	& = \norm{\frac1n\sum_{i=1}^n (2\ell_i - 1) \bar \delta_n^2
	-  \frac2n \sum_{i=1}^n  \biggl(\sum_{k=t_i}^{i-1} \delta_k + \ell_i \delta_i \biggr)\bar \delta_n}_2 \\
	& \lesssim \frac1n\sum_{i=1}^n  \ell_i  \|\bar \delta_n^2\|_2
	+ \frac1n \norm{ \sum_{i=1}^n  \biggl(\sum_{k=t_i}^{i-1} \delta_k + \ell_i \delta_i \biggr)\bar \delta_n }_2 \\
	& \le \frac1n\sum_{i=1}^n  \ell_i  \|\bar \delta_n\|_4^2
	+ \frac1n \norm{ \sum_{i=1}^n  \biggl(\sum_{k=t_i}^{i-1} \delta_k + \ell_i \delta_i \biggr)}_4 \| \bar \delta_n \|_4  \\
	& \lesssim \frac1n\sum_{i=1}^n  \ell_i  \|\bar \delta_n\|_4^2
	+ \frac1n \sum_{i=1}^n \norm{ \sum_{k=t_i}^{i} \delta_k }_4 \| \bar \delta_n \|_4 + \frac1n \norm{\sum_{i=1}^n \ell_i \delta_i}_4 \| \bar \delta_n \|_4 \\
	& \le \frac1n\sum_{i=1}^n  \ell_i  \|\bar \delta_n\|_4^2
	+ \frac1n \sum_{i=1}^n \norm{ \sum_{k=t_i}^{i} \delta_k }_4 \| \bar \delta_n \|_4 + \ell_n \| \bar \delta_n \|_4^2 \\
	& \lesssim \frac1n\sum_{i=1}^n  i^\alpha \log i \cdot n^{-1}
	+ \frac1n \sum_{i=1}^n \sqrt{i^\alpha \log i} \cdot n^{-1/2}
	+ n^\alpha \log n \cdot n^{-1} \\
	& \asymp  n^{-1+\alpha}\log n + \sqrt{n^{-1+\alpha} \log n} + n^{-1+\alpha}\log n \\
	& \asymp \sqrt{n^{-1+\alpha} \log n}.
\end{align*}
For the second term, by Lemma \ref{Lemma-C-0}, \ref{lemma C iter} and  \ref{lemma diff seq}(\ref{lemma diff seq-3}), 

\begin{align*}
		&  \,  \| \widehat{\Sigma}_{n, \delta} - \widehat{\Sigma}_{n, U} \|_2 \\
		= {} & \norm{\frac1n \sum_{i=1}^n \left[ \delta_i^2 + 2\delta_i \bigg(\sum_{j=t_i}^{i-1} \delta_j \bigg) \right] - \frac1n \sum_{i=1}^n \left[ U_i^2 + 2U_i \bigg(\sum_{j=t_i}^{i-1} U_j \bigg) \right] }_2 \\
		\le {} &  \norm{ \frac1n \sum_{i=1}^n \left[ \delta_i s_i + 2\delta_i \bigg(\sum_{j=t_i}^{i-1} s_j \bigg) \right] }_2
		+ \norm{ \frac1n \sum_{i=1}^n \left[ s_i U_i  + 2 s_i \bigg(\sum_{j=t_i}^{i-1} U_j \bigg) \right] }_2 \\
		\lesssim {} &  \frac1n \sum_{i=1}^n \left[ \norm{\delta_i s_i}_2 + \norm{\delta_i \bigg(\sum_{j=t_i}^{i-1} s_j \bigg)}_2 
		+ \| s_i U_i \|_2 + \norm{s_i \bigg(\sum_{j=t_i}^{i-1} U_j \bigg)}_2 \right]\\
		\le {} &  \frac1n \sum_{i=1}^n \left[ \norm{\delta_i}_4\norm{s_i}_4 + \norm{\delta_i}_4 \norm{\sum_{j=t_i}^{i-1} s_j }_4 
		+ \| s_i\|_4 \| U_i \|_4 + \norm{s_i}_4 \norm{\sum_{j=t_i}^{i-1} U_j }_4 \right] \\
		\lesssim {} & \frac1n \sum_{i=1}^n \left[ \norm{\delta_i}_4 \sum_{j=t_i}^{i} \norm{ s_j }_4 
		+ \norm{s_i}_4 \sum_{j=t_i}^{i} \norm{U_j }_4 \right]
		\\
		\lesssim {} & \frac1n \sum_{i=1}^n \left[ i^{-\alpha/2} \ell_i i^{-\alpha}
		+ i^{-\alpha} \ell_i i^{-\alpha/2} \right] \\
		\lesssim {} &  \frac1n \sum_{i=1}^n i^{-\alpha/2} \log i
		\lesssim n^{-\alpha/2} \log n. 
	\end{align*}

Since $\alpha > 1/2$, 
	\[ n^{-\alpha/2} \log n \lesssim \sqrt{n^{-1+\alpha} \log n}.\]

Now combined with bound on the third and the fourth term, as shown in Propositions ~\ref{prop linear case} and \ref{indep-general}, we finally have 
\[
	\| \widehat{\Sigma}_{n} - \Sigma \|_2 
 \lesssim \sqrt{n^{-1+\alpha} \log n}.
\]
\end{proof}

\subsection{Technical Lemmas}

\begin{lemma}[Lipschitz continuity of $\nabla F$]\label{determinelip}
Under Assumptions \ref{as1}-\ref{as3}, we have
 $$ \|\nabla F(X_1)-\nabla F(X_2)\|\leq L |X_1-X_2|.$$
\end{lemma}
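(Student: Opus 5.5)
The plan is to prove the deterministic inequality $\|\nabla F(X_1)-\nabla F(X_2)\|\le L|X_1-X_2|$ by passing the expectation inside the gradient and then applying Jensen's inequality to the stochastic Lipschitz condition of Assumption~\ref{as3}.

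\textbf{Step 1: interchange of gradient and expectation.} We first want $\nabla F(x)=\E_\xi \nabla f(x,\xi)$ for every $x$. Fix $x$ and a direction $h$. By the mean value theorem along the segment,
\[
\frac{f(x+th,\xi)-f(x,\xi)}{t}=\Braket{\nabla f(x+\theta_t th,\xi),h}
\]
for some $\theta_t\in(0,1)$. Assumption~\ref{as2} gives continuity of $\nabla f(\cdot,\xi)$, so this quotient converges pointwise to $\Braket{\nabla f(x,\xi),h}$. The stochastic Lipschitz bound in Assumption~\ref{as3}, together with $\E\|\nabla f(x^*,\xi)\|^q<\infty$, makes the family $\{\nabla f(y,\xi): |y-x|\le 1\}$ dominated in $L^1$; the uniform integrability in Assumption~\ref{as2} serves the same purpose. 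By Vitali's convergence theorem (or dominated convergence), the directional derivative of $F$ then equals $\E\Braket{\nabla f(x,\xi),h}$. Since $F$ is continuously differentiable by Assumption~\ref{as1}, this identifies $\nabla F(x)=\E_\xi\nabla f(x,\xi)$. This step is the only delicate one, namely justifying the limit interchange; I would lean on Assumption~\ref{as2} directly, since it is stated for exactly this purpose.

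\textbf{Step 2: Jensen's inequality.} With Step~1 in hand,
\[
\|\nabla F(X_1)-\nabla F(X_2)\| = \bigl|\E_\xi[\nabla f(X_1,\xi)-\nabla f(X_2,\xi)]\bigr| \le \E_\xi|\nabla f(X_1,\xi)-\nabla f(X_2,\xi)|.
\]
We then bound the right-hand side by $\bigl(\E_\xi|\nabla f(X_1,\xi)-\nabla f(X_2,\xi)|^q\bigr)^{1/q}$, using Jensen's (or Lyapunov's) inequality with $q\ge 8\ge1$. Assumption~\ref{as3} bounds this last quantity by $L|X_1-X_2|$.

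\textbf{Random arguments.} If $X_1$ and $X_2$ are random, for instance SGD iterates, the statement is read pointwise for deterministic arguments, since $F$ is a deterministic function. It therefore holds for every realization, and in particular almost surely. This is the form used when bounding the remainder $r_i=\nabla F(X_{i-1})-A\delta_{i-1}$ in the subsequent lemmas.
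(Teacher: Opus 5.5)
Your argument matches the paper's. The paper likewise writes $\nabla F(X_1)-\nabla F(X_2)=\E_\xi Z$ with $Z=\nabla f(X_1,\xi)-\nabla f(X_2,\xi)$, takes the interchange $\nabla F=\E_\xi\nabla f$ for granted (so your Step~1 is extra), and applies Jensen in squared form, $|\E Z|^2\le\E|Z|^2$, with the $L^2$ consequence of Assumption~\ref{as3}, where you use $|\E Z|\le\E|Z|\le(\E|Z|^q)^{1/q}$. If you keep Step~1, use $\frac{f(x+th,\xi)-f(x,\xi)}{t}=\int_0^1\Braket{\nabla f(x+sth,\xi),h}\,ds$ instead of the mean-value form: there the intermediate point $x+\theta_t th$ depends on $\xi$, so uniform integrability of $\{\nabla f(y,\xi)\}$ over deterministic $y$ does not cover it, whereas Tonelli and Assumption~\ref{as3} give directly $\E\bigl|\frac{f(x+th,\xi)-f(x,\xi)}{t}-\Braket{\nabla f(x,\xi),h}\bigr|\le\frac{L}{2}\,t\,|h|^2\to0$ for $t>0$.
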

\begin{proof}

By the stochastic Lipschitz continuity, 
\[\E  \|\nabla f(X_1, \xi) - \nabla f(X_2, \xi) \|^2_2 \leq L^2 |X_1-X_2|^2, \quad \text{for all }\ X_1,X_2\in \mathbb{R}^d.\]
By the convexity of \(| \cdot |^2\) and Jensen's inequality,
\begin{align*}
\|\nabla F(X_1)-\nabla F(X_2)\|^2 &= | \E_{\xi} ( \nabla f(X_1, \xi) - \nabla f(X_2, \xi)  ) |^2  \\
&\leq \E_{\xi} | \nabla f(X_1, \xi) - \nabla f(X_2, \xi) |^2 \\
&\leq L^2 | X_1-X_2|^2.
\end{align*}

\end{proof}
  
\begin{lemma}[moment bounds for the error sequence] \label{Lemma-C-0} Under Assumptions \ref{as1}-\ref{as3}, for $1\leq q \leq 8$, the error sequence $\delta_n = X_n - x^*$ satisfies $ \| \delta_n \|_q\leq n^{-\alpha/2}$.
\end{lemma}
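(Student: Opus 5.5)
We prove $\|\delta_n\|_q \lesssim n^{-\alpha/2}$ for $q=8$ first; the cases $1\le q<8$ then follow by Lyapunov's inequality, $\|\delta_n\|_q\le \|\delta_n\|_8$. The argument is a recursive moment bound for $|\delta_n|^q$ driven by strong convexity, with the martingale noise controlled through Assumption \ref{as3}.

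\textbf{Step 1: the one-step recursion.} Start from \eqref{eq3}, $\delta_i=\delta_{i-1}-\eta_i\nabla F(X_{i-1})+\eta_i\epsilon_i$. By Assumption \ref{as1} we have $\langle \nabla F(X_{i-1}),\delta_{i-1}\rangle\ge \mu|\delta_{i-1}|^2$, and by Lemma \ref{determinelip} we have $|\nabla F(X_{i-1})|\le L|\delta_{i-1}|$. Hence for $i$ large,
\[
|\delta_{i-1}-\eta_i\nabla F(X_{i-1})|^2\le (1-2\mu\eta_i+L^2\eta_i^2)|\delta_{i-1}|^2\le (1-\mu\eta_i)|\delta_{i-1}|^2 .
\]
The finitely many early steps only contribute a constant factor.

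\textbf{Step 2: noise moments.} Assumption \ref{as3} gives
\[
\E(|\epsilon_i|^q\mid\mathcal F_{i-1})^{1/q}\lesssim \|\nabla f(x^*,\xi)\|_q+L|\delta_{i-1}|,
\]
so $\E(|\epsilon_i|^q\mid\mathcal F_{i-1})\lesssim 1+|\delta_{i-1}|^q$.

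\textbf{Step 3: expanding the $q$-th power.} Write $a=\delta_{i-1}-\eta_i\nabla F(X_{i-1})$ and $b=\eta_i\epsilon_i$, and expand $|a+b|^q=(|a|^2+2\langle a,b\rangle+|b|^2)^{q/2}$ conditionally on $\mathcal F_{i-1}$. The term linear in $\langle a,b\rangle$ vanishes in expectation because $\epsilon_i$ is a martingale difference. Every remaining term has the form $|a|^{q-k}\eta_i^k\,\E(|\epsilon_i|^k\mid\mathcal F_{i-1})$ with $k\ge2$. Using Young's inequality and Step 2, these terms are bounded by
\[
\eta_i^2|\delta_{i-1}|^q+\eta_i^2|\delta_{i-1}|^{q-2}+\eta_i^q .
\]
Combining with Step 1 gives
\[
\E|\delta_i|^q\le (1-c\eta_i)\E|\delta_{i-1}|^q+C\eta_i^2\E|\delta_{i-1}|^{q-2}+C\eta_i^q .
\]

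\textbf{Step 4: induction on even $q$.} Set $q=2,4,6,8$ in turn. For $q=2$ the recursion reads $e_i\le(1-c\eta_i)e_{i-1}+C\eta_i^2$. Unrolling it with $Y_{(c)}$ and applying Lemma \ref{lemma Yij} (\ref{lemma Yij-2}) with $\gamma=2$ gives $e_i\lesssim i^{-\alpha}$; the initial condition contributes $Y_{(c)}{}_0^i\,|\delta_0|^2$, which decays faster than any power of $i$. For general even $q$, insert the inductive hypothesis $\E|\delta_{i-1}|^{q-2}\lesssim i^{-(q-2)\alpha/2}$. The forcing term is then $\eta_i^2 i^{-(q-2)\alpha/2}+\eta_i^q\lesssim i^{-\alpha(q/2+1)}$. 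Unrolling again with Lemma \ref{lemma Yij} (\ref{lemma Yij-2}) and $\gamma=q/2+1$ yields $\E|\delta_i|^q\lesssim i^{-q\alpha/2}$, i.e. $\|\delta_i\|_q\lesssim i^{-\alpha/2}$.

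\textbf{Main obstacle.} The delicate part is Step 3: the binomial expansion must be organized so that no term carries only a single factor of $\eta_i$ multiplying $|\delta_{i-1}|^q$. Any such term would compete with the contraction $-c\eta_i$ and break the recursion. My plan is to use the elementary inequality
\[
|a+b|^q\le |a|^q+q|a|^{q-2}\langle a,b\rangle+C_q\bigl(|a|^{q-2}|b|^2+|b|^q\bigr),
\]
which places at least $\eta_i^2$ on every non-vanishing noise term. The $\eta_i^2|\delta_{i-1}|^q$ contributions are then absorbed into the contraction for large $i$.
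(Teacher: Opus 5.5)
Your proof is correct, but its technical core differs from the paper's. The paper never expands the $q$-th power. It applies Rio's inequality, $\|X+Y\|_q^2\le\|X\|_q^2+(q-1)\|Y\|_q^2$ whenever $\E[Y\mid X]=0$, with $X=\delta_{i-1}-\eta_i\nabla F(X_{i-1})$ and $Y=\eta_i\epsilon_i$. Your Step 1 contraction (applied pointwise, then in $L^q$) and your Step 2 noise bound then give directly, for each fixed $q\ge2$, the closed recursion $\|\delta_i\|_q^2\le(1-c\eta_i)\|\delta_{i-1}\|_q^2+C\eta_i^2$; the $\eta_i^2\|\delta_{i-1}\|_q^2$ part of the noise bound is absorbed into the contraction. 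This is unrolled with Lemma \ref{lemma Yij} exactly as in your $q=2$ case. Working with the squared $L^q$ norm rather than $\E|\delta_i|^q$ is what removes the mixed term $\eta_i^2|\delta_{i-1}|^{q-2}$. So the paper needs no induction over $q$, and every real $q\in[2,8]$ is handled at once.

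Your route replaces this external martingale moment inequality with an elementary pointwise second-order Taylor bound for $|a+b|^q$. That makes it more self-contained, at the cost of producing the mixed term and hence needing the induction $q=2,4,6,8$ plus Lyapunov for the remaining $q$. The bookkeeping checks out: the forcing exponent $2\alpha+(q-2)\alpha/2=\alpha(q/2+1)$ is right, and Lemma \ref{lemma Yij} (\ref{lemma Yij-2}) with $\gamma=q/2+1$ gives $\E|\delta_i|^q\lesssim i^{-q\alpha/2}$. You could even drop the induction. By Young's inequality with exponents $q/(q-2)$ and $q/2$, $\eta_i^2|\delta_{i-1}|^{q-2}\le\varepsilon\eta_i|\delta_{i-1}|^q+C_\varepsilon\eta_i^{q/2+1}$. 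The first term is absorbed into the contraction, and the second produces the same forcing rate $i^{-\alpha(q/2+1)}$.
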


\begin{proof}
The error sequence can be written as,
		\begin{align*}
  \delta_i  &= \delta_{i-1} - \eta_i \nabla f(X_{i-1}, \xi_i) \cr
	  &= \delta_{i - 1} - \eta_{i} \nabla F(X_{i - 1}) + \eta_{i} \epsilon_i.
		\end{align*}
By Rio's inequality \citep{rio_moment_2009}, since $\E [\epsilon_i | \delta_{i - 1} - \eta_{i} \nabla F(X_{i - 1}) ] =0$, we have
$$ \|\delta_{i}\|_{q}^{2} \leq \|\delta_{i - 1} - \eta_{i} \nabla F (X_{i - 1})\|_{q}^{2} + (q - 1) \eta_{i}^{2} \|\epsilon_i\|_{q}^{2}.$$
		Further by strong convexity and stochastic Lipschitz continuity, we have 
		\begin{align*}
			\|\delta_{i}\|_{q}^{2} &\leq \|\delta_{i - 1} - \eta_{i} \nabla F (X_{i - 1})\|_{q}^{2} + (q - 1) \eta_{i}^{2} \|\epsilon_i\|_{q}^{2}\cr
			&\leq (1 - \eta_{i} c_{1}) \|\delta_{i - 1}\|_{q}^{2} + 2 (q - 1) \eta_{i}^{2} (\|\epsilon_i^*\|_{q}^{2} + L_{\epsilon}^{2} \|\delta_{i - 1}\|_{q}^{2})\cr
			&\leq (1 - \eta_{i} c_{2}) \|\delta_{i - 1}\|_{q}^{2} + 2 (q - 1) \eta_{i}^{2} \|\epsilon_i^*\|_{q}^{2}, 
		\end{align*}
		where $c_1$ and $c_2$ are positive constants depending only on $L_{\epsilon}, q$ and $\eta$. Finally, by Lemma \ref{lemma Yij},
		\begin{align*}
			\|\delta_{i}\|_{q}^{2} &\leq \prod_{k = 1}^{i} (1 - \eta_{k} c_{2}) \delta_{0}^{2} + 2 (q - 1) \|\epsilon_i^*\|_{q}^{2} \sum_{j = 1}^{i} \eta_{j}^{2} \prod_{k = j + 1}^{i} (1 - \eta_{k} c_{2}) \cr
         &\asymp {Y_{(c_2)}}_0^i +  \sum_{j = 1}^{i} {Y_{(c_2)}}_j^i j^{-2\alpha} \asymp i^{-\alpha}.
		\end{align*}
\end{proof}

\begin{lemma}[moment bounds for increments] \label{lemma C increm}
Recall that
\begin{itemize}

  		\item $\epsilon_k^* = - \nabla f(x^*, \xi_k)$ is \textit{i.i.d}.
    	\item $v_k = \epsilon_k - \epsilon_k^*$ is a martingale differences sequence.

\end{itemize}
	\vphantom{1}
	\begin{enumerate}
		\item For all $i \in \N^+$ and $m = 1, 2, 4$, we have \label{lemma C increm-1}
		\begin{equation}
			\norm{\epsilon_i}_m = \mathcal{O}(1),
			\quad
			\norm{\epsilon_i^*}_m = \mathcal{O}(1),
			\quad
			\norm{v_i}_m \lesssim i^{-\alpha/2}. 
		\end{equation}
		\item For all $i \not= j$, we have $\E[\epsilon_i\epsilon_j^*] = \E[\epsilon_i v_j] = \E[\epsilon_i^* v_j] = 0$.
		\label{lemma C increm-2}
	\end{enumerate}
\end{lemma}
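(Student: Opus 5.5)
Part (\ref{lemma C increm-1}) follows from the moment conditions of Assumption \ref{as3} combined with the moment bound of Lemma \ref{Lemma-C-0}; part (\ref{lemma C increm-2}) follows from martingale-difference (tower-property) arguments with respect to the filtration $\mathcal{F}_k = \sigma(\xi_1,\dots,\xi_k)$.

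\textbf{Part (\ref{lemma C increm-1}), the $v_i$ bound.} First I would bound $v_i$, since the other two bounds follow from it. By definition
\[
v_i = \epsilon_i - \epsilon_i^* = \big[\nabla F(X_{i-1}) - \nabla F(x^*)\big] - \big[\nabla f(X_{i-1},\xi_i) - \nabla f(x^*,\xi_i)\big],
\]
using $\nabla F(x^*)=0$. Since $X_{i-1}$ is $\mathcal{F}_{i-1}$-measurable and $\xi_i$ is independent of $\mathcal{F}_{i-1}$, I would condition on $\mathcal{F}_{i-1}$. Applying stochastic Lipschitz continuity (Assumption \ref{as3}, with $q\ge 8 \ge m$, then Jensen to pass from $q$ to $m$) gives
\[
\E\big[\,|\nabla f(X_{i-1},\xi_i)-\nabla f(x^*,\xi_i)|^m \,\big|\, \mathcal{F}_{i-1}\big] \le L^m |\delta_{i-1}|^m .
\]
Lemma \ref{determinelip} gives $|\nabla F(X_{i-1})-\nabla F(x^*)|\le L|\delta_{i-1}|$. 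Taking expectations and using Minkowski,
\[
\|v_i\|_m \le 2L\|\delta_{i-1}\|_m \lesssim (i-1)^{-\alpha/2} \asymp i^{-\alpha/2}
\]
by Lemma \ref{Lemma-C-0}, which applies for $m\le 8$. The case $i=1$ is trivial because $\delta_0$ is deterministic.

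\textbf{Part (\ref{lemma C increm-1}), the $\epsilon_i^*$ and $\epsilon_i$ bounds.} For $\epsilon_i^*$, the sequence is i.i.d. and $\E\|\nabla f(x^*,\xi)\|^q<\infty$ with $q\ge 8$, so $\|\epsilon_i^*\|_m=\mathcal{O}(1)$ for $m=1,2,4$. Then $\|\epsilon_i\|_m \le \|\epsilon_i^*\|_m + \|v_i\|_m = \mathcal{O}(1)$.

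\textbf{Part (\ref{lemma C increm-2}).} Note that $\E[\epsilon_k\mid\mathcal{F}_{k-1}]=0$ because $\E_\xi \nabla f(x,\xi)=\nabla F(x)$. Also $\E[\epsilon_k^*\mid\mathcal{F}_{k-1}]=\E\epsilon_k^*=-\nabla F(x^*)=0$, and therefore $\E[v_k\mid\mathcal{F}_{k-1}]=0$ as well. So all three sequences are $\mathcal{F}_k$-martingale differences, and each term lies in $L^2$ by part (\ref{lemma C increm-1}).

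Take $i<j$ without loss of generality. Each product $a_i b_j$, with $a\in\{\epsilon,\epsilon^*\}$ and $b\in\{\epsilon^*,v\}$, has $a_i$ $\mathcal{F}_{j-1}$-measurable. Hence
\[
\E[a_i b_j]=\E\big[a_i\,\E[b_j\mid\mathcal{F}_{j-1}]\big]=0 .
\]
The case $i>j$ is symmetric, conditioning on $\mathcal{F}_{i-1}$ instead. In the vector setting the same argument applies entrywise to $\E[a_i b_j\T]$.

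\textbf{Main obstacle.} The only delicate step is interchanging the conditional $q$-th moment Lipschitz bound with the random argument $X_{i-1}$. This is justified by independence of $\xi_i$ from $\mathcal{F}_{i-1}$ together with Fubini, so that the assumption, which holds for each fixed pair $(x_1,x_2)$, can be applied pointwise at $(X_{i-1},x^*)$.
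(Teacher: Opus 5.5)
Your proof is correct and follows the paper's argument. The $v_i$ bound comes from splitting $v_i$ into $\nabla F(X_{i-1})-\nabla F(x^*)$ and $\nabla f(X_{i-1},\xi_i)-\nabla f(x^*,\xi_i)$ and applying Lemma \ref{Lemma-C-0}, and part (ii) comes from the tower property on the martingale-difference structure. The only difference is ordering: you bound $\epsilon_i^*$ directly from Assumption \ref{as3} and then get $\epsilon_i$ by the triangle inequality, whereas the paper bounds $\epsilon_i$ first (already using the $\epsilon_i^*$ moment) and then recovers $\epsilon_i^*$ from it, so your version avoids that mild circularity.
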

\begin{proof}
	\vphantom{1}
	\begin{enumerate}
		\item Due to Lyapunov inequality, it suffices to consider $m = 4$. 
	
		The bound for $\epsilon_i$ is obtained by stochastic Lipschitz continuity: 
        $$(\E_{n-1}|\epsilon_n|^4)^{1/4} \le (\E_{n-1}|v_n|^4)^{1/4}+(\E_{n-1}|\epsilon^*_n|^4)^{1/4}\lesssim \delta_{n-1}+1,$$
        and $\| \delta_{n-1} \|_4 \lesssim (n-1)^{-\alpha/2}.$
		The bound for $v_i$ is obtained by Lipschitz continuity,
		\begin{align*}
			\norm{v_i}_4
			& = \norm{\epsilon_i - \epsilon_i^*}_4
			= \norm{\nabla F(X_{i-1}) - \nabla F(x^*) - ( \nabla f(X_{i-1}, \xi_i) - \nabla f(x^*, \xi_i) ) }_4 \\
			&\le \norm{\nabla F(X_{i-1}) - \nabla F(x^*)}_4 + \norm{\nabla f(X_{i-1}, \xi_i) - \nabla f(x^*, \xi_i)}_4 \\
			& \lesssim \norm{X_{i-1} - x^*}_4 = \norm{\delta_{i-1}}_4 \lesssim  i^{-\alpha/2}.
		\end{align*}
		Then a bound for $\epsilon^*_i$ is derived by $\norm{\epsilon_i^*}_4  =  \norm{\epsilon_i - v_i}_4 \le \norm{\epsilon_i}_4 + \norm{v_i}_4 \lesssim \mathcal{O}(1)$.

		\item Recall that $\xi_i$'s are i.i.d., $\epsilon_i$'s are martingale difference, and $\epsilon_i^*$'s are i.i.d., then we have
		\[ \E[\epsilon_i\epsilon_j^*] = \begin{dcases*}
			\E\left[ \E_{i-1}[\epsilon_i\epsilon_j^*] \right]
			= \E\left[ \E_{i-1}[\epsilon_i] \epsilon_j^* \right] = 0 & if $i > j$, \\
			\E\left[ \E_{i}[\epsilon_i\epsilon_j^*] \right]
			= \E\left[ \epsilon_i \E_{i}[\epsilon_j^*] \right] = 0 & if $i < j$.
		\end{dcases*} \]
		Hence $\E[\epsilon_i v_j] = \E[\epsilon_i (\epsilon_j - \epsilon_j^*)] = 0$, $\E[\epsilon_i^* v_j] = \E[\epsilon_i^* (\epsilon_j - \epsilon_j^*)] = 0$
	\end{enumerate}

\end{proof}

\begin{itemize}
	\item Recall that
	\begin{align*}
		U_n & = (1 - \eta_n A)U_{n-1} + \eta_n \epsilon_n \\
		\widetilde U_n & = (1 - \eta_n A) \widetilde U_{n-1} + \eta_n \epsilon_n^* \\
		\Delta_n & = (1 - \eta_n A)\Delta_{n-1} + \eta_n v_n \\
	\end{align*}
\end{itemize}

\begin{lemma}[moment bounds for iterates] \label{lemma C iter}
	\vphantom{1}
	\begin{enumerate}
		\item For all $n \in \N^+$ and $m = 1, 2, 4$, we have \label{lemma C iter-1}
		\begin{equation}
			\norm{U_n}_m \lesssim n^{-\alpha/2},
			\quad
			\|\widetilde U_n\|_m \lesssim n^{-\alpha/2},
			\quad
			\norm{\Delta_n}_m \lesssim n^{-\alpha}. 
		\end{equation}

		\item For all $i \in \N^+$ and $m = 1, 2, 4$, we have \label{lemma C iter-2}
		\begin{align}
			\norm{\sum_{j=t_i}^i U_j}_m & \lesssim t_i^{\alpha/2} + \sqrt{\ell_i} \asymp \sqrt{i^{\alpha}\log i} , \\
			\norm{\sum_{j=t_i}^i \widetilde U_j}_m & \lesssim \sqrt{\ell_i} \asymp \sqrt{i^{\alpha}\log i}, \\
			\norm{\sum_{j=t_i}^i \Delta_j}_m & \lesssim \mathcal{O}(1) + \sqrt{i^{-\alpha}\ell_i}  =  \sqrt{\log i}. 
		\end{align}
	\end{enumerate}
\end{lemma}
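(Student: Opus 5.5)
We prove the three families of bounds in Lemma \ref{lemma C iter} by unrolling each linear recursion into a weighted sum of martingale differences, plus a deterministic initial term, and then applying Burkholder's inequality (Lemma \ref{lemma Burkholder}) together with the kernel estimates of Lemma \ref{lemma Yij}. Throughout, $A$ is positive definite with smallest eigenvalue $\lambda>0$. Hence $\|Y_{(A)}{}_j^i\|_2\le Y_{(\lambda)}{}_j^i$ for $i$ large, with finitely many early indices absorbed into constants. By Lyapunov's inequality it suffices to treat $m=4$.

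\textbf{Part 1.} Write $U_n = Y_{(A)}{}_0^n\delta_0 + \sum_{k=1}^n Y_{(A)}{}_k^n\eta_k\epsilon_k$, and similarly for $\widetilde U_n$ (with $\epsilon_k^*$) and for $\Delta_n$ (with $v_k$ and no initial term). The initial term decays like $\exp(-c n^{1-\alpha})$, which is negligible. For the stochastic part, Burkholder gives
\[
\Big\|\sum_{k} Y_{(A)}{}_k^n\eta_k\epsilon_k\Big\|_4^2\lesssim \sum_{k=1}^n |Y_{(\lambda)}{}_k^n|^2 k^{-2\alpha}\|\epsilon_k\|_4^2 .
\]
Using $\|\epsilon_k\|_4=\mathcal O(1)$ from Lemma \ref{lemma C increm} and Lemma \ref{lemma Yij}(\ref{lemma Yij-2}) with $\beta=2$, $\gamma=2$, this is $\lesssim n^{-\alpha}$. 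For $\Delta_n$ we instead use $\|v_k\|_4\lesssim k^{-\alpha/2}$, so $\gamma=3$, which gives $\|\Delta_n\|_4^2\lesssim n^{-2\alpha}$.

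\textbf{Part 2 (batch sums).} Exchange the order of summation:
\[
\sum_{j=t_i}^i U_j = \Big(\sum_{j=t_i}^i Y_{(A)}{}_{t_i-1}^j\Big)U_{t_i-1} + \sum_{k=t_i}^{i}\Big(\sum_{j=k}^i Y_{(A)}{}_k^j\Big)\eta_k\epsilon_k .
\]
By Lemma \ref{lemma Yij}(\ref{lemma Yij-3}), each inner sum is $\lesssim k^\alpha$. So the first piece has $L_4$ norm $\lesssim t_i^{\alpha}\|U_{t_i-1}\|_4\lesssim t_i^{\alpha/2}$. For the second piece, Burkholder applied to the martingale differences $(1+S_k)\eta_k\epsilon_k$, whose coefficients are $\lesssim k^\alpha\cdot k^{-\alpha}=\mathcal O(1)$, gives an $L_4$ norm $\lesssim \sqrt{\ell_i}$. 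Since $t_i\asymp i$ and $\ell_i\asymp i^\alpha\log i$, the total is $\lesssim\sqrt{i^\alpha\log i}$.

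For $\widetilde U$ the stated bound is $\sqrt{\ell_i}$ alone. The boundary term contributes $t_i^{\alpha/2}\lesssim\sqrt{\ell_i}$ anyway, so this form is consistent.

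For $\Delta$ the boundary term is $\lesssim t_i^\alpha\|\Delta_{t_i-1}\|_4\lesssim \mathcal O(1)$. The martingale part has squared norm $\lesssim\sum_{k=t_i}^i \|v_k\|_4^2\lesssim \sum_{k=t_i}^i k^{-\alpha}\lesssim i^{-\alpha}\ell_i\asymp\log i$ by Lemma \ref{lemma batch sum}. Together these give $\mathcal O(1)+\sqrt{\log i}$.

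\textbf{Main obstacle.} The delicate point is that $\epsilon_k$ and $v_k$ are martingale differences with respect to $\mathcal F_k$, while the coefficients in the exchanged sums are deterministic. Burkholder therefore applies directly, but it needs $\|\epsilon_k\|_4$ and $\|v_k\|_4$ to be controlled uniformly. This in turn relies on Lemma \ref{Lemma-C-0} and the stochastic Lipschitz condition through Lemma \ref{lemma C increm}.

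A secondary issue is the matrix case. Here we must check that the products $(\mathbf I-\eta_kA)$ are contractions with rate $\lambda$ once $\eta_k\|A\|<1$. The finitely many early indices where this fails only change constants.
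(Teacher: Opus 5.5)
Your proposal is correct and follows essentially the same route as the paper. Both unroll each recursion into a negligible deterministic initial term plus a deterministically weighted martingale sum, and for Part 1 apply Burkholder together with Lemma~\ref{lemma Yij}(\ref{lemma Yij-2}) (with $\gamma=2$, and $\gamma=3$ for $\Delta_n$). For Part 2 both use the exchanged-sum representation $S_{t_i-1}^i(\cdot)_{t_i-1}+\sum_{p=t_i}^i(1+S_p^i)\eta_p(\cdot)_p$ with Lemma~\ref{lemma Yij}(\ref{lemma Yij-3}). Your remark on dominating the matrix products by $Y_{(\lambda)}$ via the smallest eigenvalue of $A$ fills in a detail the paper leaves implicit, since it writes the argument as if $A$ were a scalar.
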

\begin{proof}
	Due to Lyapunov inequality, it suffices to consider $m = 4$.
	\begin{enumerate}
		\item For simplicity, write $Y_i^j := Y_{(A)}{}_i^j$. Notice that $U_n$, $\widetilde U_n$, $\Delta_n$ can be written as
		\[ U_n = Y_0^n \delta_0 + \sum_{i=1}^n Y_i^n \eta_i \epsilon_i,
		\quad
		\widetilde U_n = Y_0^n \delta_0 + \sum_{i=1}^n Y_i^n \eta_i \epsilon_i^*,
		\quad
		\Delta_n = \sum_{i=1}^n Y_i^n \eta_i v_i.  \]
		Since $\epsilon_i$'s are martingale difference, by Lemma \ref{lemma Yij} (\ref{lemma Yij-1}, \ref{lemma Yij-2}), Lemma \ref{lemma Burkholder} (Burkholder inequality), Lemma \ref{lemma C increm} (\ref{lemma C increm-1}), we have
		\begin{align*}
			\norm{U_n}_4 & \le Y_0^n \delta_0 + \norm{\sum_{i=1}^n Y_i^n \eta_i \epsilon_i}_4  \\
			& \lesssim Y_0^n \delta_0 +  \sqrt{\sum_{i=1}^n \norm{Y_i^n \eta_i \epsilon_i}_4^2 }  \\
			& \lesssim  Y_0^n + \sqrt{\sum_{i=1}^n \abs{Y_i^n}^2 \abs{i^{-\alpha}}^2  }  \\
			& \lesssim  \exp\left(-\eta A i^{1-\alpha} \right) + \sqrt{n^{-\alpha}} \\
			& \asymp n^{-\alpha/2}.
		\end{align*}
		$\widetilde U_n$ has the same bound as $U_n$ since both $\norm{\epsilon_i}_4, \norm{\epsilon_i^*}_4 = \mathcal{O}(1)$. For $\Delta_n$, similarly,
		\begin{align*}
			\norm{\Delta_n}_4 & = \norm{\sum_{i=1}^n Y_i^n \eta_i v_i}_4 \\
			& \lesssim \sqrt{\sum_{i=1}^n \norm{Y_i^n \eta_i v_i}_4^2 }  \\
			& \lesssim  \sqrt{\sum_{i=1}^n \abs{Y_i^n}^2 \abs{i^{-\alpha}}^2 \cdot i^{-\alpha}  }  \\
			& \lesssim  \sqrt{n^{-2\alpha}}  \asymp  n^{-\alpha}.    
		\end{align*}

		\item For simplicity, write $S_i^j := S_{(A)}{}_i^j$. Notice that the following sums can be written as
		\begin{align*}
			\sum_{j = t_i}^i U_j & = \sum_{j = t_i}^i \biggl( Y_{t_i -1}^j U_{t_i - 1} + \sum_{p = t_i}^j Y_p^j \eta_p \epsilon_p \biggr) = S_{t_i -1}^i U_{t_i - 1} + \sum_{p = t_i}^i (1 + S_p^i) \eta_p \epsilon_p, \\
			\sum_{j = t_i}^i \widetilde U_j & = \sum_{j = t_i}^i \biggl( Y_{t_i -1}^j \widetilde U_{t_i - 1} + \sum_{p = t_i}^j Y_p^j \eta_p \epsilon_p^* \biggr) = S_{t_i -1}^i \widetilde U_{t_i - 1} + \sum_{p = t_i}^i (1 + S_p^i) \eta_p \epsilon_p^*, \\
			\sum_{j = t_i}^i \Delta_j & = \sum_{j = t_i}^i \biggl( Y_{t_i -1}^j \Delta_{t_i - 1} + \sum_{p = t_i}^j Y_p^j \eta_p v_p \biggr) = S_{t_i -1}^i \Delta_{t_i - 1} + \sum_{p = t_i}^i (1 + S_p^i) \eta_p v_p.
		\end{align*}
		Again, by Lemma \ref{lemma Yij} (\ref{lemma Yij-3}), Lemma \ref{lemma Burkholder} (Burkholder inequality), Lemma \ref{lemma C increm} (\ref{lemma C increm-1}), we have
		\begin{align*}
			\norm{\sum_{j=t_i}^i U_j}_4 & \le \abs{S_{t_i -1}^i} \norm{U_{t_i - 1}}_4 
			+ \norm{\sum_{p = t_i}^i (1 + S_p^i) \eta_p \epsilon_p}_4 \\
			& \lesssim \abs{S_{t_i -1}^i} \norm{U_{t_i - 1}}_4 
			+ \sqrt{\sum_{p = t_i}^i \norm{(1 + S_p^i) \eta_p \epsilon_p}_4^2  } \\
			& = \abs{S_{t_i -1}^i} \norm{U_{t_i - 1}}_4 
			+ \sqrt{\sum_{p = t_i}^i (1 + S_p^i)^2 \eta_p^2 \norm{\epsilon_p}_4^2  }  \\
			& \lesssim t_i^\alpha  (t_i - 1)^{-\alpha/2} + \sqrt{\sum_{p = t_i}^i (1 + (p+1)^\alpha)^2 p^{-2\alpha} \cdot \mathcal{O}(1)  } \\
			& \asymp t_i^{\alpha/2} + \sqrt{\ell_i}.
		\end{align*}
		Notice that $t_i^{\alpha/2} \le i^{\alpha/2}$ and $\ell_i \asymp i^\alpha \log i$, so the above is bounded by $\sqrt{\ell_i} \asymp \sqrt{i^{\alpha}\log i}$.

		$\|\sum_{j=t_i}^i \widetilde U_j\|_4$ has the same bound as $\|\sum_{j=t_i}^i U_j\|_4$ since $\| \widetilde U_{t_i-1} \|_4, \| U_{t_i-1} \|_4 \lesssim (t_i - 1)^{-\alpha/2}$ and $\|\epsilon_p^*\|_4, \|\epsilon_p\|_4 = \mathcal{O}(1)$. 
        
        
		For $\Delta_i$, similarly, also by Lemma \ref{lemma batch sum}, we have 
		\begin{align*}
			\norm{\sum_{j=t_i}^i \Delta_j}_4 & \le \abs{S_{t_i -1}^i} \norm{\Delta_{t_i - 1}}_4 
			+ \norm{\sum_{p = t_i}^i (1 + S_p^i) \eta_p v_p}_4 \\
			& \lesssim \abs{S_{t_i -1}^i} \norm{\Delta_{t_i - 1}}_4 
			+ \sqrt{\sum_{p = t_i}^i \norm{(1 + S_p^i) \eta_p v_p}_4^2  } \\
			& = \abs{S_{t_i -1}^i} \norm{\Delta_{t_i - 1}}_4 
			+ \sqrt{\sum_{p = t_i}^i (1 + S_p^i)^2 \eta_p^2 \norm{v_p}_4^2  }  \\
			& \lesssim t_i^\alpha  (t_i - 1)^{-\alpha} + \sqrt{\sum_{p = t_i}^i (1 + (p+1)^\alpha)^2 p^{-2\alpha} \cdot p^{-\alpha}  } \\
			& \asymp \mathcal{O}(1) + \sqrt{i^{-\alpha}\ell_i} \asymp  \sqrt{\log i}.
		\end{align*}

		\begin{notes}
			For (ii), applying triangular inequality directly will give a loose bound (but still tight enough to use). For example, by Lemma \ref{lemma batch sum},
			\begin{align*}
				\norm{\sum_{j=t_i}^i \Delta_j}_4 & \le \sum_{j=t_i}^i  \norm{\Delta_j}_4
				\lesssim \sum_{j=t_i}^i j^{-\alpha} \lesssim i^{-\alpha}\ell_i \asymp \log i.
			\end{align*} 
		\end{notes}
	\end{enumerate}
	
\end{proof}

Recall that $\delta_i = U_i + s_i$:
\begin{itemize}
		\item $\delta_i  = \delta_{i-1} - \eta_i \nabla f(X_{i-1}, \xi_i)$
		\item $U_i = (1 - \eta_i A) U_{i-1} + \eta_i \epsilon_i,  \quad  U_0 = \delta_0$
		\item $s_i = (1 - \eta_i A) s_{i-1} - \eta_i r_i, \quad  s_0 = 0$
	\end{itemize}
	where
	\begin{itemize}
		\item $\epsilon_i = \nabla F(X_{i-1}) - \nabla f(X_{i-1}, \xi_i)$ \quad 
		\item $r_i = \nabla F(X_{i-1}) - A\delta_{i-1}$
	\end{itemize}

\begin{lemma}[difference sequence] \label{lemma diff seq}
	\vphantom{1}
	\begin{enumerate}
		\item For all $i \in \N^+$ and $m = 1, 2, 4$, we have \label{lemma diff seq-1}
		\begin{equation}
			\norm{r_i}_m \lesssim i^{-\alpha}. 
		\end{equation}
		\item For all $n \in \N^+$ and $m = 1, 2, 4$, we have \label{lemma diff seq-3}
		\begin{equation}
			\norm{s_n}_m \lesssim n^{-\alpha}. 
		\end{equation}
	\end{enumerate}
\end{lemma}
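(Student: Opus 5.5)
For part (i), I will Taylor-expand $\nabla F$ around $x^*$. Since $\nabla F(x^*)=0$, we have
\[
r_i = \nabla F(X_{i-1}) - A\delta_{i-1} = \int_0^1 \bigl(\nabla^2 F(x^*+t\delta_{i-1}) - A\bigr)\,\dif t\;\delta_{i-1}.
\]
The natural target is a quadratic bound $|r_i| \lesssim |\delta_{i-1}|^2$. Combined with Lemma \ref{Lemma-C-0} at moment order $2m\le 8$, this gives $\norm{r_i}_m \le \norm{\delta_{i-1}}_{2m}^2 \lesssim i^{-\alpha}$ for $m=1,2,4$.

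The quadratic bound requires a local Lipschitz/smoothness condition on the Hessian near $x^*$, i.e.\ $|\nabla F(x)-A(x-x^*)|\le K|x-x^*|^2$ for $|x-x^*|\le c$. This is the standard condition implicitly used in this literature. Away from the neighbourhood, Lemma \ref{determinelip} gives $|r_i|\le (L+|A|)|\delta_{i-1}|$. Since $|\delta_{i-1}|>c$ on that event, this is also $\le (L+|A|)c^{-1}|\delta_{i-1}|^2$. So the pointwise bound $|r_i|\lesssim |\delta_{i-1}|^2$ holds globally, and the moment bound follows.

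This is the step I expect to be the main obstacle. Assumption \ref{as1} only states that $\nabla^2F(x^*)$ exists. If only differentiability at $x^*$ is available, I would first try to extract the quadratic control from that condition. Failing that, I would state explicitly the mild Hessian-Lipschitz-at-$x^*$ condition under which the bound holds.

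For part (ii), I will unroll the recursion with $s_0=0$:
\[
s_n = -\sum_{i=1}^n Y_{(A)}{}_i^n\,\eta_i r_i.
\]
By Minkowski's inequality, using $|Y_{(A)}{}_i^n|\le Y_{(\lambda)}{}_i^n$ with $\lambda=\lambda_{\min}(A)$ (valid for large $i$ since $\eta_i\to0$, with finitely many early factors absorbed into constants) and part (i), we get
\[
\norm{s_n}_m \le \sum_{i=1}^n Y_{(\lambda)}{}_i^n\,\eta\, i^{-\alpha}\, i^{-\alpha}.
\]
Here a martingale inequality is not available, since $r_i$ is not a martingale difference, but the crude triangle inequality suffices. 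By Lemma \ref{lemma Yij} (\ref{lemma Yij-2}) with $\beta=1$ and $\gamma=2$, the sum is $\asymp n^{-(2-1)\alpha}=n^{-\alpha}$, which is the claim.

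The only remaining point is the uniformity of constants across $m\in\{1,2,4\}$. By the Lyapunov inequality it suffices to treat $m=4$, which needs eighth moments of $\delta$. These are exactly what Lemma \ref{Lemma-C-0} provides under $q\ge 8$ in Assumption \ref{as3}.
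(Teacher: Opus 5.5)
Your proposal is correct and takes essentially the same route as the paper. The paper also uses a quadratic Taylor remainder $|r_i|\lesssim|\delta_{i-1}|^2$ together with $\|\delta_{i-1}\|_8^2\lesssim i^{-\alpha}$ from Lemma \ref{Lemma-C-0}. It then unrolls $s_n=-\sum_i Y_{(A)}{}_i^n\eta_i r_i$ and applies Minkowski and Lemma \ref{lemma Yij} (\ref{lemma Yij-2}) with $\beta=1,\gamma=2$.

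The paper writes the remainder as $O(|X_{i-1}-x^*|^2)$ without justification, even though Assumption \ref{as1} only gives an $o(|\delta_{i-1}|)$ remainder. So the local Hessian-Lipschitz condition you flag, and your globalization via Lemma \ref{determinelip}, make explicit exactly what the paper's proof tacitly assumes.
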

\begin{proof}
	\vphantom{1}
	\begin{enumerate}
		\item Recall that by Taylor's expansion 
		\begin{align*}
			\nabla F(X_{i-1}) & = \nabla F(x^*) + \nabla^2 F(x^*) (X_{i-1} - x^*) + O( \abs{X_{i-1} - x^*}^2 ) \\
			& =  A \delta_{i-1} + O(\delta_{i-1}^2),
		\end{align*}
		therefore
		 \[  \norm{r_i}_4 = \norm{\nabla F(X_{i-1}) - A \delta_{i-1}}_4
		\lesssim \|\delta_{i-1}^2\|_4
		= \|\delta_{i-1}\|_8^2 \lesssim i^{-\alpha}. \] 
		
		\item For simplicity, write $Y_i^j := Y_{(A)}{}_i^j$. Notice that $s_n$ can be written as
		\[ s_n = -\sum_{i=1}^n Y_i^n \eta_i r_i,  \]
		therefore
		\[ \norm{s_n}_4 \le \sum_{i=1}^n |Y_i^n| \eta_i \norm{r_i}_4
		\lesssim \sum_{i=1}^n |Y_i^n| i^{-2\alpha} \lesssim n^{-\alpha}. \]
	\end{enumerate}
	
\end{proof}

Recall that
	\begin{align*}
		\widehat{\Sigma}_{n, U} & = \frac1n\sum_{i=1}^n \left[ U_i^2 + 2 U_i \biggl(\sum_{k=t_i}^{i-1} U_k\biggr)  \right]. \\
		\widehat{\Sigma}_{n, \delta} & = \frac1n\sum_{i=1}^n \left[ \delta_i^2 + 2 \delta_i \biggl(\sum_{k=t_i}^{i-1} \delta_k\biggr)  \right]. \\
	\end{align*}

\begin{lemma}[moments for ASGD] For all $n \in \N^+$, we have \label{lemma ASGD}
	\begin{enumerate}
		\item $\E[\bar \delta_n] \lesssim  n^{-\alpha}$. \label{lemma ASGD-1}
		\item $\| \bar \delta_n \|_4 \lesssim n^{-1/2}$. \label{lemma ASGD-2}
		\item For all $i \in \N^+$, \label{lemma ASGD-3}
		\begin{equation}
			\norm{\sum_{k=t_i}^{i} \delta_k}_4 \lesssim t_i^{\alpha/2} + \sqrt{ \ell_i}  \asymp  \sqrt{i^\alpha \log i}.
		\end{equation}
	\end{enumerate}
\end{lemma}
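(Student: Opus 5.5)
We prove the three parts of Lemma \ref{lemma ASGD} from the decomposition $\delta_i = U_i + s_i$. Here $U_i$ is the linear (martingale-driven) sequence and $s_i$ is the Taylor remainder sequence. The bounds already available are Lemma \ref{lemma C iter}, Lemma \ref{lemma diff seq} and Lemma \ref{lemma Yij}.

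\emph{Part (i).} Since $\E[\epsilon_i]=0$, unrolling the recursion gives $\E U_i = Y_{(A)}{}_0^i\delta_0$, which decays like $\exp(-c\, i^{1-\alpha})$ and so is summable. Hence $\E\bar\delta_n = n^{-1}\sum_i \E U_i + n^{-1}\sum_i \E s_i$. By Lemma \ref{lemma diff seq} (\ref{lemma diff seq-3}), $|\E s_i|\le \|s_i\|_1\lesssim i^{-\alpha}$. Averaging gives $n^{-1}\sum_{i\le n} i^{-\alpha}\asymp n^{-\alpha}$, which proves the claim.

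\emph{Part (ii).} Write $\sum_{i=1}^n U_i = S_{(A)}{}_0^n\delta_0 + \sum_{p=1}^n (1+S_{(A)}{}_p^n)\eta_p\epsilon_p$, exactly as in the proof of Lemma \ref{lemma C iter} (\ref{lemma C iter-2}) with $t_i=1$. The first term is $O(1)$. Lemma \ref{lemma Yij} (\ref{lemma Yij-3}) gives $(1+S_p^n)\eta_p\lesssim 1$, and $\|\epsilon_p\|_4=O(1)$ by Lemma \ref{lemma C increm}. Burkholder's inequality (Lemma \ref{lemma Burkholder}) then gives $\|\sum_i U_i\|_4\lesssim \sqrt n$.

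The remainder part does not follow from the triangle inequality alone, because $\sum_i \|s_i\|_4\lesssim n^{1-\alpha}$. Dividing by $n$ gives $n^{-\alpha}$, which is $\le n^{-1/2}$ since $\alpha>1/2$. So the crude triangle bound actually suffices, and $\|\bar\delta_n\|_4\lesssim n^{-1/2}+n^{-\alpha}\lesssim n^{-1/2}$.

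\emph{Part (iii).} Split $\sum_{k=t_i}^i\delta_k = \sum_{k=t_i}^i U_k + \sum_{k=t_i}^i s_k$. The first sum is bounded by $t_i^{\alpha/2}+\sqrt{\ell_i}$ in Lemma \ref{lemma C iter} (\ref{lemma C iter-2}). For the second, the triangle inequality with Lemma \ref{lemma diff seq} and Lemma \ref{lemma batch sum} gives $\sum_{k=t_i}^i k^{-\alpha}\lesssim i^{-\alpha}\ell_i\asymp \log i$. This is dominated by $\sqrt{i^\alpha\log i}$. Since $t_i\le i$ and $\ell_i\asymp i^\alpha\log i$, the claimed bound $\sqrt{i^\alpha\log i}$ follows.

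The main obstacle is part (ii), specifically controlling the martingale weights $(1+S_p^n)\eta_p$ uniformly. This requires the bound $S_{(A)}{}_p^n\lesssim (p+1)^\alpha$ to hold with the matrix $A$ in place of a scalar. I would handle this by working in the eigenbasis of $A$ and applying the scalar bound with $\lambda=\lambda_{\min}(A)$. A related technical point is that Lemma \ref{lemma diff seq} requires the $L_8$ moments from Lemma \ref{Lemma-C-0}, which is why Assumption \ref{as3} is imposed with $q\ge 8$.
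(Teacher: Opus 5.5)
Your proposal is correct and follows the paper's proof essentially step for step. You use the split $\delta_i = U_i + s_i$ throughout. For (i), the mean of $U_i$ is exponentially small and $\|s_i\|_1\lesssim i^{-\alpha}$. For (ii), you apply Burkholder with weights $(1+S_{(A)}{}_p^n)\eta_p\lesssim 1$ and the crude triangle bound $n^{-1}\sum_i\|s_i\|_4\lesssim n^{-\alpha}\le n^{-1/2}$. For (iii), you combine the batch-sum bound for $U$ with the triangle bound $i^{-\alpha}\ell_i\asymp\log i$ for $s$. The only thing to fix is the sentence in (ii) claiming the remainder ``does not follow from the triangle inequality alone'': delete it, since your very next sentence correctly shows that it does.
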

\begin{proof}
	\vphantom{1}
	\begin{enumerate}
		\item By Lemma \ref{lemma diff seq} (\ref{lemma diff seq-3}), we have $\E[s_n] \le \norm{s_n}_1 \lesssim n^{-\alpha}$. Also recall that
		\[ \E[U_n] = \E\left[Y_{(A)}{}_0^n \delta_0 + \sum_{i=1}^n Y_{(A)}{}_i^n \eta_i \epsilon_i \right] = Y_{(A)}{}_0^n \delta_0  \lesssim \exp\left( -\eta A n^{1-\alpha} \right).  \]
		Therefore $\E[\delta_n] = \E[U_n] + \E[s_n] \lesssim n^{-\alpha}$, which implies
		\[ \E[\bar\delta_n] = \frac1n \sum_{i=1}^n \E[\delta_i] \lesssim n^{-\alpha}. \]

		\item Recall that
		\[ \bar U_n = \frac1n \bigg( S_{(A)}{}_0^n \delta_0 + \sum_{i=1}^n (1+S_{(A)}{}_i^n)\eta_i \epsilon_i \bigg), \]
		therefore by Lemma \ref{lemma Yij} (\ref{lemma Yij-3}), Lemma \ref{lemma Burkholder} (Burkholder's inequality), Lemma \ref{lemma C increm},
		\begin{align*}
			\|\bar U_n\|_4 & \le \frac1n \abs{S_{(A)}{}_0^n \delta_0}
			+ \frac1n \norm{\sum_{i=1}^n (1+S_{(A)}{}_i^n)\eta_i \epsilon_i}_4 \\
			& \lesssim \frac1n \abs{S_{(A)}{}_0^n} + \frac1n \sqrt{\sum_{i=1}^n \norm{(1+S_{(A)}{}_i^n)\eta_i \epsilon_i}_4^2} \\
			& \le \frac1n \abs{S_{(A)}{}_0^n} + \frac1n \sqrt{\sum_{i=1}^n (1+S_{(A)}{}_i^n)^2 \eta_i^2 \norm{\epsilon_i}_4^2 } \\
			& \lesssim \frac1n \cdot \mathcal{O}(1) + \frac1n \sqrt{\sum_{i=1}^n (i + 1)^{2\alpha} \cdot i^{-2\alpha} \cdot \mathcal{O}(1) } \\
			&\lesssim n^{-1} + n^{-1/2} \asymp n^{-1/2}.
		\end{align*}
		Also by Lemma \ref{lemma diff seq} (\ref{lemma diff seq-3}), we have
		\[ \norm{\bar s_n}_4 \le \frac1n \sum_{i=1}^n \norm{s_i}_4 \lesssim \frac1n \sum_{i=1}^n i^{-\alpha}
		\asymp n^{-\alpha}. \]
		Therefore $\|\bar \delta_n\|_4 \le \|\bar U_n\|_4 + \norm{\bar s_n}_4
		\lesssim n^{-1/2} + n^{-\alpha} \asymp n^{-1/2}$.

		\item By Lemma \ref{lemma batch sum}, Lemma \ref{lemma diff seq} (\ref{lemma diff seq-3}), we have
		\[ \norm{\sum_{k=t_i}^{i} s_k}_4 \le \sum_{k=t_i}^{i} \norm{s_k}_4
		\lesssim \sum_{k=t_i}^{i} k^{-\alpha} \lesssim i^{-\alpha} \ell_i \asymp \log i. \]
		In combination with Lemma \ref{lemma C iter} (\ref{lemma C iter-2}), we get
		\[ \norm{\sum_{k=t_i}^{i} \delta_k}_4
		\le \norm{\sum_{k=t_i}^{i} U_k}_4 + \norm{\sum_{k=t_i}^{i} s_k}_4 
		\lesssim  t_i^{\alpha/2} + \sqrt{\ell_i} + i^{-\alpha} \ell_i \asymp \sqrt{i^\alpha \log i}.  \]
	\end{enumerate}
\end{proof}

\section{PROOF OF AUXILIARY RESULTS}\label{auxiliary}

\begin{proof}[Proof of Lemma \ref{lemma Yij}]
\vphantom{1}
\begin{enumerate}
	\item Clearly it holds for $j = i$. If $j < i$, noticing that $-x \ge \log(1-x)\ge -x-x^2/2$ for all $x \in [0,1/2]$, and the set $\{k: \lambda\eta_k > 1/2\}$ is finite, then we have 
 
\begin{align*}
|Y_{(\lambda)}{}_j^i| & = \prod_{k = j+1}^i |1- \lambda\eta_k|
		\lesssim \exp\bigg( - \lambda \sum_{k=j+1}^i \eta_k \bigg), 
\end{align*}
and
\begin{align*}
\log|Y_{(\lambda)}{}_j^i| & = \sum_{k = j+1}^i \log|1- \lambda\eta_k|
		\ge - \lambda \sum_{k=j+1}^i \eta_k - \lambda^2 \sum_{k=j+1}^i \eta_k^2+ \mathcal{O}(1)
\end{align*}
which implies 
 \begin{align*}
|Y_{(\lambda)}{}_j^i| & = \exp\bigg(\sum_{k = j+1}^i \log|1- \lambda\eta_k|\bigg)
		\gtrsim \exp\bigg( - \lambda \sum_{k=j+1}^i \eta_k - \lambda^2 \sum_{k=j+1}^i \eta_k^2\bigg) \gtrsim \exp\bigg( - \lambda \sum_{k=j+1}^i \eta_k \bigg)
\end{align*}
since $\sum_{k=j+1}^i \eta_k^2$ is uniformly bounded for all $i$ and $j$. As a result,
	\begin{align*}
		|Y_{(\lambda)}{}_j^i| & = \prod_{k = j+1}^i |1- \lambda\eta_k|
		\asymp \exp\bigg( - \lambda \sum_{k=j+1}^i \eta_k \bigg) \\
		 & = \exp\bigg( - \lambda\eta\sum_{k=j+1}^i k^{-\alpha} \bigg)  
		 \asymp \exp\left( - \lambda\eta \int_{j+1}^{i} u^{-\alpha} \dif u \right) \\
		 & = \exp\left\{ - \frac{\lambda\eta}{1 - \alpha} \left(i^{1-\alpha} - (j+1)^{1-\alpha} \right)  \right\} \\
		 & = \exp\left\{ \frac{\lambda\eta}{1 - \alpha} \left(j^{1-\alpha} - i^{1-\alpha}\right) + \mathcal{O}(1) \right\}
		  \\
		& \asymp \exp\left\{ \frac{\lambda\eta}{1 - \alpha} \left(j^{1-\alpha} - i^{1-\alpha}\right) \right\}.
	\end{align*}
	Further, by Taylor series,
	\begin{align*}
		& \exp\left\{ \frac{\lambda\eta}{1 - \alpha} (j^{1-\alpha} - i^{1-\alpha}) \right\} \nonumber \\
		= {} & \exp\left\{ -\lambda\eta i^{-\alpha}(i-j) - \frac12 \lambda\eta\alpha u^{-\alpha - 1}(i-j)^2 \right\}
		\tag{for some $u \in (j, i)$} \nonumber \\
		\le {} & \exp\left\{ -\lambda\eta i^{-\alpha}(i-j) \right\}.
	\end{align*}
 The inequality becomes $\asymp$ if $Ci^{\alpha}\log i\ge i-j$ since for sufficiently large $i$, $Ci^{\alpha}\log i \le i/2$, and in this case 
    $$u^{-\alpha-1}(i-j)^2\leq C^2i^{2\alpha}(\log i)^2 (i-Ci^{\alpha}\log i)^{-\alpha-1}\lesssim 2^{\alpha+1}C^2i^{\alpha-1}(\log i)^2 \rightarrow 0.$$
	\item Since $\exp\left(\beta j^{1-\alpha}\right) j^{-\gamma\alpha}$ is ultimately increasing in $j$, we have
	\begin{align}
		\sum_{j=1}^i \exp\left(\beta j^{1-\alpha}\right) j^{-\gamma\alpha}
		& \asymp \int_{1}^{i+1} \exp\left(\beta t^{1-\alpha}\right) t^{-\gamma\alpha} \dif t
		\nonumber \\
		& \asymp \int_{\beta}^{\beta (i+1)^{1-\alpha}} \e^u u^{-\frac{(\gamma - 1)\alpha}{1-\alpha}} \dif u.  \label{lemma Yij-int}
	\end{align}
	Using integration by parts, we have the following:
	\begin{align*}
		&\int_{\beta}^{\beta(i+1)^{1-\alpha}} \e^u u^{-\frac{(\gamma - 1)\alpha}{1-\alpha}} \dif u \\
		& \asymp  \e^u u^{-\frac{(\gamma - 1)\alpha}{1-\alpha}} \Big|_{\beta}^{\beta(i+1)^{1-\alpha}}
		+ \int_{\beta}^{\beta(i+1)^{1-\alpha}} \e^u u^{-\frac{(\gamma - 1)\alpha}{1-\alpha} - 1} \dif u \\
		& \asymp \exp\left\{ \beta(i+1)^{1-\alpha} \right\} (i+1)^{-(\gamma - 1)\alpha}+ (\beta(i+1)^{1-\alpha}-\beta)\exp\left\{ \beta(i+1)^{1-\alpha} \right\} (i+1)^{-(\gamma - 1)\alpha-(1-\alpha)}\\
		& \asymp \exp\left( \beta i^{1-\alpha} \right) i^{-(\gamma - 1)\alpha}.
	\end{align*}
	Therefore by (\ref{lemma Yij-1}), we have
	\begin{align*}
		\sum_{j=1}^i |Y_{(\lambda)}{}_j^i|^\beta |j^{-\alpha}|^\gamma 
		& \asymp \sum_{j=1}^i \exp\left\{\frac{\lambda\eta \beta}{1-\alpha} \left(j^{1-\alpha} - i^{1-\alpha} \right)  \right\} j^{-\gamma\alpha} \\
		& = \exp\left( - \frac{\lambda\eta \beta}{1-\alpha} i^{1-\alpha} \right)  \sum_{j=1}^i \exp\left(\frac{\lambda\eta \beta}{1-\alpha} j^{1-\alpha}  \right) j^{-\gamma\alpha} \\
		& \asymp  i^{-(\gamma - 1)\alpha}.
	\end{align*}
	
	\item (See Lemma A.2. in \citep{zhu2023online})
\end{enumerate}

\end{proof}

\begin{proof}[Proof of Lemma \ref{lemma batch sum}]
	Recall that $\ell_i = i - t_i + 1 \asymp i^\alpha \log i$, then by Taylor series,
	\begin{align*}
		\sum_{j=t_i}^i j^{-\gamma} 
		& \asymp i^{1-\gamma} - t_i^{1-\gamma} \\
		& = (1-\gamma) i^{-\gamma} (\ell_i - 1) + \frac12\gamma(1-\gamma)u^{-\gamma-1} (\ell_i - 1)^2  \tag{$u \in (t_i, i)$} \\
		& \lesssim i^{-\gamma} \ell_i + (i - i^\alpha \log i)^{-\gamma - 1} \ell_i^2 \\
		& \asymp  i^{-\gamma} \ell_i + i^{-\gamma-1} 
		\ell_i \cdot i^\alpha \log i \\
		& \asymp i^{-\gamma}\ell_i \asymp i^{\alpha-\gamma}\log i. 
		\tag{since $\alpha < 1$}
	\end{align*}

\end{proof}

\section{ADDITIONAL EXPERIMENT RESULTS }\label{secexp}

\begin{figure}[ht]
		\centering \includegraphics[width=0.3\textwidth]{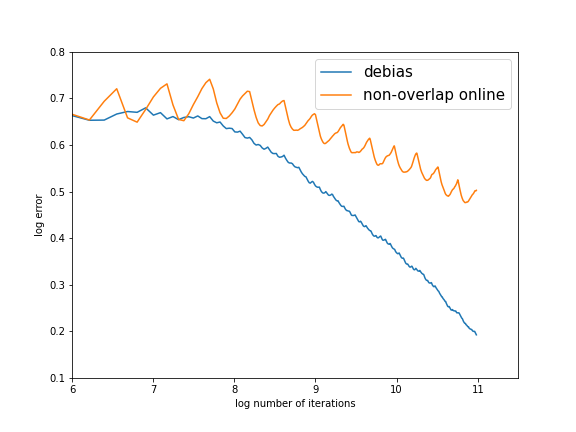} \includegraphics[width=0.3\textwidth]{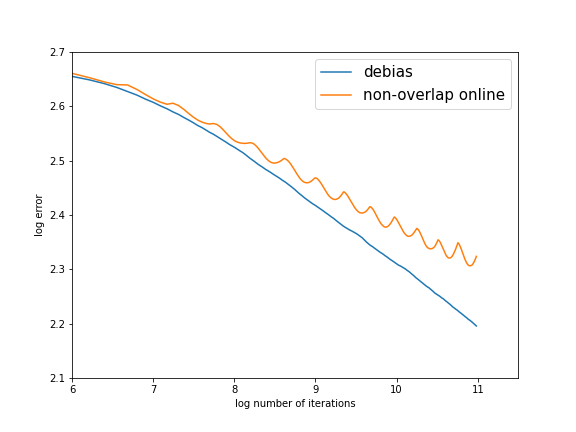}
\includegraphics[width=0.3\textwidth]{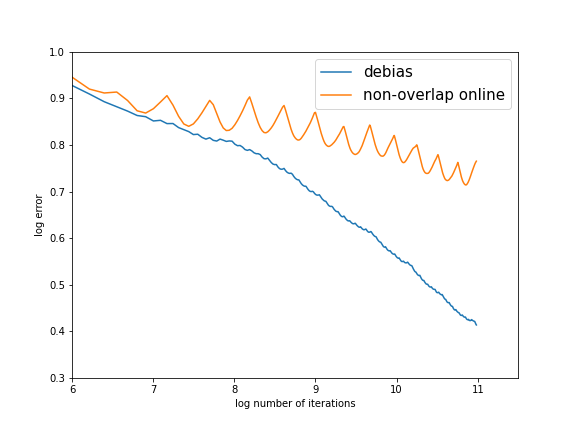} 
		\caption{Log-log Plots for Estimation Error in Frobenius Norm ($d=5$). Left: Linear Regression. Middle: Logistic Regression with. Right: Expectile Regression.  }
		\label{fig:MSE} 
	\end{figure}

We conducted the experiments in Python version 3.8.8 (2021-02-19) on a MacBook Air with a GPU Apple M1, 4 performance and 4 efficiency cores, and 8 GB LPDDR4 memory, equipped with macOS Big Sur version 11.5.1.

We present additional Log-log plots across different models and dimensional settings in Figure \ref{fig:MSE}. They clearly indicate that the de-biased estimator outperforms the non-overlap online method, achieving both sharper convergence rates (reflected in the slope) and lower estimation errors.

\begin{table*}[h]
\centering
\caption{Comparison of Empirical Coverage (Nominal Level $95\%$) across Different Models and Dimensions ($d=5, 20, 50$).}
\label{tb:empirical_coverage}
\begin{tabular}{l|l|c|c|c|}
  &   & $d=5$ & $d=20$ & $d=50$ \\
\hline
\textbf{Linear}
    & De-biased    & 0.9236 & 0.9321 & 0.9390 \\
    & Online BM    & 0.8796 & 0.8946 & 0.9127 \\
\hline
\textbf{Logistic}
    & De-biased    & 0.8872 & 0.8514 & 0.8517 \\
    & Online BM    & 0.8536 & 0.8178 & 0.8305 \\
\hline
\textbf{Expectile}
    & De-biased    & 0.9033 & 0.9127 & 0.9203 \\
    & Online BM    & 0.8580 & 0.8809 & 0.8949 \\
\end{tabular}
\end{table*}

In Table \ref{tb:empirical_coverage}, we compare the empirical coverage rates of confidence intervals constructed using de-biased and Online BM estimators averaged over all coordinates of $x^*$. The nominal coverage level is $95\%$. For linear and expectile regression, results are recorded at $n=60000$ for $d=5$, $n = 200000$ for $d = 20$, and $n = 500000$ for $d = 50$. For logistic regression, due to its highly non-linear and non-strongly convex nature, we set the batch size constant $C=4$ and record the coverage at $n=20000$ for $d=5$ to ensure convergence, while keeping the sample sizes for $d=20$ and $d=50$ the same as above. As shown in Table \ref{tb:empirical_coverage}, the de-biased estimator surpasses the Online BM method by achieving empirical coverage rates closer to the nominal $95\%$ level across all scenarios, making it a preferable option for practitioners conducting statistical inference on model parameters.

\end{document}